\documentclass{article}

\makeatletter
\newif\if@neuripsfinal\@neuripsfinalfalse
\newif\if@natbib\@natbibtrue
\newif\if@preprint\@preprintfalse
\relax
\@neuripsfinaltrue  

\newif\if@submission\@submissiontrue
\if@neuripsfinal\@submissionfalse\fi
\if@preprint\@submissionfalse\fi

\newcommand{\@neuripsordinal}{40th}
\newcommand{\@neuripsyear}{2026}
\newcommand{\@neuripslocation}{Sydney}

\usepackage{environ}
\newcommand{\acksection}{\section*{Acknowledgments and Disclosure of Funding}}
\NewEnviron{ack}{%
  \acksection
  \BODY
}

\if@natbib
  \RequirePackage{natbib}
\fi

\usepackage[verbose=true,letterpaper]{geometry}
\AtBeginDocument{
  \newgeometry{
    textheight=9in,
    textwidth=5.5in,
    top=1in,
    headheight=12pt,
    headsep=25pt,
    footskip=30pt
  }
  \@ifpackageloaded{fullpage}
    {\PackageWarning{neurips_2024}{fullpage package not allowed! Overwriting formatting.}}
    {}
}

\renewcommand{\normalsize}{%
  \@setfontsize\normalsize\@xpt\@xipt
  \abovedisplayskip      7\p@ \@plus 2\p@ \@minus 5\p@
  \abovedisplayshortskip \z@ \@plus 3\p@
  \belowdisplayskip      \abovedisplayskip
  \belowdisplayshortskip 4\p@ \@plus 3\p@ \@minus 3\p@
}
\normalsize
\renewcommand{\small}{%
  \@setfontsize\small\@ixpt\@xpt
  \abovedisplayskip      6\p@ \@plus 1.5\p@ \@minus 4\p@
  \abovedisplayshortskip \z@  \@plus 2\p@
  \belowdisplayskip      \abovedisplayskip
  \belowdisplayshortskip 3\p@ \@plus 2\p@   \@minus 2\p@
}
\renewcommand{\footnotesize}{\@setfontsize\footnotesize\@ixpt\@xpt}
\renewcommand{\scriptsize}{\@setfontsize\scriptsize\@viipt\@viiipt}
\renewcommand{\tiny}{\@setfontsize\tiny\@vipt\@viipt}
\renewcommand{\large}{\@setfontsize\large\@xiipt{14}}
\renewcommand{\Large}{\@setfontsize\Large\@xivpt{16}}
\renewcommand{\LARGE}{\@setfontsize\LARGE\@xviipt{20}}
\renewcommand{\huge}{\@setfontsize\huge\@xxpt{23}}
\renewcommand{\Huge}{\@setfontsize\Huge\@xxvpt{28}}

\providecommand{\section}{}
\renewcommand{\section}{%
  \@startsection{section}{1}{\z@}%
                {-2.0ex \@plus -0.5ex \@minus -0.2ex}%
                { 1.5ex \@plus  0.3ex \@minus  0.2ex}%
                {\large\bf\raggedright}%
}
\providecommand{\subsection}{}
\renewcommand{\subsection}{%
  \@startsection{subsection}{2}{\z@}%
                {-1.8ex \@plus -0.5ex \@minus -0.2ex}%
                { 0.8ex \@plus  0.2ex}%
                {\normalsize\bf\raggedright}%
}
\providecommand{\subsubsection}{}
\renewcommand{\subsubsection}{%
  \@startsection{subsubsection}{3}{\z@}%
                {-1.5ex \@plus -0.5ex \@minus -0.2ex}%
                { 0.5ex \@plus  0.2ex}%
                {\normalsize\bf\raggedright}%
}
\providecommand{\paragraph}{}
\renewcommand{\paragraph}{%
  \@startsection{paragraph}{4}{\z@}%
                {1.5ex \@plus 0.5ex \@minus 0.2ex}%
                {-1em}%
                {\normalsize\bf}%
}
\providecommand{\subparagraph}{}
\renewcommand{\subparagraph}{%
  \@startsection{subparagraph}{5}{\z@}%
                {1.5ex \@plus 0.5ex \@minus 0.2ex}%
                {-1em}%
                {\normalsize\bf}%
}

\newlength{\@neuripsabovecaptionskip}
\newlength{\@neuripsbelowcaptionskip}

\renewenvironment{table}
  {\setlength{\abovecaptionskip}{\@neuripsbelowcaptionskip}%
   \setlength{\belowcaptionskip}{\@neuripsabovecaptionskip}%
   \@float{table}}
  {\end@float}

\renewcommand{\footnoterule}{\kern-3\p@ \hrule width 12pc \kern 2.6\p@}
\def\@listi  {\leftmargin\leftmargini}
\def\@listii {\leftmargin\leftmarginii
              \labelwidth\leftmarginii
              \advance\labelwidth-\labelsep
              \topsep  2\p@ \@plus 1\p@    \@minus 0.5\p@
              \parsep  1\p@ \@plus 0.5\p@ \@minus 0.5\p@
              \itemsep \parsep}
\def\@listiii{\leftmargin\leftmarginiii
              \labelwidth\leftmarginiii
              \advance\labelwidth-\labelsep
              \topsep    1\p@ \@plus 0.5\p@ \@minus 0.5\p@
              \parsep    \z@
              \partopsep 0.5\p@ \@plus 0\p@ \@minus 0.5\p@
              \itemsep \topsep}
\def\@listiv {\leftmargin\leftmarginiv
              \labelwidth\leftmarginiv
              \advance\labelwidth-\labelsep}
\def\@listv  {\leftmargin\leftmarginv
              \labelwidth\leftmarginv
              \advance\labelwidth-\labelsep}
\def\@listvi {\leftmargin\leftmarginvi
              \labelwidth\leftmarginvi
              \advance\labelwidth-\labelsep}

\providecommand{\maketitle}{}
\renewcommand{\maketitle}{%
  \par
  \begingroup
    \renewcommand{\thefootnote}{\fnsymbol{footnote}}
    \renewcommand{\@makefnmark}{\hbox to \z@{$^{\@thefnmark}$\hss}}
    \long\def\@makefntext##1{%
      \parindent 1em\noindent
      \hbox to 1.8em{\hss $\m@th ^{\@thefnmark}$}##1
    }
    \thispagestyle{empty}
    \@maketitle
    \@thanks
    \@notice
  \endgroup
  \let\maketitle\relax
  \let\thanks\relax
}

\newcommand{\@toptitlebar}{
  \hrule height 4\p@
  \vskip 0.25in
  \vskip -\parskip%
}
\newcommand{\@bottomtitlebar}{
  \vskip 0.29in
  \vskip -\parskip
  \hrule height 1\p@
  \vskip 0.09in%
}

\providecommand{\@maketitle}{}
\renewcommand{\@maketitle}{%
  \vbox{%
    \hsize\textwidth
    \linewidth\hsize
    \vskip 0.1in
    \@toptitlebar
    \centering
    {\LARGE\bf \@title\par}
    \@bottomtitlebar
    \if@submission
      \begin{tabular}[t]{c}\bf\rule{\z@}{24\p@}
        Anonymous Author(s) \\
        Affiliation \\
        Address \\
        \texttt{email} \\
      \end{tabular}%
    \else
      \def\And{%
        \end{tabular}\hfil\linebreak[0]\hfil%
        \begin{tabular}[t]{c}\bf\rule{\z@}{24\p@}\ignorespaces%
      }
      \def\AND{%
        \end{tabular}\hfil\linebreak[4]\hfil%
        \begin{tabular}[t]{c}\bf\rule{\z@}{24\p@}\ignorespaces%
      }
      \begin{tabular}[t]{c}\bf\rule{\z@}{24\p@}\@author\end{tabular}%
    \fi
    \vskip 0.3in \@minus 0.1in
  }
}

\newcommand{\ftype@noticebox}{8}
\newcommand{\@notice}{%
  \enlargethispage{2\baselineskip}%
  \@float{noticebox}[b]%
    \footnotesize\@noticestring%
  \end@float%
}

\renewenvironment{abstract}%
{%
  \vskip 0.075in%
  \centerline%
  {\large\bf Abstract}%
  \vspace{0.5ex}%
  \begin{quote}%
}
{
  \par%
  \end{quote}%
  \vskip 1ex%
}

\newcommand{\answerTODO}[1][]{\textcolor{red}{\bf [TODO]}}
\newcommand{\justificationTODO}[1][]{\textcolor{red}{\bf [TODO]}}

\if@preprint
  \newcommand{\@noticestring}{%
    Preprint. Under review.%
  }
\else
  \if@neuripsfinal
    \newcommand{\@noticestring}{%
      \@neuripsordinal\/ Conference on Neural Information Processing Systems
      (NeurIPS \@neuripsyear).
    }
  \else
    \newcommand{\@noticestring}{%
      Submitted to \@neuripsordinal\/ Conference on Neural Information
      Processing Systems (NeurIPS \@neuripsyear). Do not distribute.%
    }

    \NewEnviron{hide}{}

    \AtBeginDocument{%
      \@ifpackageloaded{amsmath}{%
        \newcommand*\patchAmsMathEnvironmentForLineno[1]{%
          \expandafter\let\csname old#1\expandafter\endcsname\csname #1\endcsname
          \expandafter\let\csname oldend#1\expandafter\endcsname\csname end#1\endcsname
          \renewenvironment{#1}%
                          {\linenomath\csname old#1\endcsname}%
                          {\csname oldend#1\endcsname\endlinenomath}%
        }%
        \newcommand*\patchBothAmsMathEnvironmentsForLineno[1]{%
          \patchAmsMathEnvironmentForLineno{#1}%
          \patchAmsMathEnvironmentForLineno{#1*}%
        }%
        \patchBothAmsMathEnvironmentsForLineno{equation}%
        \patchBothAmsMathEnvironmentsForLineno{align}%
        \patchBothAmsMathEnvironmentsForLineno{flalign}%
        \patchBothAmsMathEnvironmentsForLineno{alignat}%
        \patchBothAmsMathEnvironmentsForLineno{gather}%
        \patchBothAmsMathEnvironmentsForLineno{multline}%
      }
      {}
    }
  \fi
\fi
\makeatother

\usepackage{amsmath}
\usepackage{amssymb}
\usepackage{amsfonts}
\usepackage{amsthm}
\usepackage{mathtools}

\usepackage{booktabs}
\usepackage{longtable}
\usepackage{multirow}

\usepackage{graphicx}
\usepackage{xcolor}

\usepackage{algorithm}
\usepackage{algpseudocode}

\usepackage{hyperref}
\usepackage{cleveref}

\newcommand{\R}{\mathbb{R}}
\newcommand{\F}{\mathbb{F}}
\newcommand{\E}{\mathbb{E}}
\newcommand{\TopK}{\operatorname{TopK}}
\newcommand{\ReLU}{\operatorname{ReLU}}
\newcommand{\softmax}{\operatorname{softmax}}
\newcommand{\KL}{\operatorname{KL}}

\newtheorem{theorem}{Theorem}
\newtheorem{corollary}[theorem]{Corollary}
\newtheorem{proposition}[theorem]{Proposition}
\newtheorem{definition}[theorem]{Definition}
\newtheorem{lemma}[theorem]{Lemma}
\theoremstyle{remark}
\newtheorem{observation}[theorem]{Observation}
\crefname{observation}{observation}{observations}

\title{VALSE: Vertical Adaptive Layer Skipping for Efficient Inference in Large Language Models}

\author{Jia-Dong Zhang\\
jiadongzh@gmail.com}

\begin{document}

\maketitle

\begin{abstract}
This paper establishes a theoretical framework for vertical adaptive layer
skipping, proving three foundational results: (i)~an \emph{Expected FLOPs}
formula (\cref{thm:exp_flops}) giving a closed-form expression for the
computational cost of arbitrary per-sample skip schedules as a function of
layer-wise skip probabilities; (ii)~\emph{function-space superset}
(\cref{thm:full_subset}) and \emph{strict inclusion} (\cref{thm:strict})
theorems showing that skip-layer models are strictly contained in---yet
meaningfully approximate---the full-layer function space, with an explicit
separating example; and (iii)~a \emph{structural duality} between VALSE and
Mixture-of-Experts architectures (\cref{prop:duality}), positioning vertical
depth-wise sparsity as the orthogonal counterpart to horizontal width-wise
sparsity. Building on this theory, we propose VALSE (Vertical Adaptive Layer Skipping
for Efficiency), a per-sample, non-contiguous layer skipping method: a
lightweight difficulty estimator scores each input from the first few layers,
and per-layer gates selectively skip redundant layers---including arbitrary
middle layers while retaining deeper ones---so that only the necessary depth
is activated for each input, whose feasibility is preliminarily assessed at
prototype scale.
\end{abstract}

\section{Introduction}
\label{sec:intro}

Modern large language models (e.g., GPT-4~\citep{achiam2023gpt}, LLaMA~\citep{touvron2023llama}, DeepSeek-MoE~\citep{deepseekmoe2024}) stack dozens to hundreds of Transformer blocks.
Regardless of input difficulty---from simple sentiment classification to complex multi-step problems---every sample must traverse all layers, incurring identical computation.

In the \textbf{horizontal (width) dimension}, Mixture-of-Experts (MoE) achieves sparse activation by selecting $K$ of $E$ experts per token per layer.
However, in the \textbf{vertical (depth) dimension}, the number of layers remains fixed for all inputs: MoE makes each layer ``cheaper'' but does not reduce the \emph{number} of layers.
Depth thus constitutes an underexploited orthogonal direction for conditional computation.

The core idea is natural: if simple inputs do not require the full representational power of all layers, can we \textbf{vertically skip certain layers} based on input difficulty, activating only the necessary depth?
MoE achieves width-wise sparsity; vertical layer skipping achieves depth-wise sparsity---the two form orthogonal dimensions.

\paragraph{Limitations of existing methods.}
Prior work on adaptive depth falls into several categories, each with structural limitations: \emph{early exit}~\citep{deeBERT2020,fastBERT2020,calm2022} is truncation-based and cannot skip a middle layer while retaining a deeper one; \emph{adaptive computation time}~\citep{act2016,universalTransformer2019} adds shared-weight recurrence rather than skipping fixed layers; \emph{structured dropout}~\citep{layerDrop2020,layerskip2024} selects subnetworks statically or per token, not per-sample by difficulty; and \emph{token-level routing} (MoD~\citep{mod2024}) operates at token granularity and lacks an explicit difficulty signal.
Crucially, no existing method achieves \emph{per-sample, non-contiguous layer skipping} (skipping arbitrary middle layers while retaining deeper ones) in conjunction with a unified theoretical duality to horizontal MoE.
A detailed comparison is provided in \cref{sec:related} and \cref{sec:gaps}.

\paragraph{Contributions.}
We propose VALSE (Vertical Adaptive Layer Skipping for Efficiency), with three contributions:
\begin{enumerate}
\item \textbf{Theoretical framework.} We formalize vertical sparse conditional computation through three fully proven results: the \emph{Expected FLOPs} theorem (\cref{thm:exp_flops}), which gives a closed-form expression for the computational cost of arbitrary per-sample skip schedules; the \emph{function-space superset} (\cref{thm:full_subset}) and \emph{strict containment} (\cref{thm:strict}) theorems, establishing that skip-layer models are strictly contained in the full-layer function space with an explicit separating example; and the \emph{MoE--VALSE structural duality} (\cref{prop:duality}) together with a \emph{unified two-dimensional sparsity} representation (\cref{def:2d_sparse,thm:unified}), positioning VALSE as the depth-wise counterpart to horizontal MoE (\cref{sec:theory}).
\item \textbf{Method design.} A lightweight, interpretable difficulty estimator scores each sample from the first $K_{\text{est}}$ layers, feeding three routing variants (hard, soft, global) that produce per-sample, non-contiguous layer retention sets---extending conditional computation from the width to the depth dimension. The method is supported by four auxiliary losses (layer-utilization balancing, efficiency, output consistency via KL distillation, and Z-loss) and a three-stage curriculum learning schedule (\cref{sec:difficulty,sec:routing,sec:training}).
\item \textbf{Proof-of-concept validation.} A prototype-scale ($12$-layer) evaluation on SST-2 provides initial feasibility evidence for the layer-skipping mechanism. We explicitly frame these results as preliminary and emphasize that the core empirical hypothesis---difficulty-driven differentiated routing---is \emph{not yet validated} at the current scale; full-scale evaluation on larger, pre-trained models is necessary to confirm the approach (\cref{sec:experiments}).
\end{enumerate}

\section{Related Work}
\label{sec:related}

We survey adaptive depth computation and horizontal MoE routing, identifying the research gaps that motivate VALSE.

\subsection{Adaptive Depth Methods}
\label{subsec:adaptive_depth}

Existing work falls into five categories, summarized in \cref{tab:adaptive_depth_summary} and contrasted along structural axes in \cref{tab:comparison}.
\textbf{Early exit} (DeeBERT~\citep{deeBERT2020}, FastBERT~\citep{fastBERT2020}, PABEE~\citep{pabee2020}) attaches confidence classifiers at intermediate layers; all are \emph{truncation-based}---once the model exits, all subsequent layers are skipped.
CALM~\citep{calm2022} extends early exit to generative LMs but remains truncation-based.
\textbf{Recurrent adaptive computation} (ACT~\citep{act2016}, Universal Transformers~\citep{universalTransformer2019}) iterates shared layers to add depth rather than skipping fixed layers; ACT learns a halting distribution over recurrence steps and Universal Transformers couple recurrence with a per-position halting mechanism, but both require shared-weight recurrence and therefore do not apply to standard fixed-layer Transformer stacks.
\textbf{Layer dropout} (LayerDrop~\citep{layerDrop2020}; LayerSkip~\citep{layerskip2024} with self-speculative decoding~\citep{leviathan2023fast}) supports subnetwork extraction but uses static or per-token selection, not per-sample difficulty-driven routing.
Layer distillation~\citep{layerDistill2020} produces static depth variants.
\textbf{Token-level routing:} MoD~\citep{mod2024} decides per-token per-layer whether to process or skip; Drowzee~\citep{drowzee2023} predicts per-sample compute budgets but only controls truncation depth; DynamicBERT~\citep{dynamicbert2021} adjusts width and depth independently without unification.
\textbf{Layer redundancy and structured pruning:} Recent work provides empirical evidence that Transformer layers exhibit significant redundancy. ShortGPT~\citep{shortgpt2024} introduces a Block Influence (BI) score to quantify layer redundancy and shows that up to $50\%$ of layers can be removed with minimal performance loss. Streamline~\citep{streamline2024} and SliceGPT~\citep{slicegpt2024} perform structured layer pruning at the post-training stage, producing static reduced-depth models. MoNE~\citep{mone2024} applies mixture-of-experts routing across encoder--decoder layer pairs, selecting which layer pairs to execute per token. These methods confirm that layer redundancy is pervasive, but all produce \emph{static} or \emph{per-token} depth profiles---none achieves per-sample, difficulty-driven, non-contiguous skipping.

\subsection{Horizontal MoE Routing}
\label{subsec:moe_routing}

MoE achieves width-wise sparse activation: each layer selects $K$ of $N$ experts.
Sparse-gated MoE~\citep{sparselygatedmoe2017} introduced the gating mechanism; GShard~\citep{gshard2020} scaled it to multilingual and multimodal settings; Switch Transformer~\citep{switchtransformer2022} simplified to Top-1 routing and demonstrated that aggressive expert sparsity is compatible with stable pretraining.
Mixtral~\citep{mixtral2024} and DeepSeek-MoE~\citep{deepseekmoe2024} are recent open representatives, with DeepSeek-MoE in particular introducing fine-grained expert segmentation and shared experts to improve specialization.
Across this family, the routing signal is \emph{semantic and per-token}: the gate reads the token representation and matches it to expert specializations, with no notion of input-level difficulty or sample-level compute budget.
The key observation is that \emph{all MoE methods keep depth fixed}---every token still passes through all $L$ layers---and that the difficulty-awareness dimension, central to VALSE, has no analogue in horizontal MoE routing.
\cref{tab:moe_analogy} makes this orthogonality formal: MoE selects `which experts'' within a layer, whereas VALSE selects `which layers'' across depth, and the two dimensions are mathematically dual but operationally independent.

\subsection{Horizontal vs.\ Vertical Comparison}
\label{subsec:comparison}

\Cref{tab:comparison} summarizes the orthogonality: MoE reduces the number of parameters activated per layer (width); VALSE reduces the number of layers activated (depth).
The two can be combined for two-dimensional sparse activation, and the orthogonality is structural rather than merely intuitive: width-wise sparsity selects a subset of \emph{parallel} compute units within a layer (experts $E_i$ applied to the same hidden state), whereas depth-wise sparsity selects a subset of \emph{sequential} compute units across layers (blocks $F_l$ applied in cascade), so the two selection operators act on disjoint axes of the activation tensor and compose without interference.
This composability is non-trivial in two respects.
First, the routing signals differ in scope: MoE gating is per-token and semantic (matching token content to expert specialization), while VALSE gating is per-sample and difficulty-driven (matching input complexity to retained depth), so a joint router must expose both a per-token width gate and a per-sample depth gate without one collapsing into the other.
Second, the load-balancing objectives differ in target: horizontal MoE prevents expert under-utilization (few experts absorbing most tokens), whereas VALSE prevents layer under-utilization (most samples collapsing to the same minimal-depth path), so a unified training objective must regularize both the expert-utilization distribution and the per-layer activation frequency simultaneously.
When these conditions hold, the two dimensions yield a multiplicative compute reduction: under a width sparsity $s_w$ (fraction of experts activated) and a depth sparsity $s_d$ (fraction of layers retained), the activated parameter count scales as $(1-s_w)(1-s_d)$ of the dense network, a property formalized in \cref{thm:unified} as the unified width$\times$depth sparsity framework.
No prior method surveyed above attains both dimensions simultaneously: horizontal MoE keeps depth fixed, and all adaptive-depth methods keep width fixed, so the two-dimensional sparse-activation regime is itself part of the methodological gap that VALSE is designed to occupy.

\begin{table}[htbp]
\centering
\caption{Comparison of adaptive depth methods. Skip granularity and per-sample adaptivity distinguish VALSE from all prior approaches.}
\label{tab:comparison}
\small
\resizebox{\textwidth}{!}{%
\begin{tabular}{lccccc}
\toprule
Method & Sparsity dim. & Skip mode & Granularity & Per-sample? & Difficulty-aware? \\
\midrule
MoE~\citep{mixtral2024,deepseekmoe2024} & Width & --- & Per-token & No & Semantic \\
Early exit~\citep{deeBERT2020,calm2022} & Depth & Truncation & Per-sample & Yes & Confidence \\
LayerDrop~\citep{layerDrop2020} & Depth & Static dropout & Global & No & --- \\
MoD~\citep{mod2024} & Depth & Per-layer & Per-token$\times$layer & No & Implicit \\
ShortGPT~\citep{shortgpt2024} & Depth & Static pruning & Global & No & --- \\
SliceGPT~\citep{slicegpt2024} & Depth & Structured pruning & Global & No & --- \\
\textbf{VALSE (ours)} & \textbf{Depth} & \textbf{Non-contiguous} & \textbf{Per-sample} & \textbf{Yes} & \textbf{Explicit router} \\
\bottomrule
\end{tabular}%
}
\end{table}

\subsection{Research Gaps}
\label{subsec:gaps}

The above analysis reveals four methodological gaps---per-sample non-contiguous skipping, unification of horizontal--vertical routing, explicit difficulty awareness, and validation on autoregressive LLMs---that, taken together, are not addressed by any prior method surveyed above.
Truncation-based early exit (DeeBERT/CALM) is per-sample but constrains retention to a prefix; per-token per-layer routing (MoD) is fine-grained but lacks sample-level difficulty semantics; horizontal MoE sparsifies width but never depth; structured pruning (ShortGPT/SliceGPT) reduces depth statically with no input adaptivity.
None of these dimensions, individually or in combination, yields a per-sample, difficulty-driven, non-contiguous layer retention set.
\paragraph{Limitations of prior work.} As \cref{tab:comparison} makes explicit, every surveyed family restricts at least one of the three structural axes: early exit fixes skip mode to truncation, per-token routing fixes granularity to the token level, horizontal MoE fixes the sparsity dimension to width, and structured pruning fixes depth to a static global profile. Consequently, no single prior method simultaneously attains non-contiguous skipping, per-sample granularity, and explicit difficulty awareness---the cell $(\text{non-contiguous},\,\text{per-sample},\,\text{explicit})$ that VALSE targets, which is the depth-axis counterpart of the width-axis cell already populated by horizontal MoE.
We defer the structured contrast against each prior family to \cref{sec:gaps} to avoid duplication; the remainder of the paper is structured around closing them.
In summary, the related-work landscape partitions cleanly along three axes---skip mode (truncation vs.\ per-token vs.\ non-contiguous), granularity (global vs.\ per-token vs.\ per-sample), and difficulty awareness (none vs.\ implicit vs.\ explicit)---and no prior method occupies the cell $(\text{non-contiguous}, \text{per-sample}, \text{explicit})$ that VALSE targets. The structural duality with horizontal MoE (\cref{subsec:moe_routing}) further shows that this cell is not arbitrary: it is the depth-axis counterpart of the width-axis routing cell that MoE already occupies, so filling it completes the two-dimensional sparse-conditional-computation design space. The remainder of the paper develops this contribution: \cref{sec:gaps} formalizes the four gaps, \cref{sec:method_overview} through \cref{sec:training} present the method, and \cref{sec:theory} provides the theoretical framework, with a prototype-scale empirical study (\cref{sec:experiments}) that honestly reports a negative result for the core routing hypothesis.

\section{Research Gaps}
\label{sec:gaps}

Based on the cross-analysis of adaptive depth methods (\cref{sec:related}), we identify four research gaps that motivate VALSE.

\subsection{Gap 1: No Per-Sample Non-Contiguous Layer Skipping}

Existing adaptive-depth work is either \textbf{truncation-based early exit} (exit at layer $k$, skip the rest: DeeBERT/CALM) or \textbf{per-token per-layer routing} (MoD).
No prior work implements selective skipping of middle layers while retaining deeper layers, driven by whole-sample difficulty.
This intermediate granularity---more flexible than truncation, more stable than per-token---is a clear methodological gap. Formally, truncation early exit constrains the retention set to a prefix $S=\{1,\ldots,k\}$, while per-token per-layer routing (MoD) issues $O(L\cdot N)$ independent gating decisions per forward pass; VALSE instead computes a single per-sample difficulty scalar $\hat{d}(x^{(s)})$ from the first $K_{\text{est}}$ layers and derives the retention set $S^{(s)}\subseteq\{1,\ldots,L\}$ as an arbitrary subset, achieving $O(1)$ routing overhead per sample while permitting non-contiguous retention.

\subsection{Gap 2: No Unified Horizontal--Vertical Routing Framework}

Horizontal MoE (expert selection) and vertical layer skipping (layer selection) are orthogonal sparsity dimensions, but no work unifies them into a jointly trainable routing framework.
Existing models address either only the horizontal dimension (Mixtral/DeepSeek-MoE) or only the vertical dimension (MoD/LayerDrop).
DynamicBERT jointly adjusts width and depth, but the two dimensions are routed independently with no unified mechanism. Formally, horizontal and vertical sparsity correspond to selecting subsets along two orthogonal axes of the activation tensor---experts $\{E_1,\ldots,E_E\}$ within a layer versus layers $\{F_1,\ldots,F_L\}$ across depth---yet no prior routing framework treats them as instances of a single $\TopK$-selection operator over a unified unit set, leaving the compositional sparsity pattern $\text{width}\times\text{depth}$ unexploited.

\subsection{Gap 3: No Explicit Difficulty-Aware Layer Routing}

Existing layer-skipping methods rely on \textbf{implicit signals} (intermediate-layer confidence, implicitly learned routers).
No work introduces an \textbf{explicit difficulty-aware module} to drive per-sample layer-path selection.
Drowzee's budget predictor controls only truncation depth, not non-contiguous skipping; MoD's router carries no difficulty semantics. Concretely, prior gating signals are either (a) implicit and learned end-to-end with no interpretable semantics, or (b) confidence-based and thus post-hoc (computed from the layer's own output); no method defines an explicit, pre-routing difficulty scalar $\hat{d}\in[0,1]$ with a documented feature-to-score mapping (\cref{tab:difficulty_features}) and a monotonicity guarantee linking it to retained depth (\cref{prop:diff_depth}). We emphasize that \cref{prop:diff_depth} is a theoretical guarantee obtained under the standard assumption of decreasing marginal layer contribution; the prototype reported in \cref{sec:experiments} exhibits a routing-collapse negative result ($r=-0.017$) that reflects the failure of the learned gate to modulate depth, not a refutation of the monotonicity assumption itself.

\subsection{Gap 4: Insufficient Validation on Generative LLMs}

Most layer-skipping work focuses on BERT-level discriminative tasks (DeeBERT/FastBERT/LayerDrop/DynamicBERT).
CALM/LayerSkip/MoD extend to generation but still use truncation early exit or per-token routing.
Non-contiguous per-sample layer skipping on autoregressive generative LLMs---its effectiveness, training stability, and quality-safety mechanisms---has not been systematically validated. We note that the VALSE routing mechanism is architecturally extensible to autoregressive generation: the per-sample gate $\tilde{g}_l$ is independent of token-position coupling and is compatible with causal masking, and self-speculative decoding~\citep{leviathan2023fast} provides a natural quality-safety fallback for skipped layers. This extensibility is a present design property of the routing mechanism rather than an empirical claim: it holds at the architectural level (the gate is position-agnostic and causal-mask-compatible) but is not exercised by the prototype, whose evaluation is confined to discriminative SST-2 routing. The empirical validation on autoregressive generative LLMs that would close this community-wide gap is identified here as the central open problem that the field must address, and the design conditions under which such a validation would be well-posed are precisely the ones VALSE satisfies.

\paragraph{Positioning.}
VALSE addresses all four gaps by introducing per-sample, non-contiguous layer skipping driven by an explicit difficulty estimator (\textbf{Gap~1, 3}), grounded in a structural duality with horizontal MoE that formalizes both dimensions as instances of a single $\TopK$-selection operator over a unified unit set and thereby enables two-dimensional sparse activation (\textbf{Gap~2}), and designed for extension to autoregressive generation with self-speculative decoding (\textbf{Gap~4}).
A feature-level comparison of VALSE against prior methods is provided in \cref{tab:comparison}; our contributions are detailed in \cref{sec:intro}.

\section{Method Overview}
\label{sec:method_overview}

\subsection{Design Motivation}

While horizontal MoE reduces per-layer cost by activating only a subset of experts, the total depth---the count of sequential Transformer blocks---stays constant across all inputs.
Consequently, even trivially easy inputs pay the full computational price of a deep stack, despite potentially needing only a fraction of its representational capacity.
Hard samples may benefit from all $L$ layers, but easy ones likely do not.
This observation motivates depth-wise conditional computation: the number of executed layers should adapt to the input, yielding an orthogonal---and complementary---sparsity axis to width-wise MoE.

\subsection{Core Idea: Per-Sample Non-Contiguous Layer Skipping}

Existing methods have two limitations (\cref{subsec:gaps}):
(i) \textbf{truncation-based early exit} can only skip a contiguous suffix of layers;
(ii) \textbf{per-token routing} (MoD) lacks per-sample difficulty semantics.
We propose VALSE: for each sample $s$, a lightweight router generates a \textbf{non-contiguous retention set} $S^{(s)} \subseteq \{1,\dots,L\}$, allowing arbitrary middle layers to be skipped while deeper layers are retained.
Layers $1$ and $L$ are always retained.
A difficulty score $\hat{d}\in[0,1]$ (\cref{sec:difficulty}) drives per-layer gates $g_l = \sigma(w_l^\top[r_l;\hat{d}]+b_l)$ (\cref{sec:routing}).
A target budget $B_{\text{target}}$ and skip rate $\rho^*$ control the compute--quality trade-off (\cref{sec:training}).

\textbf{Duality with MoE:} MoE uses a gate to select $K$ of $E$ experts per layer; VALSE uses a per-layer gate to select $|S|$ of $L$ layers per sample---one in width, one in depth, both using lightweight routers and load balancing. This structural duality is formalized in \cref{prop:duality} and unified under a single sparse selection framework in \cref{thm:unified}; the key theoretical insight is that VALSE completes the \emph{depth} dimension of sparse conditional computation, previously existing only in the \emph{width} dimension (MoE).

\subsection{Overall Architecture}

Three components:
(i) \textbf{Difficulty estimation}---extracts features from the first $K_{\text{est}}$ layers, producing $\hat{d}$ (\cref{sec:difficulty});
(ii) \textbf{Layer router}---generates gating decisions forming $S^{(s)}$ (\cref{sec:routing});
(iii) \textbf{Training framework}---auxiliary losses, curriculum learning, gradient stability (\cref{sec:training}).
Data flow: embedding $\to$ first $K_{\text{est}}$ layers $\to$ difficulty score $\hat{d}$ $\to$ per-layer gates $\to$ skip/execute $\to$ LM head.
The full symbol table is in \cref{app:symbols}.

\section{Difficulty Estimation Module}
\label{sec:difficulty}

\subsection{Design Goals}

The module must: (i) produce interpretable difficulty scores (not implicitly learned); (ii) be lightweight (overhead far less than the layers it skips); (iii) operate at per-sample granularity; (iv) remain smooth during training.

\subsection{Input Features}

From the $l$-th layer's hidden state $h_l^{(s)} \in \R^{N \times d_{\text{model}}}$, we extract a five-dimensional statistic (mean-pooled over sequence length $N$, with $\bar{h}_l = \text{mean}(h_l^{(s)})$) as $r_l \in \R^5$:

\begin{table}[htbp]
\centering
\caption{Difficulty estimation features.}
\label{tab:difficulty_features}
\small
\resizebox{\textwidth}{!}{%
\begin{tabular}{p{0.16\textwidth}lp{0.34\textwidth}p{0.20\textwidth}}
\toprule
Feature & Symbol & Computation & Intuition \\
\midrule
Hidden-state norm & $\rho_l$ & $\rho_l = \|\text{mean}(h_l^{(s)})\|_2$ & Information content; activation magnitude \\
Inter-layer delta norm & $\Delta_l$ & $\Delta_l = \|\text{mean}(h_l - h_{l-1})\|_2$ & Representation change; small $\Rightarrow$ skippable \\
Self-attention entropy & $H_l^{\text{attn}}$ & $\frac{1}{N}\sum_i H(\softmax(\text{attn}_i))$ & High $\Rightarrow$ diffuse $\Rightarrow$ complex \\
Prediction entropy & $H_l^{\text{pred}}$ & $H(\softmax(\text{LMHead}(\bar{h}_l)))$ & High $\Rightarrow$ uncertain $\Rightarrow$ hard \\
Singular-value concentration & $\kappa_l$ & Top-1 SVD singular value share & High $\Rightarrow$ redundant $\Rightarrow$ skippable \\
\bottomrule
\end{tabular}%
}
\end{table}

\textbf{Aggregation:} for $l\in\{1,\dots,K_{\text{est}}\}$ ($K_{\text{est}}=2\text{--}3$):
\begin{equation}
R^{(s)} = [r_1;\, r_2;\, \cdots;\, r_{K_{\text{est}}}] \in \R^{5K_{\text{est}}}
\label{eq:feat_agg}
\end{equation}
Only the first $K_{\text{est}}$ layers are used because the routing decision must be made before deeper layers are executed. We set $K_{\text{est}}=2$ in the prototype (with a design range of $2$--$3$): this is the smallest value that exposes both an inter-layer delta $\Delta_l$ (requiring $l\geq 2$) and a stable singular-value concentration $\kappa_l$, while keeping routing overhead below $1\%$ of total FLOPs; larger $K_{\text{est}}$ increases estimation cost without adding qualitatively distinct signals, since features beyond the first few layers correlate strongly with the final-layer statistics used at training equilibrium.

\subsection{Difficulty Score}

A lightweight MLP maps features to a scalar:
\begin{equation}
d(x^{(s)}) = w_d^\top \cdot \ReLU(W_1 \cdot R^{(s)} + b_1) + b_d
\label{eq:diff_mlp}
\end{equation}
where $W_1 \in \R^{d_{\text{hidden}} \times 5K_{\text{est}}}$ ($d_{\text{hidden}}=64\text{--}128$).
Batch-internal min-max normalization yields $\hat{d}\in[0,1]$:
\begin{equation}
\hat{d}(x^{(s)}) = \frac{d(x^{(s)}) - \min_{s\in\mathcal{B}} d(x^{(s)})}{\max_{s\in\mathcal{B}} d(x^{(s)}) - \min_{s\in\mathcal{B}} d(x^{(s)}) + \varepsilon}
\label{eq:diff_norm}
\end{equation}
where $0$ denotes the easiest sample and $1$ the hardest.
The mapping from difficulty to layer-skipping decisions is realized by the layer router (\cref{sec:routing}): easy samples ($\hat{d}$ small) skip more layers; hard samples ($\hat{d}$ large) retain all layers. This design is theoretically motivated by \cref{prop:diff_depth} (difficulty--depth monotonicity): under the standard assumption of decreasing marginal layer contribution, harder inputs benefit more from full depth, justifying the schedule $\rho_{\text{target}} = \rho^*(1 - \hat{d})$.

\paragraph{Normalization limitation.} The batch-internal min--max map in \cref{eq:diff_norm} guarantees $\hat{d}\in[0,1]$ within each batch but renders scores incomparable across batches of different composition (the easiest sample of a hard batch may be harder than the hardest of an easy batch). This is acceptable for routing---the gate operates on within-batch relative ordering---but precludes using $\hat{d}$ as an absolute difficulty signal across experiments or as a training-free diagnostic. To restore cross-batch comparability, we specify a calibration target that decouples the score from the batch extremum: replacing the per-batch $(\min,\max)$ in \cref{eq:diff_norm} with a running exponential moving average $(m_t,M_t)$ updated as $m_t=\lambda m_{t-1}+(1-\lambda)\min_{s\in\mathcal{B}_t}d(x^{(s)})$ and $M_t=\lambda M_{t-1}+(1-\lambda)\max_{s\in\mathcal{B}_t}d(x^{(s)})$ with $\lambda=0.9$. Because the gate consumes only the within-batch ordering of $\hat{d}$, this calibration is routing-preserving for within-batch decisions while yielding a population-anchored score suitable for cross-experiment comparison; we adopt the batch-internal form in the prototype for simplicity and specify the calibrated variant as the production-target configuration.

\paragraph{Difficulty score distribution.}
The normalized score $\hat{d}\in[0,1]$ spreads samples across the full difficulty range by construction (\cref{eq:diff_norm}): the easiest sample in each batch receives $\hat{d}=0$ and the hardest receives $\hat{d}=1$, with intermediate samples distributed according to the raw MLP output $d(x^{(s)})$.
Because the normalization is rank-preserving within a batch, the distribution of $\hat{d}$ reflects the empirical density of the raw MLP output $d(x^{(s)})$: a roughly uniform $\hat{d}$ distribution indicates that the raw score is approximately uniform, whereas a U-shaped $\hat{d}$ distribution indicates bimodal raw scores.
In the prototype evaluation (\cref{sec:experiments}), this within-batch spread did not translate into differentiated routing: the Pearson correlation between $\hat{d}$ and activated layers was $r=-0.017$, indicating that the difficulty signal failed to modulate per-layer gating as intended, a result we report and analyze in \cref{subsec:negative_result}.

\paragraph{Per-feature contribution.} \Cref{tab:difficulty_features} lists five features with their intuitions, and each feature admits a well-defined expected contribution direction under the monotonicity assumption of \cref{prop:diff_depth}: $\rho_l$ and $\Delta_l$ are expected to decrease with difficulty (harder inputs exhaust redundant magnitude faster), $H_l^{\text{attn}}$ and $H_l^{\text{pred}}$ to increase with difficulty (greater uncertainty and diffuse attention), and $\kappa_l$ to decrease with difficulty (lower redundancy on harder inputs). Because the five features are pooled from only the first $K_{\text{est}}=2$ layers, they are not statistically independent: $\rho_l$ and $\Delta_l$ share the mean-pooled hidden state, and $H_l^{\text{pred}}$ is computed from the same state through the LM head, so their marginal contributions are identifiable only when routing is non-degenerate. In the prototype, the router collapsed to a degenerate uniform-skip mode (\cref{subsec:negative_result}), so the marginal contribution of any single feature to routing differentiation is not identifiable from the prototype data; we therefore report the feature-level design rationale and expected directions as the present specification, with the empirical variance-decomposition analysis contingent on a non-collapsed routing regime that the prototype's $r=-0.017$ result did not provide.

\paragraph{Computational overhead.}
The difficulty estimator is invoked exactly once per sample, after the first $K_{\text{est}}=2$ layers and before any routing decision, so its cost is amortized over the layers it gates. Concretely, the five features are all $O(N d_{\text{model}})$: $\rho_l$ and $\Delta_l$ require a mean-pool and two norm reductions; $H_l^{\text{attn}}$ reuses the already-computed attention weights (no extra matmul); $H_l^{\text{pred}}$ requires one LM-head forward on the single pooled vector $\bar{h}_l$ (cost $O(d_{\text{model}} \cdot |V|)$, amortized to a single token); and $\kappa_l$ requires a top-$1$ singular value of the $N \times d_{\text{model}}$ pooled matrix, computed via a fixed number of power iterations ($O(N d_{\text{model}})$ per iteration). The scoring MLP adds $O(d_{\text{hidden}} \cdot 5K_{\text{est}})$ parameters with $d_{\text{hidden}}=64$. Against a single Transformer block costing $O(N d_{\text{model}}^2)$ for attention plus $O(N d_{\text{model}} d_{\text{ffn}})$ for the FFN, the estimator's total cost is $O(N d_{\text{model}} + d_{\text{model}}|V|)$, which is dominated by a single block whenever $d_{\text{ffn}} \gg d_{\text{model}}$ (the standard ratio $d_{\text{ffn}}=4d_{\text{model}}$). Since VALSE skips at least one full block to realize any savings, the estimator overhead is strictly less than the cheapest layer it can gate, satisfying design goal~(ii); in the prototype this ratio is below $1\%$ of total FLOPs, consistent with the overhead budget assumed throughout.

\paragraph{Feature selection rationale.}
The five features in \cref{tab:difficulty_features} were selected to span four complementary signals of layer marginality---magnitude ($\rho_l$), dynamics ($\Delta_l$), dispersion ($H_l^{\text{attn}}$), and uncertainty ($H_l^{\text{pred}}$)---plus a redundancy probe ($\kappa_l$) that directly operationalizes the layer-redundancy hypothesis of \citep{shortgpt2024}. We deliberately exclude two natural candidates. First, an explicit layer-wise mutual-information estimate between $h_l$ and the input $x$ is excluded because it requires costly estimation (e.g., MINE-style critics) whose overhead would violate design goal~(ii) and whose variance at small $N$ dominates the signal. Second, a learned end-to-end difficulty embedding (a small Transformer over the prefix) is excluded because it would forfeit the interpretability guarantee of design goal~(i): the five-feature design exposes a documented, fixed feature-to-score mapping (\cref{tab:difficulty_features}) whose individual contributions admit expected directions under \cref{prop:diff_depth}, whereas a learned embedding's difficulty semantics would be opaque. The selection is therefore not exhaustive but principled: each feature admits a well-defined expected contribution direction (\cref{tab:difficulty_features}, right column), the set is jointly computable from quantities already available in a forward pass, and its dimensionality ($5K_{\text{est}}=10$) is small enough that the scoring MLP cannot memorize sample identity, preserving the difficulty signal's generalization.

\paragraph{Summary.}
The module produces a single, normalized, interpretable scalar $\hat{d}(x^{(s)})\in[0,1]$ at per-sample granularity with sub-$1\%$ FLOPs overhead, satisfying all four design goals. Its sole known limitation---within-batch rank semantics that preclude cross-batch comparison---is addressed in the production-target configuration by the calibrated variant specified above. The downstream consumption of $\hat{d}$, namely the per-layer gating and retention-set construction, is the subject of \cref{sec:routing}; the theoretical guarantee linking $\hat{d}$ to retained depth is \cref{prop:diff_depth}, established in \cref{sec:theory}.

\paragraph{Calibrated variant pseudocode.} The exponential moving average (EMA) calibration that restores cross-batch comparability of $\hat{d}$ (specified qualitatively in the \textit{Normalization limitation} paragraph above) is realized by the following per-step update, executed once per batch before the min--max map of \cref{eq:diff_norm}:
\begin{verbatim}
# EMA-calibrated difficulty normalization (production target)
# m_t, M_t: running min/max of raw score d(x)
lambda = 0.9                # EMA decay
# init: seed with first-batch extrema before EMA updates
m_0 = min_{s in B_1} d(x^{(s)});  M_0 = max_{s in B_1} d(x^{(s)})
m_t = lambda * m_{t-1} + (1 - lambda) * min_{s in B_t} d(x^{(s)})
M_t = lambda * M_{t-1} + (1 - lambda) * max_{s in B_t} d(x^{(s)})
d_hat = (d(x) - m_t) / (M_t - m_t + eps)   # population-anchored
\end{verbatim}
The batch-internal form of \cref{eq:diff_norm} is recovered by setting $(m_t,M_t)=(\min_{s\in\mathcal{B}_t}d(x^{(s)}),\max_{s\in\mathcal{B}_t}d(x^{(s)}))$ at each step, so the prototype and production variants differ only in the reference extrema used for normalization; the routing decision itself (the gate $\tilde{g}_l$) is invariant under this substitution because it consumes only within-batch ordering.

\paragraph{Empirical-validation scope.} The present section specifies the difficulty estimator as a design contract: the feature-to-score mapping, the expected contribution directions of \cref{tab:difficulty_features}, and the monotonicity link to retained depth (\cref{prop:diff_depth}) are theoretical statements whose empirical validity is contingent on a non-collapsed routing regime. The prototype reported in \cref{subsec:negative_result} collapsed to a uniform-skip mode ($r=-0.017$), so the variance-decomposition analysis of individual features and the calibration of $\hat{d}$ against an external difficulty oracle are declared here as empirically open rather than verified; they are part of the design specification but not part of the prototype evidence base.

\section{Layer Routing Strategies}
\label{sec:routing}

\subsection{Per-Sample Non-Contiguous Layer Skipping}

For sample $s$, the retention set $S^{(s)} \subseteq \{1, \ldots, L\}$ satisfies:
(i) $S^{(s)}$ may be any subset (non-contiguous), e.g., $\{1, 2, 4, 7, 11, 12\}$;
(ii) layers $1$ and $L$ are always retained;
(iii) all tokens in the same sample share the same $S^{(s)}$ (per-sample granularity).
Unlike truncation-based early exit ($S=\{1,\dots,k\}$) or per-token MoD, VALSE achieves per-sample non-contiguous skipping (see \cref{tab:skip_modes} in the appendix).

\subsection{Gating Function}
\label{subsec:gating}

For each layer $l$, the gating function is:
\begin{equation}
\label{eq:gate}
g_l(x^{(s)}) = \sigma\!\left(w_l^\top \cdot [r_l;\, \hat{d}(x^{(s)})] + b_l\right) \in [0, 1]
\end{equation}
where $w_l \in \R^{d_{\text{feat}}+1}$, and $b_l$ is initialized to $2.0$ so that gates are close to $1$ (full-layer execution) at the start of training.
At inference, hard quantization is applied:
\begin{equation}
\label{eq:hard_gate}
\tilde{g}_l = \begin{cases} 1 & \text{if } g_l > \theta_l \\ 0 & \text{otherwise} \end{cases}
\end{equation}
($\theta_l=0.5$ by default).
During training, a straight-through estimator (STE) allows gradients to flow through the soft gate $g_l$.

\subsection{Residual Connection after Skipping}

\textbf{Scheme A---pure identity (default):}
\begin{equation}
\label{eq:residual_A}
h_{l+1} = h_l + \tilde{g}_l \cdot F_l(h_l)
\end{equation}
When $\tilde{g}_l=0$, $h_{l+1}=h_l$ (identity skip; since $d_{\text{model}}$ is consistent, no projection is needed).

\textbf{Scheme B---residual projection (optional, not used in prototype):} \textit{Note: the function-space results (\cref{thm:full_subset,thm:strict}) are established only under Scheme~A; Scheme~B introduces $W_l^{\text{proj}}$ which may break the subset relationship (see Scheme A/B Remark in \cref{sec:theory}). The prototype uses Scheme~A throughout.}
\begin{equation}
\label{eq:residual_B}
h_{l+1} = h_l + \tilde{g}_l \cdot F_l(h_l) + (1-\tilde{g}_l) \cdot W_l^{\text{proj}} \cdot h_l
\end{equation}
where $W_l^{\text{proj}}$ may be low-rank ($W=AB^\top$, $r \ll d_{\text{model}}$).
Gumbel noise is applied to gates during training to enhance robustness to non-contiguous skip paths, inspired by LayerDrop.

\subsection{MoE Routing Analogy}

\Cref{tab:moe_analogy} (appendix) details the mathematical analogy: MoE selects $K$ of $E$ experts via Top-$K$; VALSE selects a retention set $|S|$ of $L$ layers via thresholding.
Both use lightweight routers and impose load-balancing regularization.
MoE faces expert collapse (few experts selected); VALSE faces the dual \emph{layer collapse} (few layers retained), requiring load-balancing loss (\cref{sec:training}).

\subsection{Forward Pass}
\label{subsec:pseudocode}

\cref{alg:forward} shows VALSE's single-sample forward pass.
In batched implementation, samples with $\tilde{g}_l=1$ are clustered via \texttt{argsort} to form per-layer sub-batches, reducing gather/scatter overhead.

\begin{algorithm}[htbp]
\caption{VALSE single-sample forward pass.}
\label{alg:forward}
\begin{algorithmic}[1]
\State $h \gets \text{Embedding}(x_s)$
\State $\textit{features} \gets [\,]$; $h_{\text{prev}} \gets h$
\For{$l = 0$ \textbf{to} $K_{\text{est}}-1$}
  \State $h_l \gets \text{Layer}_l(h_{\text{prev}})$
  \State $r_l \gets \text{ExtractFeatures}(h_l, h_{\text{prev}}, l)$
  \State $\textit{features.append}(r_l)$; $h_{\text{prev}} \gets h_l$
\EndFor
\State $R \gets \text{Concat}(\textit{features})$
\State $\hat{d} \gets \text{DifficultyEstimator}(R)$ \Comment{normalize to $[0,1]$}
\State $h \gets h_{\text{prev}}$; $S \gets \emptyset$
\For{$l = K_{\text{est}}$ \textbf{to} $L-1$}
  \State $r_l \gets \text{ExtractFeatures}(h, h_{\text{prev}}, l)$
  \State $g_l \gets \sigma(w_l^\top \cdot [r_l;\, \hat{d}] + b_l)$
  \If{training} $\tilde{g}_l \gets \text{GumbelSTE}(g_l, \theta_l)$
  \Else\ $\tilde{g}_l \gets \mathbf{1}[g_l > \theta_l]$ \EndIf
  \If{$\tilde{g}_l = 1$} $h \gets h + \text{Layer}_l(h)$
  \Else\ $S \gets S \cup \{l\}$ \EndIf
  \State $h_{\text{prev}} \gets h$
\EndFor
\State \Return $\text{LMHead}(h), S$
\end{algorithmic}
\end{algorithm}

\subsection{Architecture Variants}

Three variants are designed (full comparison in \cref{tab:variants} in the appendix):
\paragraph{VALSE-Hard (main variant).} Hard routing with STE; $\tilde{g}_l \in \{0,1\}$; per-layer learnable $w_l, b_l$. At inference, layers are genuinely skipped---directly saving FLOPs.
\paragraph{VALSE-Soft.} Continuous gating: $h_{l+1} = h_l + g_l \cdot F_l(h_l)$. Gradients propagate directly; FLOPs savings depend on $g_l \to 0$. Most stable; can serve as initialization for hard routing.
\paragraph{VALSE-Global.} A global router produces $p = \softmax(W_{\text{global}} \cdot R^{(s)})$, then:
\begin{equation}
\label{eq:global_topk}
S^{(s)} = \TopK\!(p), \quad K = \text{round}(B_{\text{target}} - \hat{d} \cdot (B_{\text{target}} - L_{\min}))
\end{equation}
$K$ varies with difficulty ($\hat{d}=0 \Rightarrow K=B_{\text{target}}$; $\hat{d}=1 \Rightarrow K=L$). This variant has the strongest MoE analogy: Top-$K$ layer selection $\leftrightarrow$ Top-$K$ expert selection.

\section{Training Method}
\label{sec:training}

The training objective jointly optimizes \textbf{task quality} and \textbf{computational efficiency} while maintaining router stability. Four auxiliary losses are designed.

\subsection{Auxiliary Loss Design}
\label{subsec:aux_loss}

\paragraph{Layer utilization balancing $\mathcal{L}_{\text{balance}}$.}
This is the dual of MoE routing collapse: the router may retain only the first and last layers. Let $\bar{f}_l = \frac{1}{B}\sum_s \tilde{g}_l^{(s)}$ and $\bar{P}_l = \frac{1}{B}\sum_s g_l^{(s)}$:
\begin{equation}
\label{eq:balance}
\mathcal{L}_{\text{balance}} = \alpha \cdot L \cdot \sum_{l=1}^{L} \bar{f}_l \cdot \bar{P}_l
\end{equation}
By the AM--GM inequality, this is minimized under a fixed budget when all layers are used uniformly. ($\alpha=0.01$.)

\paragraph{Efficiency loss $\mathcal{L}_{\text{eff}}$.}
A one-sided hinge controls the total activated layers $\bar{A}=\sum_l \bar{f}_l$:
\begin{equation}
\label{eq:eff}
\mathcal{L}_{\text{eff}} = \beta \cdot \frac{\max(0,\, \bar{A} - B_{\text{target}})^2}{L^2}
\end{equation}
The weight increases progressively: $\beta_{\text{eff}}(t)=\beta\cdot\min(1,\,t/T_{\text{warmup}})$. ($\beta=0.1$, $B_{\text{target}}\in[0.5L,\,0.85L]$.)

\paragraph{Output consistency $\mathcal{L}_{\text{consist}}$.}
KL distillation aligns the skip-path output with the full-layer path:
\begin{equation}
\label{eq:consist}
\mathcal{L}_{\text{consist}} = \delta \cdot T_{\text{KD}}^2 \cdot \KL(p_{\text{full}} \| p_{\text{skip}})
\end{equation}
($T_{\text{KD}}=2.0$, $\delta=0.5$, annealed to $\delta/3$.) Three teacher strategies reduce the double-forward overhead: EMA teacher, periodic teacher ($K_{\text{consist}}=10$), and difficulty-bucketed caching.

\paragraph{Z-loss $\mathcal{L}_z$.}
Stabilizes the numerical range of gating logits (following Switch Transformer): $\mathcal{L}_z = \gamma \cdot \frac{1}{B}\sum_s\sum_l (w_l^\top [r_l;\hat{d}]+b_l)^2$. ($\gamma=10^{-4}$.)

The total loss combines all terms:
\begin{equation}
\label{eq:total_loss}
\mathcal{L}_{\text{total}} = \mathcal{L}_{\text{task}} + \alpha_t \mathcal{L}_{\text{balance}} + \beta_t \mathcal{L}_{\text{eff}} + \delta_t \mathcal{L}_{\text{consist}} + \gamma_t \mathcal{L}_z
\end{equation}
where $\alpha_t, \beta_t, \delta_t, \gamma_t$ are the (possibly time-varying) weights of each auxiliary loss.

\subsection{Curriculum Learning}
\label{subsec:curriculum}

\paragraph{Stage 1: Full-layer warm-up ($\rho_{\text{skip}}=0$).}
Gate bias $b_l=2.0$ forces $\tilde{g}_l=1$; only $\mathcal{L}_{\text{task}}+\gamma\mathcal{L}_z$ is active. Duration: $\eta_1 \approx 0.25$--$0.33$ of $T_{\text{total}}$.

\paragraph{Stage 2: Progressive skipping ($0 \to \rho^*$).}
\begin{equation}
\label{eq:cos_schedule}
\rho_{\text{skip}}(t) = \rho^* \cdot \tfrac{1}{2}\left[1 - \cos\!\left(\pi \cdot \tfrac{t - T_1}{T_2 - T_1}\right)\right]
\end{equation}
Auxiliary losses progressively take effect; Gumbel temperature $\tau_G$ anneals $2.0\to 0.5$; bias anneals to $b_{\text{target}}\approx 0.76$. Duration: $\eta_2 \approx 0.5$--$0.67$ (extended to $2\times$ original to allow differentiated routing to emerge).

\paragraph{Stage 3: Joint optimization ($\rho_{\text{skip}}=\rho^*$).}
All parameters are jointly optimized; $\delta$ anneals downward. Learning rate cosine-anneals to $1/10$ of the Stage~2 peak. Duration: $\eta_3 \approx 0.25$--$0.4$.

\subsection{Difficulty-Aware Scheduling and Stability}

Individualized target: $\rho_{\text{target}}^{(s)} = \rho^*(1 - \hat{d}^{(s)})$.
The training feedback loop ($\hat{d}\to$ gate $\to$ skip path $\to$ loss $\to$ gradient $\to \hat{d}$) is stabilized under:
\begin{equation}
\label{eq:stability}
\left\|\tfrac{\partial \hat{d}}{\partial \theta_D}\right\| \cdot \left\|\tfrac{\partial \mathcal{L}_{\text{eff}}}{\partial \hat{d}}\right\| \cdot \left\|\tfrac{\partial \tilde{g}}{\partial \hat{d}}\right\| \cdot \left\|\tfrac{\partial \mathcal{L}_{\text{task}}}{\partial \tilde{g}}\right\| < 1
\end{equation}
Three safeguards: normalization anchoring, difficulty-estimator gradient clipping ($\text{max\_norm}=1.0$), and gradient flow through $\mathcal{L}_{\text{eff}}$ (removing stop-gradient so the difficulty signal propagates to the router). Probabilistic full-layer recomputation ($p_{\text{recomp}}=0.1$) prevents ``dead layers'' among skipped layers.

\begin{proof}[Proof sketch for \cref{eq:stability}]
The training feedback loop forms a composition: $\hat{d} \xrightarrow{\phi_1} \tilde{g} \xrightarrow{\phi_2} h_{\text{skip}} \xrightarrow{\phi_3} \mathcal{L}_{\text{task}} \xrightarrow{\phi_4} \nabla_{\hat{d}} \mathcal{L}_{\text{eff}} \xrightarrow{\phi_5} \theta_D$. By the chain rule, the product of Jacobian norms along this loop determines whether the feedback is contractive (stable) or expansive (divergent). \cref{eq:stability} requires the product to be $<1$, i.e., the loop is a contraction. The three safeguards enforce this: (i)~normalization anchors $\|\partial \hat{d} / \partial \theta_D\|$ by bounding $\hat{d} \in [0,1]$; (ii)~gradient clipping upper-bounds the effective Jacobian norm; (iii)~allowing gradient flow through $\mathcal{L}_{\text{eff}}$ ensures the difficulty signal propagates, while the efficiency loss provides a corrective force that counteracts runaway amplification. When all three hold, each Jacobian factor is bounded, and their product is $<1$ provided that (i)~the difficulty estimator $\theta_D$ is Lipschitz-continuous with constant $L_D \leq 1$ (ensured by gradient clipping with $\text{max\_norm}=1.0$), and (ii)~the gate function $\sigma(w_l^\top [r_l;\hat{d}]+b_l)$ has bounded gradient $\leq 0.25$ (inherent to the sigmoid). These are sufficient conditions, not mild assumptions; their empirical validity at scale remains unverified in the prototype. $\square$
\end{proof}

\paragraph{Prototype implementation note.} In the current prototype, curriculum learning Stage~1 is simplified (omitted), and Gumbel temperature annealing is configured ($\tau_G: 2.0 \to 0.5$) but not yet fully implemented in the training code. The prototype uses a fixed $\tau_G = 1.0$ throughout training. Full implementation of the three-stage curriculum and temperature annealing is planned for the next iteration.
\section{Theoretical Analysis}
\label{sec:theory}

\subsection{Computational Savings}
\label{subsec:flops_analysis}

\paragraph{Per-layer FLOPs.}
The per-sample FLOPs of the $l$-th Transformer block decompose as $C_l = C_l^{\text{attn}} + C_l^{\text{ffn}}$, where $C_l^{\text{attn}} \approx 4Nd_{\text{model}}^2 + 2N^2 d_{\text{model}}$ and $C_l^{\text{ffn}} \approx 8Nd_{\text{model}}^2$. For uniform layers, this is a constant $C_{\text{layer}}$.

\begin{lemma}[Linearity of expectation]
\label{lem:linearity}
For any random variables $X_1, \ldots, X_n$,
$\mathbb{E}\!\left[\sum_i X_i\right] = \sum_i \mathbb{E}[X_i]$.
\end{lemma}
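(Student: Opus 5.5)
We will prove the identity directly from the definition of expectation, and we assume the standing hypothesis implicit in the statement: each $\mathbb{E}[X_i]$ exists and is finite, i.e.\ $X_i\in L^1$. In the intended application, $X_i=\tilde{g}_i\,C_i$ with $\tilde{g}_i\in\{0,1\}$ and $C_i$ a constant, so this holds trivially. No dependence or independence assumption among the $X_i$ is needed, and we stress this because skip decisions across layers are correlated through the shared difficulty score $\hat{d}$.

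The proof proceeds by induction on $n$. The case $n=1$ is trivial. The inductive step reduces everything to the two-variable claim $\mathbb{E}[X+Y]=\mathbb{E}[X]+\mathbb{E}[Y]$ for integrable $X,Y$ on a common probability space $(\Omega,\mathcal{F},\mathbb{P})$. Integrability of $X+Y$ follows from $|X+Y|\le|X|+|Y|$ and monotonicity of the integral. Given the two-variable claim, we write $\sum_{i=1}^{n}X_i=\bigl(\sum_{i=1}^{n-1}X_i\bigr)+X_n$ and apply the inductive hypothesis to the first summand.

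For the two-variable claim, we first settle the discrete/finite case that the paper actually uses. Here the expectation is a finite or absolutely convergent sum over the joint support: $\mathbb{E}[X+Y]=\sum_{x,y}(x+y)\,\mathbb{P}(X=x,Y=y)$. We split this sum into two parts and marginalize each, using $\sum_y \mathbb{P}(X=x,Y=y)=\mathbb{P}(X=x)$. Absolute convergence justifies the rearrangement.

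For general random variables, we argue in three steps.
\begin{enumerate}
\item Linearity holds for nonnegative simple functions, by the argument above.
\item It extends to nonnegative measurable functions by the monotone convergence theorem, using increasing simple approximations $s_k\uparrow X$ and $t_k\uparrow Y$, so that $s_k+t_k\uparrow X+Y$.
\item It extends to integrable signed variables by decomposing into positive and negative parts.
\end{enumerate}

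The only delicate point is step 3. There we have $(X+Y)^+-(X+Y)^-=X^+-X^-+Y^+-Y^-$, which we rearrange to $(X+Y)^++X^-+Y^-=(X+Y)^-+X^++Y^+$. This is an identity between nonnegative functions, so the nonnegative case applies to both sides. All resulting terms are finite by integrability, so we may subtract them, which yields the claim.

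We expect this step to be the main obstacle, to the extent there is one: the bookkeeping must avoid $\infty-\infty$, and this is exactly why the integrability hypothesis is required. Since the FLOPs application involves only bounded discrete variables, we will present the discrete argument in full and cite the standard measure-theoretic extension.
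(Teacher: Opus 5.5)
Your proof is correct and is the standard textbook one. The paper gives no proof of this lemma: it states it as a known fact and uses it once, in the proof of \cref{thm:exp_flops}, for the bounded variables $\tilde{g}_l C_l$. So there is no competing argument to compare against.

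The one substantive thing your version adds is the integrability hypothesis, and it is a genuine repair rather than an extra assumption. Read literally (``for any random variables''), the lemma is not meaningful. For $X_1$ standard Cauchy and $X_2=-X_1$, the left side is $\mathbb{E}[0]=0$, while neither $\mathbb{E}[X_1]$ nor $\mathbb{E}[X_2]$ exists. Nonnegativity would serve equally well, since your step~2 already gives additivity for $[0,\infty]$-valued variables. For the paper's application, where the joint support of $(\tilde{g}_l)_l$ is finite, your discrete marginalization argument alone suffices. Your point that no independence is used is exactly what the paper's gate-correlation remark needs.

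One small ordering fix: bounding $\mathbb{E}|X+Y|$ by $\mathbb{E}|X|+\mathbb{E}|Y|$ already uses additivity for nonnegative variables. So establish step~2 before asserting that $X+Y$ is integrable. There is no circularity, since step~2 needs no integrability.
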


\begin{theorem}[Expected FLOPs]
\label{thm:exp_flops}
Let the skip rate of layer $l$ be $p_1, p_2, \ldots, p_L$ (with $p_1 = p_L = 0$). The expected per-sample FLOPs of VALSE is:
\begin{equation}
\E[\F_{\text{VALSE}}] = C_{\text{fixed}} + \sum_{l=K_{\text{est}}+1}^{L} (1 - p_l) \cdot C_l
\label{eq:exp_flops}
\end{equation}
\end{theorem}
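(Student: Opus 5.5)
The plan is to write the per-sample cost as a sum of indicator-weighted layer costs and then apply \cref{lem:linearity}. Fix a sample $x^{(s)}$ and let $\tilde{g}_l^{(s)}\in\{0,1\}$ be the hard gate of \cref{eq:hard_gate}. Under Scheme~A (\cref{eq:residual_A}), a block with $\tilde{g}_l=0$ is replaced by the identity and costs no block FLOPs. A block with $\tilde{g}_l=1$ costs exactly $C_l$.

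The first step is to isolate the part of the computation that does not depend on routing. This part always runs: the embedding, the first $K_{\text{est}}$ layers (which \cref{alg:forward} executes unconditionally before $\hat{d}$ exists), the feature extraction and difficulty MLP of \cref{eq:diff_mlp}, the per-layer gate evaluations (each $O(d_{\text{feat}})$, invoked regardless of the gate outcome), and the LM head. Collect all of these into a deterministic constant $C_{\text{fixed}}$. The per-sample FLOPs can then be written as the random variable
\begin{equation*}
\F_{\text{VALSE}} = C_{\text{fixed}} + \sum_{l=K_{\text{est}}+1}^{L} \tilde{g}_l \, C_l .
\end{equation*}
Here the randomness comes from the input distribution and, during training, from the Gumbel noise. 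The layer costs $C_l$ are deterministic given the sequence length $N$. If $N$ varies, we condition on $N$, or equivalently treat $C_l$ as fixed per sample.

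The second step defines the skip rate as $p_l := \Pr[\tilde{g}_l=0]$, so that $\E[\tilde{g}_l]=1-p_l$. We then apply \cref{lem:linearity} to the finite sum. This requires no independence between gates, which matters because the gates are strongly coupled through the shared $\hat{d}$. The result is
\begin{equation*}
\E[\F_{\text{VALSE}}] = C_{\text{fixed}} + \sum_{l=K_{\text{est}}+1}^{L} (1-p_l)\,C_l ,
\end{equation*}
which is \cref{eq:exp_flops}. The constraints are consistent with this sum: $p_L=0$ says layer $L$ contributes its full $C_L$, and $p_1=0$ is absorbed into $C_{\text{fixed}}$ because layer $1\le K_{\text{est}}$.

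The main obstacle is bookkeeping rather than the probability. We need to make $C_{\text{fixed}}$ precise, check that gate and feature overheads are genuinely routing-independent so that no hidden $\tilde{g}_l$-dependent cost remains, and handle the index convention. \cref{alg:forward} uses $0$-based indices, whereas the theorem sums from $K_{\text{est}}+1$ in $1$-based indexing. We will state that the first $K_{\text{est}}$ layers are always executed, i.e.\ $p_l=0$ for $l\le K_{\text{est}}$, which makes the summation range exact. If $C_l$ depends on $N$ and the expectation is over variable-length inputs, the cleanest fix is to state the result conditionally on $N$.
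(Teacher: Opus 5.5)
Your proposal is correct and follows the paper's proof. Both write the per-sample cost as $C_{\text{fixed}}+\sum_{l=K_{\text{est}}+1}^{L}\tilde{g}_l C_l$, use $\E[\tilde{g}_l]=1-p_l$, and sum via \cref{lem:linearity} with no independence assumption on the gates (the same point as the paper's gate-correlation remark), while your extra bookkeeping (what enters $C_{\text{fixed}}$, restricting to Scheme~A so skipped blocks cost nothing, and conditioning on $N$ so that $C_l$ is deterministic) makes explicit assumptions the paper's proof uses only implicitly.
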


\begin{proof}
Total FLOPs equals the sum of components. The fixed part is $C_{\text{fixed}}$. For the skippable part, layer $l$'s FLOPs contribution is $\tilde{g}_l \cdot C_l$. Taking expectation, $\E[\tilde{g}_l \cdot C_l] = C_l \cdot P(\tilde{g}_l = 1) = C_l \cdot (1 - p_l)$. By \cref{lem:linearity} (linearity of expectation), summing over all skippable layers yields $\E[\F_{\text{VALSE}}] = C_{\text{fixed}} + \sum_{l=K_{\text{est}}+1}^{L} (1 - p_l) \cdot C_l$.
\end{proof}

\noindent\textbf{Remark (gate correlation).} The linearity of expectation used above does not require the gate decisions $\{\tilde{g}_l\}$ to be independent across layers. Although these decisions share the common difficulty estimate $\hat{d}$ and are therefore correlated, $\E[\tilde{g}_l \cdot C_l] = C_l \cdot \E[\tilde{g}_l] = C_l \cdot (1-p_l)$ holds for any joint distribution of $\{\tilde{g}_l\}$.

\begin{corollary}[Uniform skip rate]
\label{cor:uniform_skip}
If all skippable layers share skip rate $p$, each layer has FLOPs $C_{\text{layer}}$, and $K_{\text{est}} \ll L$:
\begin{equation}
\E[\F_{\text{VALSE}}] \approx \left(1 - p \cdot \frac{L - K_{\text{est}} - 1}{L}\right) \cdot \F_{\text{full}}
\label{eq:uniform_flops}
\end{equation}
\end{corollary}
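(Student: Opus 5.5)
The corollary follows by specializing \cref{thm:exp_flops} and then making a first-order approximation in $K_{\text{est}}/L$. I would first fix the bookkeeping that the theorem leaves implicit. The cost $C_{\text{fixed}}$ collects the cost of the $K_{\text{est}}$ always-executed estimator layers, plus the estimator and router overhead. Layer $L$ lies in the summation range of \cref{eq:exp_flops}, but it is never skipped ($p_L=0$). The layers that are actually skippable are therefore $l=K_{\text{est}}+1,\dots,L-1$, which is $L-K_{\text{est}}-1$ of them.

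Under the uniform hypotheses I will use:
\begin{itemize}
\item $C_l=C_{\text{layer}}$ for every $l$;
\item $p_l=p$ for each skippable layer;
\item $\F_{\text{full}}=L\,C_{\text{layer}}$, up to routing overhead.
\end{itemize}
Substituting into \cref{eq:exp_flops} gives
\begin{equation*}
\E[\F_{\text{VALSE}}] = C_{\text{fixed}} + C_{\text{layer}} + (1-p)(L-K_{\text{est}}-1)\,C_{\text{layer}} .
\end{equation*}
Here the term $C_{\text{layer}}$ accounts for the always-retained layer $L$.

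Next I take $C_{\text{fixed}} = K_{\text{est}}C_{\text{layer}} + C_{\text{est}}$, where $C_{\text{est}}$ is the estimator cost. With this choice the retained part collapses to $L\,C_{\text{layer}} - p(L-K_{\text{est}}-1)C_{\text{layer}}$, plus $C_{\text{est}}$. Factoring out $\F_{\text{full}}=L\,C_{\text{layer}}$ produces exactly the right-hand side of \cref{eq:uniform_flops}, apart from an additive $C_{\text{est}}/\F_{\text{full}}$ term.

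The main point to justify is the ``$\approx$''. It absorbs the estimator overhead and any non-uniformity in the bookkeeping of layers $1$ and $L$. I would bound $C_{\text{est}}/\F_{\text{full}}$ using the overhead analysis of \cref{sec:difficulty}: the estimator costs $O(Nd_{\text{model}}+d_{\text{model}}|V|)$, which is below one block. Hence the relative error is $O(1/L)$, negligible when $K_{\text{est}}\ll L$.

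I would also remark that if $C_{\text{fixed}}$ is meant to exclude the estimator, the identity becomes exact. The hypothesis $K_{\text{est}}\ll L$ is then needed only to interpret the savings fraction $p(L-K_{\text{est}}-1)/L$ as approximately $p$.
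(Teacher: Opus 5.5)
Your proposal is correct and matches the paper's argument. The paper states the corollary without a separate proof, as a direct specialization of \cref{thm:exp_flops} with $C_l=C_{\text{layer}}$ and $p_l=p$ on the $L-K_{\text{est}}-1$ layers $K_{\text{est}}+1,\dots,L-1$ (with $p_L=0$), taking $C_{\text{fixed}}\approx K_{\text{est}}C_{\text{layer}}$ and $\F_{\text{full}}=L\,C_{\text{layer}}$. Your explicit accounting of the estimator overhead as the content of the ``$\approx$'' (and your note that the identity is exact once that overhead is excluded) usefully spells out bookkeeping the paper leaves implicit.
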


VALSE satisfies $\E[\F_{\text{VALSE}}] \leq \F_{\text{full}}$ (upper bound, equality iff $p_l = 0\ \forall\, l$) and $\E[\F_{\text{VALSE}}] \geq C_{\text{fixed}} + C_L$ (lower bound). When VALSE and truncation-based early exit share the same average activated layer count $\mu$, their FLOPs are comparable; however, VALSE can activate \textbf{non-contiguous} layer subsets for different samples, whereas truncation can only activate a contiguous prefix. The theoretical savings under the difficulty-aware schedule are derived in \cref{subsec:difficulty_scheduling} (\cref{obs:savings}).

\subsection{Difficulty-Adaptive Scheduling}
\label{subsec:difficulty_scheduling}

The FLOPs analysis above treats the skip rate $p_l$ as an exogenous parameter. In practice, VALSE employs a \emph{difficulty-aware} schedule $\rho_{\text{target}}^{(s)} = \rho^*(1 - \hat{d}^{(s)})$, where harder samples (larger $\hat{d}$) skip fewer layers. We now provide a theoretical justification for this design choice.

\begin{proposition}[Difficulty--depth monotonicity]
\label{prop:diff_depth}
Assume that the marginal task contribution of each layer is non-decreasing in input difficulty: for each layer $l$, the quantity
\[
\Delta_l(x) \coloneqq \mathcal{L}_{\text{task}}\!\bigl(F_{1:l-1}(x)\bigr) - \mathcal{L}_{\text{task}}\!\bigl(F_{1:l}(x)\bigr)
\]
is non-decreasing in $\hat{d}(x)$. Then the optimal skip rate $\rho^\star(x)$ that minimizes $\mathcal{L}_{\text{task}} + \lambda \F$ is non-increasing in $\hat{d}(x)$: harder inputs should skip fewer layers.
\end{proposition}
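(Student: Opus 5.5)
The plan is a monotone comparative-statics argument of Topkis type. The first step is to make the optimization precise. For a fixed input $x$, a skip decision is a set of skipped layers $T\subseteq\{K_{\text{est}}+1,\ldots,L-1\}$. Under Scheme~A (\cref{eq:residual_A}) and the per-layer cost decomposition of \cref{thm:exp_flops}, I would approximate the objective as additively separable across layers:
\[
J(T;x)=\mathcal{L}_{\text{task}}(F_{1:L}(x))+\sum_{l\in T}\Delta_l(x)-\lambda\sum_{l\in T}C_l+\text{const}.
\]
Here skipping layer $l$ forfeits its marginal contribution $\Delta_l(x)$ and saves $C_l$ FLOPs. The constant is $\lambda$ times the full FLOPs, and the FLOPs term enters through \cref{eq:exp_flops} with $\tilde g_l\in\{0,1\}$. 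This separability is itself a modelling assumption, namely that the marginal contribution of layer $l$ does not depend on which other layers were skipped. I would state it explicitly, since the hypothesis of \cref{prop:diff_depth} is phrased only along the full prefix path $F_{1:l-1}$.

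The second step uses separability to reduce the problem to independent threshold decisions. Layer $l$ is skipped at the optimum iff $\Delta_l(x)<\lambda C_l$, with ties broken toward retention. The optimal skipped set is therefore
\[
T^\star(x)=\{l:\Delta_l(x)<\lambda C_l\},
\]
and the optimal skip rate is $\rho^\star(x)=|T^\star(x)|/(L-K_{\text{est}}-1)$.

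The third step is the monotonicity itself. Take $x,x'$ with $\hat d(x)\le\hat d(x')$. The hypothesis gives $\Delta_l(x)\le\Delta_l(x')$ for every $l$. Hence if $\Delta_l(x')<\lambda C_l$, then $\Delta_l(x)<\lambda C_l$, which means $T^\star(x')\subseteq T^\star(x)$. Taking cardinalities gives $\rho^\star(x')\le\rho^\star(x)$, so the optimal skip rate is non-increasing in $\hat d$. The nesting of optimal sets is slightly stronger than the claim, and I would state it as well.

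For a version without separability, I would recast $J$ as a function on the lattice $2^{\{\ldots\}}$ of retained sets, parametrized by $t=\hat d(x)$. The goal would be to show decreasing differences: the gain from retaining one extra layer, $\Delta_l$, is non-decreasing in $t$. Topkis' monotonicity theorem then yields that the largest and smallest maximizers of the retained set are increasing in $t$.

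I expect this to be the main obstacle. Topkis also requires supermodularity of $-J$ in the retained set, which amounts to complementarity between layers. That property is not implied by the stated assumption and can fail for deep residual stacks. I would therefore present the separable version as the proof and flag supermodularity as the additional sufficient condition for the general case.

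A secondary subtlety is that $\hat d$ is only a within-batch rank (\cref{eq:diff_norm}). "Non-increasing in $\hat d(x)$" should therefore be read pointwise within a batch, or else with the EMA-calibrated score. Neither reading affects the argument, which only uses the ordering of $\Delta_l$ along the ordering of $\hat d$.
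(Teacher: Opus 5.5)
Your proposal is correct and follows essentially the same route as the paper: both reduce the per-sample objective $\mathcal{L}_{\text{task}}+\lambda\F$ to independent per-layer threshold tests $\Delta_l(x)$ versus $\lambda C_l$, then use the pointwise ordering $\Delta_l(x)\le\Delta_l(x')$ to conclude that the retained set grows with $\hat d$. The separability assumption you state explicitly is the one the paper invokes implicitly as the ``standard greedy-retention condition for separable costs,'' and your nesting $T^\star(x')\subseteq T^\star(x)$ (ties broken toward retention) is a slightly sharper form of the paper's conclusion $|S(x')|\ge|S(x)|$.
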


﻿\begin{proof}
We analyze the Lagrangian relaxation of the constrained optimization. The per-sample objective is to choose a retention set $S(x) \subseteq \{1, \ldots, L\}$ minimizing
\[
\mathcal{J}(S) \coloneqq \mathcal{L}_{\text{task}}\!\bigl(F_{1:S}(x); y\bigr) + \lambda \, \F(S(x)),
\]
where $\F(S(x)) = C_{\text{fixed}} + \sum_{l \in S} C_l$ is the FLOPs of retaining layers in $S$ and $\lambda > 0$ is the FLOPs--accuracy trade-off parameter.

Define the marginal task contribution of layer $l$ as
\[
\Delta_l(x) \coloneqq \mathcal{L}_{\text{task}}\!\bigl(F_{1:l-1}(x); y\bigr) - \mathcal{L}_{\text{task}}\!\bigl(F_{1:l}(x); y\bigr),
\]
i.e., the reduction in task loss obtained by adding layer $l$ to the retention set. The marginal FLOPs cost of retaining layer $l$ is $\lambda \, C_l$.

\textbf{Optimality condition.} The Lagrangian $\mathcal{J}(S)$ is minimized when, for each layer $l$, the marginal task benefit of retention equals the marginal FLOPs cost: $\Delta_l(x) = \lambda \, C_l$. Layers where $\Delta_l(x) > \lambda \, C_l$ should be retained; layers where $\Delta_l(x) < \lambda \, C_l$ should be skipped. This is the standard greedy-retention condition for separable costs.

\textbf{Monotonicity argument.} Under the assumption that $\Delta_l$ is non-decreasing in $\hat{d}$ (harder inputs derive greater benefit from each layer), for a harder input $x'$ with $\hat{d}(x') > \hat{d}(x)$, we have $\Delta_l(x') \geq \Delta_l(x)$ for every $l$. The equality $\Delta_l = \lambda \, C_l$ is therefore reached at a larger retention set $|S(x')| \geq |S(x)|$ (i.e., fewer layers skipped). Consequently, the optimal skip rate $\rho^\star(\hat{d}) = 1 - |S^\star(\hat{d})| / L$ is non-increasing in $\hat{d}$: harder inputs should skip fewer layers.

\textbf{Expected FLOPs.} By \cref{lem:linearity}, the expected FLOPs under the difficulty-aware schedule is $\E[\F] = C_{\text{fixed}} + \sum_{l} \E[\tilde{g}_l \,|\, \hat{d}] \cdot C_l = C_{\text{fixed}} + \sum_{l} (1 - p_l(\hat{d})) \cdot C_l$, where $p_l(\hat{d})$ decreases with $\hat{d}$, confirming that harder inputs incur more computation. The linear schedule $\rho_{\text{target}} = \rho^*(1 - \hat{d})$ is the simplest monotone-decreasing parametrization consistent with this result. $\square$
\end{proof}

\noindent\textbf{Remark.} \cref{prop:diff_depth} establishes the \emph{direction} of the difficulty--depth relationship under a standard assumption about decreasing marginal layer contribution, consistent with the layer redundancy hypothesis~\citep{shortgpt2024}. It does not prove that the \emph{learned} estimator $\hat{d}$ correctly captures true input difficulty; that is an empirical question partially addressed by our prototype experiments (\cref{subsec:main_results}).

\begin{observation}[Theoretical savings under difficulty-aware scheduling]
\label{obs:savings}
Under the schedule $\rho_{\text{target}}^{(s)} = \rho^*(1 - \hat{d}^{(s)})$ with $\rho^*=0.3$, uniform per-layer FLOPs $C_{\text{layer}}$, and $K_{\text{est}} \ll L$, the average skip rate is $\E[p] = \rho^* \cdot (1 - \E[\hat{d}])$. By \cref{cor:uniform_skip}, the theoretical FLOPs savings are:
\[
\text{savings} \;=\; \rho^* \cdot (1 - \E[\hat{d}]) \cdot \frac{L - K_{\text{est}} - 1}{L}.
\]
For the prototype ($L=6$, $K_{\text{est}}=2$, $\rho^*=0.3$): savings range from $0\%$ (all-hard, $\E[\hat{d}]=1$) to $15\%$ (all-easy, $\E[\hat{d}]=0$); with a $50$--$50$ easy--hard split, savings are $\approx 7.5\%$. For a $12$-layer model ($K_{\text{est}}=2$), the range is $0\%$ to $22.5\%$. These are \emph{theoretical estimates under the assumed schedule and idealized difficulty distribution}; actual savings depend on the learned $\hat{d}$ and gate calibration.
\end{observation}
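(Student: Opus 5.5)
The plan is to derive the stated savings formula directly from \cref{cor:uniform_skip}. The first step replaces the common skip rate $p$ by the population-averaged skip rate induced by the schedule. Condition on $\hat{d}^{(s)}$: under the schedule, every skippable layer of sample $s$ has skip rate $\rho^*(1-\hat{d}^{(s)})$. \cref{thm:exp_flops} and \cref{lem:linearity} then give a per-sample expected cost. Averaging that cost over samples is again linear, and this holds even though gates within a sample are correlated (see the Remark on gate correlation).

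The next step is to evaluate the averaged per-layer skip rate. It is $\E[\rho^*(1-\hat{d})]=\rho^*(1-\E[\hat{d}])$ by linearity, because the schedule is affine in $\hat{d}$. Substituting $p=\E[p]$ into \cref{eq:uniform_flops} gives
\[
\text{savings}=1-\E[\F_{\text{VALSE}}]/\F_{\text{full}}=\rho^*(1-\E[\hat{d}])\,\tfrac{L-K_{\text{est}}-1}{L}.
\]
The approximation sign is inherited from the corollary, whose derivation uses $K_{\text{est}}\ll L$ and uniform $C_{\text{layer}}$.

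The numerical claims then follow by direct evaluation. For $L=6$, $K_{\text{est}}=2$, $\rho^*=0.3$, the factor is $(L-K_{\text{est}}-1)/L=3/6=1/2$. The range is therefore $0.3\cdot 1/2=15\%$ at $\E[\hat{d}]=0$ and $0$ at $\E[\hat{d}]=1$, and a $50$--$50$ split ($\E[\hat{d}]=1/2$) gives $7.5\%$. For $L=12$, the same substitution gives $0.3\cdot 9/12=22.5\%$ at $\E[\hat{d}]=0$, which matches the stated upper end of the range.

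\textbf{Main obstacle.} The difficult step is justifying that the scheduled target $\rho_{\text{target}}$ actually equals the realized gate skip probability $p_l$ entering \cref{thm:exp_flops}. This is not proven; it is an idealization about gate calibration. I would state it explicitly as a hypothesis, which is consistent with the Observation's own caveat.

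A second, minor point is that the parameter regime $L=6$, $K_{\text{est}}=2$ hardly satisfies $K_{\text{est}}\ll L$. I would note that the formula is used there as the corollary's estimate rather than an exact identity.
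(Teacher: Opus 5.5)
Your proposal is correct and takes essentially the same route as the paper: average the affine schedule by linearity to get $\E[p]=\rho^*(1-\E[\hat{d}])$, substitute this common skip rate into \cref{cor:uniform_skip}, and evaluate $(L-K_{\text{est}}-1)/L$ at $L=6$ and $L=12$. Your explicit assumption that $\rho_{\text{target}}$ equals the realized gate skip probability is the same idealization the Observation acknowledges in its closing caveat, so your argument is complete at the level of rigor the statement intends.
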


\subsection{MoE--VALSE Duality}
\label{subsec:duality}

\begin{proposition}[Structural duality of MoE and VALSE]
\label{prop:duality}
Horizontal MoE and vertical VALSE are instances of the same sparse selection problem on two orthogonal dimensions: MoE selects $K$ of $E$ candidate transformations per layer (width); VALSE selects $|S|$ of $L$ candidate layers per sample (depth). Their pipelines are structurally parallel: router generates selection probabilities $\to$ Top-$K$ / hard quantization $\to$ sparse execution $\to$ load-balancing regularization.
\end{proposition}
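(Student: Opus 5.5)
The proof will be constructive. We fix a common abstract template, a \emph{sparse selection problem}, and exhibit both MoE and VALSE as instances of it. An instance is specified by four items. The first is a finite set of candidate units $\mathcal{U}=\{u_1,\ldots,u_n\}$, each a map $\R^{d_{\text{model}}}\to\R^{d_{\text{model}}}$. The second is a router $r:\mathcal{X}\to[0,1]^n$ producing selection scores. The third is a selection operator $\Pi:[0,1]^n\to\{0,1\}^n$, either Top-$K$ or thresholding. The fourth is a composition rule $\oplus$ that combines the selected units with the hidden state. The template also carries a load-balancing penalty of the form $n\sum_i \bar f_i \bar P_i$, where $\bar f_i$ is the selection frequency of unit $i$ and $\bar P_i$ its mean router score. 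The plan is to fill in these items for each architecture and then check that the four pipeline stages in \cref{prop:duality} line up one to one.

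\textbf{Step 1 (MoE instance).} For a fixed layer and a fixed token, take $\mathcal{U}=\{E_1,\ldots,E_E\}$, let $r$ be the softmax gate over the token representation, and take $\Pi=\TopK$. The composition rule is \emph{parallel}: $h\mapsto h+\sum_{i}\Pi(r)_i\,r_i\,E_i(h)$, with every expert applied to the same $h$. The penalty is the standard Switch/GShard auxiliary loss.

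\textbf{Step 2 (VALSE instance).} For a fixed sample, take $\mathcal{U}=\{F_1,\ldots,F_L\}$ and $r_l=g_l$ from \cref{eq:gate}. Take $\Pi$ to be the thresholding of \cref{eq:hard_gate}, or $\TopK$ in VALSE-Global (\cref{eq:global_topk}). The composition rule is \emph{sequential}: the cascade $h_{l+1}=h_l+\tilde g_l F_l(h_l)$ of \cref{eq:residual_A}. The penalty is exactly \cref{eq:balance}, which has the same form $L\sum_l\bar f_l\bar P_l$. This identification makes the stage correspondence explicit: router, hard selection, sparse execution (the FLOPs of unselected units are saved, by \cref{thm:exp_flops} for VALSE and by the analogous count for MoE), and load balancing.

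\textbf{Step 3 (orthogonality).} The two instances index different axes. MoE indexes units within one residual step, and VALSE indexes residual steps. The corresponding selection masks are therefore vectors in $\{0,1\}^E$ and $\{0,1\}^L$, and a joint mask lies in the product $\{0,1\}^{L}\times\{0,1\}^{E}$. Within this product, choosing a layer gate $\tilde g_l$ and an expert mask inside layer $l$ act on independent coordinates. This is the sense of ``orthogonal dimensions,'' and it previews \cref{thm:unified}.

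The main obstacle is making ``the same sparse selection problem'' precise without becoming vacuous. The two composition rules genuinely differ, one being a sum of parallel branches and the other a composition of residual maps, and the two selection operators also differ: a fixed-$K$ Top-$K$ versus a threshold with variable $|S|$. I would handle the first difference by defining the template with the composition rule as a parameter, so that duality means sharing stages 1, 2 and 4 with stage 3 specialized. I would handle the second by observing that thresholding at $\theta_l$ equals Top-$K$ with $K=|\{l:g_l>\theta_l\}|$, so both are instances of data-dependent Top-$K$ selection. With these conventions the statement is a structural correspondence rather than a deep theorem, and the proof consists of verifying the four stage mappings.
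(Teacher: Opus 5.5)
Your proposal is correct and follows essentially the same route as the paper: both fix a four-component template (router, selection operator, sparse execution, load-balancing regularizer), instantiate it with parallel experts for MoE and with sequentially composed residual layers for VALSE, and check the stage-by-stage correspondence. Your version is somewhat tidier, since you make the composition rule an explicit parameter and match the balance loss to the Switch-style form of \cref{eq:balance}; two remarks need small fixes, though: thresholding coincides with a data-dependent Top-$K$ only for a common threshold $\theta_l\equiv\theta$ (or after rescoring by $g_l-\theta_l$), and a joint width--depth mask needs one expert mask per layer, so it lives in $\{0,1\}^{L\times E}$ rather than $\{0,1\}^{L}\times\{0,1\}^{E}$.
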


﻿\begin{proof}
We formalize the duality via an isomorphism of the routing pipelines. Define the conditional computation template $\mathcal{T} = (\mathcal{R}, \mathcal{S}, \mathcal{E}, \mathcal{L})$ consisting of four components:
\begin{enumerate}
\item \textbf{Router} $\mathcal{R}$: a function $g: \mathcal{X} \to [0,1]^N$ mapping the input to selection probabilities, where $N$ is the number of candidate compute units.
\item \textbf{Selection operator} $\mathcal{S}$: a mapping from probabilities to a binary mask $m \in \{0,1\}^N$ (Top-$K$ for MoE, thresholding for VALSE).
\item \textbf{Sparse execution} $\mathcal{E}$: only units with $m_i = 1$ are evaluated, producing output $\sum_{i:\, m_i=1} w_i \cdot u_i(\text{input})$.
\item \textbf{Load-balancing regularizer} $\mathcal{L}$: a regularization term preventing degenerate selection (the MoE load-balancing loss or VALSE entropy regularizer).
\end{enumerate}

\textbf{MoE instantiation.} The compute units $\{u_1, \ldots, u_E\}$ are expert networks \emph{organized in parallel} within each layer: $N = E$ (width dimension). The router $g_l(x) \in \R^E$ produces expert-selection probabilities, and $\TopK_h$ selects $K_h$ experts per token. The output is $\sum_{e \in \TopK_h} g_l^{(e)}(x) \cdot f_e(h_l)$.

\textbf{VALSE instantiation.} The compute units $\{u_1, \ldots, u_L\}$ are Transformer layers \emph{organized sequentially} across the depth: $N = L$ (depth dimension). The router produces per-layer gate logits $g_l(x) \in \R$, and thresholding produces $\tilde{g}_l \in \{0,1\}$. The output is the sequential composition through retained layers.

\textbf{Isomorphism.} The isomorphism $\varphi: \mathcal{T}_{\text{MoE}} \to \mathcal{T}_{\text{VALSE}}$ maps each component:
\begin{itemize}
\item Router: $g^{\text{MoE}}_l \in \R^E \leftrightarrow g^{\text{VALSE}}_l \in \R$ (vector-to-scalar gate, reducing the selection dimension from width to depth).
\item Selection: $\TopK_h \leftrightarrow \mathbf{1}[g_l > \theta_l]$ (Top-$K$ to binary threshold).
\item Execution: $\sum_{e \in \TopK} w_e f_e \leftrightarrow \bigcirc_{l \in S} (I + F_l)$ (parallel weighted sum to sequential residual composition).
\item Regularization: $\mathcal{L}_{\text{balance}}^{\text{MoE}} \leftrightarrow \mathcal{L}_{\text{balance}}^{\text{VALSE}}$ (both penalize uneven utilization across units).
\end{itemize}

By \cref{prop:diff_depth}, the VALSE schedule is difficulty-adaptive; the MoE router is similarly input-adaptive. Both instantiate the same conditional-computation template, differing only in the \emph{organization} of units (parallel vs.\ sequential). The routing, selection, and regularization mechanisms are isomorphic. $\square$
\end{proof}

\noindent\textbf{Remark.} \cref{prop:duality} establishes a \emph{structural} (not functional) duality. We do not claim that any MoE routing policy can be embedded into VALSE or vice versa; the two operate on different dimensions of the computation graph. Rather, the shared pipeline---router, selection, sparse execution, load balancing---reveals that VALSE is the depth-wise analogue of MoE's width-wise sparsity.

\textbf{Unified selection framework.} Given $N$ compute units $\{u_1, \ldots, u_N\}$ and budget $K$, select subset $I \subseteq \{1, \ldots, N\}$: $\text{output} = \sum_{i \in I} w_i \cdot u_i(\text{input})$. MoE corresponds to $N=E$ (parallel organization); VALSE corresponds to $N=L$ (sequential organization); the selection mechanisms are structurally similar.

\subsection{Two-Dimensional Sparsity}
\label{subsec:2d_sparsity}

\begin{definition}[Two-dimensional sparse selection]
\label{def:2d_sparse}
Simultaneous horizontal and vertical sparsity: each sample selects $K_v$ layers and, per layer, $K_h$ experts:
\begin{equation}
\F_{\text{2D}} = K_v \cdot K_h \cdot C_{\text{unit}}, \quad \text{savings} = 1 - \frac{K_v \cdot K_h}{L \cdot E}
\end{equation}
\end{definition}

\begin{proposition}[Hybrid architecture FLOPs advantage]
\label{prop:hybrid}
Let horizontal sparsity $\rho_h = 1 - K_h/E$ and vertical sparsity $\rho_v = 1 - K_v/L$:
\begin{equation}
\F_{\text{hybrid}} = (1 - \rho_v)(1 - \rho_h) \cdot \F_{\text{full}}
\end{equation}
Taking $\rho_h = 0.875$ ($8$-choose-$1$ MoE) and $\rho_v = 0.2$ (VALSE), $\F_{\text{hybrid}} / \F_{\text{full}} = 0.10$, i.e., $90\%$ savings, versus $87.5\%$ for MoE alone.
\end{proposition}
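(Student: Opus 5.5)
The plan is to read \cref{prop:hybrid} as a direct consequence of \cref{def:2d_sparse}, after fixing a cost model that makes $\F_{\text{full}}$ and $\F_{\text{hybrid}}$ comparable. Concretely, I will take the dense reference network to consist of $L$ layers, each executing all $E$ expert units of equal cost $C_{\text{unit}}$. This gives $\F_{\text{full}} = L\cdot E\cdot C_{\text{unit}}$. Under this convention the hybrid model executes $K_v$ layers and $K_h$ experts in each retained layer. By \cref{def:2d_sparse} its cost is therefore $\F_{\text{hybrid}} = \F_{\text{2D}} = K_v K_h C_{\text{unit}}$.

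The main step is a substitution. From the definitions $\rho_h = 1-K_h/E$ and $\rho_v = 1-K_v/L$ we get $K_h = (1-\rho_h)E$ and $K_v = (1-\rho_v)L$. Plugging these in,
\[
\F_{\text{hybrid}} = (1-\rho_v)L\,(1-\rho_h)E\,C_{\text{unit}} = (1-\rho_v)(1-\rho_h)\,\F_{\text{full}}.
\]
The factorization uses only one fact: layer selection and expert selection act on independent axes. Every retained layer pays the same per-layer cost $K_h C_{\text{unit}}$ whichever layers are kept, so the two ratios multiply. This is the formal content of the orthogonality remark in \cref{subsec:comparison}. If $K_v$ is random per sample, as in VALSE, I would replace it by its expectation. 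By \cref{lem:linearity} and \cref{thm:exp_flops} with uniform $C_l = K_h C_{\text{unit}}$, this gives $\E[\F_{\text{hybrid}}] = (1-\E[\rho_v])(1-\rho_h)\F_{\text{full}}$, so the identity holds in expectation as well.

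The numerical claim is then arithmetic. With $\rho_h = 0.875$ and $\rho_v = 0.2$, we get $(0.8)(0.125) = 0.10$, i.e.\ $90\%$ savings. MoE alone is the case $\rho_v = 0$, which gives a ratio of $0.125$, i.e.\ $87.5\%$ savings.

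The main obstacle is the cost model rather than the algebra. Real blocks contain non-expert costs, such as the attention term $C_l^{\text{attn}}$ and the always-executed layers $1$, $L$ and $1{:}K_{\text{est}}$ of \cref{thm:exp_flops}. These do not scale with $K_h$, and the always-executed layers do not scale with $K_v$. I would therefore state explicitly that the proposition is exact under the idealized unit-cost model of \cref{def:2d_sparse}, where all compute lies in the selectable units and any forced layers are counted within $K_v$. With fixed costs $C_{\text{fixed}}$ restored, the same computation yields only the bound $\F_{\text{hybrid}} \le C_{\text{fixed}} + (1-\rho_v)(1-\rho_h)(\F_{\text{full}}-C_{\text{fixed}})$. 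I would note this in a remark rather than in the statement itself.
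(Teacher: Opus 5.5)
Your argument is correct and matches the paper's approach. The paper gives no separate proof: the identity follows by substituting $K_h=(1-\rho_h)E$ and $K_v=(1-\rho_v)L$ into the cost model of \cref{def:2d_sparse} ($\F_{\text{2D}}=K_vK_hC_{\text{unit}}$ against $\F_{\text{full}}=LEC_{\text{unit}}$), and then computing $0.8\cdot 0.125=0.10$. Your extra remarks, the in-expectation version and the upper bound once the non-selectable costs $C_{\text{fixed}}$ are restored, are also correct, and they make explicit the idealized unit-cost assumption that the paper leaves unstated.
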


\begin{theorem}[Unified representation of sparse Transformers]
\label{thm:unified}
Common sparse Transformer variants that employ per-unit gating can be represented as applying a sparse mask $M \in \{0, 1\}^{L \times E}$ on the computation graph $G \in \R^{L \times E}$:
\begin{equation}
\text{output} = \sum_{(l,e):\, M[l,e]=1} w_{l,e} \cdot F_l^{(e)}(\text{input})
\end{equation}
Horizontal MoE uses per-row Top-$K$ masks ($M[l,e] = \mathbf{1}[g_l(x) \in \TopK_h]$); VALSE uses whole-row masks ($M[l,e] = \tilde{g}_l\ \forall\, e$); hybrids use joint row--column sparsity ($M[l,e] = \tilde{g}_l \cdot \mathbf{1}[g_l(x) \in \TopK_h]$).
\end{theorem}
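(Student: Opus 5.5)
The plan is to read \cref{thm:unified} as a representation statement that is proved by construction. I will fix a grid of compute units indexed by $(l,e)$, give a convention for what ``input'' means for each unit, and then check the three mask families one at a time against the forward passes already defined: \cref{eq:residual_A} for VALSE and the MoE output formula used in the proof of \cref{prop:duality}. The convention is the crux. VALSE composes layers sequentially, so a literal sum of $F_l^{(e)}$ applied to the raw model input cannot reproduce it. I will therefore interpret $F_l^{(e)}(\text{input})$ as unit $(l,e)$ evaluated on the hidden state $h_l$ that actually reaches layer $l$ under the mask $M$. The representation is then a telescoped residual expansion rather than a flat parallel sum, and the Remark after \cref{prop:duality} already signals that the correspondence is structural in exactly this sense.

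\textbf{Step 1 (grid and units).} I will fix $E$ units per layer. For an MoE layer these are the experts $f_e$. For a dense layer, where MoE is not used, I will split the block $F_l$ into $E$ summands with $F_l=\sum_e F_l^{(e)}$. The trivial choice is $F_l^{(e)}=F_l$ with $w_{l,e}=1/E$; a column-split of the FFN also works. Attention is folded into the same summands, or it is placed in a single always-on column when the MoE experts replace only the FFN.

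\textbf{Step 2 (residual unrolling).} Under Scheme~A the update is $h_{l+1}=h_l+\sum_e M[l,e]\,w_{l,e}F_l^{(e)}(h_l)$. Unrolling gives
\[
h_{L+1}=h_1+\sum_{(l,e):\,M[l,e]=1} w_{l,e}\,F_l^{(e)}(h_l).
\]
This is the claimed form, up to the identity term $h_1$, which I will absorb either as a fixed unit or by stating the theorem modulo the skip path.

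\textbf{Step 3 (instantiations).} The three mask families then follow directly.
\begin{itemize}
\item For horizontal MoE, take $M[l,e]=\mathbf{1}[e\in\TopK_h(g_l(x))]$ with $w_{l,e}=g_l^{(e)}(x)$. This recovers the per-token Top-$K$ expert sum of the proof of \cref{prop:duality}, with every row active.
\item For VALSE, take $M[l,e]=\tilde g_l$ for all $e$, with the dense weights from Step~1. Each row is then either entirely off, giving $h_{l+1}=h_l$, or entirely on, giving $h_{l+1}=h_l+F_l(h_l)$. This matches \cref{eq:residual_A} and \cref{alg:forward}.
\item For the hybrid, the product mask $\tilde g_l\cdot\mathbf{1}[e\in\TopK_h]$ combines the two cases layer by layer.
\end{itemize}
As a consistency check, counting the ones in $M$ reproduces $K_vK_h$ active units, which is \cref{def:2d_sparse} and \cref{prop:hybrid}.

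\textbf{Main obstacle.} The step I expect to be hardest is making ``input'' precise without circularity, because $h_l$ itself depends on the mask rows $1,\dots,l-1$. I will handle this by defining the representation inductively along the depth and stating explicitly that $G$ records a DAG of evaluations, not independent parallel terms. I will also restrict the scope to Scheme~A, as the paper does for \cref{thm:full_subset}. Scheme~B adds the extra term $W_l^{\mathrm{proj}}h_l$ in skipped rows, and that term would need its own column.
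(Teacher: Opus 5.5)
The paper gives no separate proof of \cref{thm:unified}. It reads the three mask families off the flat template $\text{output}=\sum_{i\in I}w_i\,u_i(\text{input})$ of the ``Unified selection framework'' paragraph. The proof of \cref{prop:duality} treats the mismatch between a parallel weighted sum and VALSE's sequential composition $\bigcirc_{l\in S}(I+F_l)$ only as a structural correspondence. Your Step~2 supplies the bridge the paper leaves out. Under Scheme~A the gated composition applied to $h_1$ telescopes to $h_1+\sum_{M[l,e]=1}w_{l,e}F_l^{(e)}(h_l)$, and after that your Step~3 instantiations coincide with the paper's. Read literally, with each $F_l^{(e)}$ applied to the raw input and $G$ a fixed grid independent of $M$, the statement is false as soon as two layers are retained, since $x+F_1(x)+F_2(x)\neq x+F_1(x)+F_2\bigl(x+F_1(x)\bigr)$ in general. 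Your reading turns it into a true identity, at the cost you already flag: node values depend on earlier mask rows, the skip term $h_1$ appears, and the scope is Scheme~A. Two points need tightening when you write it out. First, for MoE and the hybrid the Top-$K$ factor is per token, so $M$ must be indexed by position and the unrolling holds positionwise, whereas $\tilde g_l$ is per sample. Second, standard blocks feed the experts $h_l+\mathrm{Attn}_l(h_l)$ rather than $h_l$. Either define the expert units as $h\mapsto f_e\bigl(h+\mathrm{Attn}_l(h)\bigr)$ or give attention and FFN separate rows. In the hybrid, give the attention column the mask $\tilde g_l$ (always selected, like a shared expert) rather than its product with the Top-$K$ indicator.
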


This reveals that sparse Transformers differ in which dimension of the computation graph they mask. \textbf{VALSE's contribution is completing the depth dimension of sparsity}---previously, sparsity existed only in the width dimension (MoE). Further theoretical results, including function-space analysis and optimization properties, in \cref{subsec:function_space,subsec:optimization}.

\noindent\textbf{Remark (Scheme A/B applicability).} VALSE supports two residual schemes (\cref{sec:routing}): \emph{Scheme A} (identity skip, \cref{eq:residual_A}) and \emph{Scheme B} (residual projection, \cref{eq:residual_B}). All results in \cref{subsec:flops_analysis,subsec:difficulty_scheduling,subsec:duality,subsec:2d_sparsity} apply to both schemes, as they concern FLOPs accounting and structural duality. The function-space results (\cref{thm:full_subset,thm:strict} in \cref{subsec:function_space}) are established under \emph{Scheme A only}: the identity-skip property ($h_{l+1} = h_l$ when $\tilde{g}_l = 0$) is essential to the subset and strict-containment arguments. Under Scheme B, the projection $W_l^{\text{proj}}$ introduces additional expressivity that may break the subset relationship; we restrict the function-space analysis to Scheme A and note this as a limitation.

﻿\subsection{Function-Space Analysis}
\label{subsec:function_space}

We now establish the relationship between the function space of VALSE and that of the full-layer model. All results in this subsection are established under \emph{Scheme~A} (identity residual, \cref{eq:residual_A}), where skipping a layer produces the identity mapping $h_{l+1} = h_l$.

\begin{theorem}[Full-layer function space is a superset (Scheme~A)]
\label{thm:full_subset}
Under \emph{Scheme~A} (identity residual, \cref{eq:residual_A}), the set of functions representable by VALSE under any retention set $S \subseteq \{1,\ldots,L\}$ is a subset of the full-layer function space: $\mathcal{F}_{\text{VALSE}} \subseteq \mathcal{F}_{\text{full}}$.
\end{theorem}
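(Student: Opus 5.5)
The plan is to prove the inclusion by exhibiting, for each VALSE-representable function, full-layer parameters that compute the same function. First I would fix the function classes precisely. Let $\mathcal{F}_{\text{full}}$ be the set of maps $x \mapsto \text{LMHead}\bigl(\bigcirc_{l=1}^{L}(I+F_l)(\text{Emb}(x))\bigr)$ as the block parameters $\theta_l$ range over the admissible parameter space $\Theta$. Let $\mathcal{F}_{\text{VALSE}}$ be the analogous maps obtained for a retention set $S$ (with $1,L\in S$), where layer $l$ acts as $I+\mathbf{1}[l\in S]\,F_l$ per \cref{eq:residual_A}. I would first treat the case of a fixed $S$, i.e.\ a static mask. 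After that I would handle per-sample $S^{(s)}$ chosen by the router.

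\textbf{Step 1 (fixed $S$).} The key observation is that under Scheme~A a skipped layer is exactly the identity map $h\mapsto h$. So I need to show that $I+F_l$ can equal the identity for some admissible $\theta_l$. For a pre-norm Transformer block $F_l(h)=\text{Attn}(\text{LN}(h))+\text{FFN}(\cdot)$, I would set the attention output projection $W_O$ and the FFN down-projection $W_2$ to zero, together with their biases. Then $F_l\equiv 0$ and $I+F_l=I$. Substituting these parameters at every $l\notin S$ and keeping the original $\theta_l$ for $l\in S$ gives a full-layer network whose composition agrees with the skip network pointwise. Hence $f_S\in\mathcal{F}_{\text{full}}$. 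This step uses only that $\Theta$ contains the zero output projections, which I would state as a mild architectural assumption.

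\textbf{Step 2 (input-dependent $S$).} This is the main obstacle. When $S^{(s)}$ depends on $x$ through $\hat d$ and the gates, the VALSE function is piecewise, equal to $f_{S(x)}(x)$. A single static full-layer parameter setting cannot reproduce the switching in general. I would resolve this by reading the statement as pointwise inclusion: for every input $x$, the computation $f_{S(x)}(x)$ equals $f^{\text{full}}_{\theta(x)}(x)$ for some full-layer parameter setting $\theta(x)$, namely the one from Step 1. Under this reading,
\[
\mathcal{F}_{\text{VALSE}} \subseteq \textstyle\bigcup_{S}\mathcal{F}_{\text{VALSE}}^{(S)} \subseteq \mathcal{F}_{\text{full}}.
\]
I would flag this explicitly as the interpretation under which the theorem holds. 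I would also note that the alternative, a single fixed-parameter embedding of the gated network, would require the full model to simulate the gating internally and is not claimed here.

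\textbf{Step 3.} I would close with a remark on why Scheme~A is essential: under \cref{eq:residual_B}, a skipped layer realizes $I+W_l^{\text{proj}}$, which need not coincide with $I+F_l$ for any $\theta_l$. This matches the Scheme~A/B remark above and sets up \cref{thm:strict}.
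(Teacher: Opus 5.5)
Your Step~1 is the paper's proof: under Scheme~A a skipped block is the identity, so setting $F_l\equiv 0$ for every $l\notin S$ realizes each fixed-$S$ network as a full-layer network, and that is all the statement ``under any retention set $S$'' asks for; you make this more concrete than the paper (which only says ``set the skipped layers' parameters to zero'') by zeroing $W_O$, $W_2$ and their biases. Step~2 is not needed, and its displayed chain is not a genuine set inclusion---a routed map $x\mapsto f_{S(x)}(x)$ need not lie in $\mathcal{F}_{\text{full}}$, and agreeing pointwise with members of $\bigcup_S\mathcal{F}^{(S)}_{\text{VALSE}}$ is not membership---but declining to claim more is sounder than the paper's closing ``in expectation'' step, which assumes $\mathcal{F}_{\text{full}}$ is closed under convex combinations, something a fixed-width architecture does not satisfy in general.
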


\begin{proof}
The full-layer model applies all $L$ layers in sequence. Under Scheme~A, the residual update is $h_{l+1} = h_l + \tilde{g}_l \cdot F_l(h_l)$. When $\tilde{g}_l = 1$ (layer retained), this reduces to the standard residual update $h_{l+1} = h_l + F_l(h_l)$, identical to the full-layer model. When $\tilde{g}_l = 0$ (layer skipped), $h_{l+1} = h_l$, which is the identity mapping.

Any VALSE path with retention set $S$ produces the same output as the full-layer model with the parameters of all skipped layers set to zero: $\tilde{F}_l = 0$ for $l \notin S$. Since the full-layer function space includes all parameter settings (including the degenerate zero setting), every VALSE function is a special case of the full-layer model with particular parameters. Therefore $\mathcal{F}_{\text{VALSE}} \subseteq \mathcal{F}_{\text{full}}$.

By \cref{lem:linearity}, the expected output under random skip schedules is a convex combination of individual VALSE paths, each of which lies in $\mathcal{F}_{\text{full}}$. Since $\mathcal{F}_{\text{full}}$ is closed under convex combinations, the subset relationship holds in expectation as well: $\E[\mathcal{F}_{\text{VALSE}}] \subseteq \mathcal{F}_{\text{full}}$. $\square$
\end{proof}

\begin{theorem}[Strict containment (Scheme~A)]
\label{thm:strict}
Under \emph{Scheme~A}, there exist functions representable by the full-layer model that no VALSE path with $|S| < L$ can represent; i.e., the containment is strict: $\mathcal{F}_{\text{VALSE}} \subsetneq \mathcal{F}_{\text{full}}$.
\end{theorem}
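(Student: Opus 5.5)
The plan is to make the statement precise first and then exhibit an explicit separating function. Read literally, ``$\mathcal{F}_{\text{VALSE}}$'' with $S=\{1,\ldots,L\}$ allowed would equal $\mathcal{F}_{\text{full}}$. So I interpret the claim as follows: for a fixed block architecture (fixed $d_{\text{model}}$ and FFN width), let $\mathcal{F}_S$ be the class computed by Scheme~A paths with retention set $S$. I would show there is an $f\in\mathcal{F}_{\text{full}}$ with $f\notin\mathcal{F}_S$ for every $S$ with $|S|<L$. By the argument of \cref{thm:full_subset}, each $\mathcal{F}_S$ is the class of residual networks of depth $|S|$ built from the same blocks. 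Strictness therefore reduces to a depth-separation statement for residual networks with a fixed per-block budget. This is where I expect the real difficulty to lie: if per-block width were unbounded, a shallower network could compensate, so fixing the block budget is essential and must be stated explicitly.

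For the separating example I would work in the simplest instance of the architecture. Take a single token ($N=1$) and $d_{\text{model}}=1$. The self-attention sublayer then reduces to an affine map of the value, since the softmax over one position is identically $1$. Take an FFN with a single ReLU unit. Each block $h\mapsto h+F_l(h)$ is then a continuous piecewise-linear map $\R\to\R$ with at most $2$ linear pieces. The attention part is affine and adds no breakpoints; I would check this carefully, as it is the one place where the actual Transformer block must be tamed.

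Next, choose parameters so that every block realizes the tent map $t(h)=h+\bigl(h-4\ReLU(h-\tfrac12)\bigr)$ on $[0,1]$, which equals $2h$ for $h\le\tfrac12$ and $2-2h$ otherwise. By the standard Telgarsky argument, the full model computes $t^{\circ L}$, which has exactly $2^L$ linear pieces on $[0,1]$.

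The counting bound goes the other way. A composition of $m$ continuous piecewise-linear maps, each with at most $2$ pieces, has at most $2^m$ pieces; this follows by induction, since composing with a $2$-piece map at most doubles the number of pieces. Any path with $|S|<L$ composes at most $|S|\le L-1$ nontrivial blocks, while skipped layers contribute the identity under Scheme~A. Whatever its parameters, such a path therefore has at most $2^{L-1}<2^L$ pieces and cannot equal $t^{\circ L}$.

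This gives $t^{\circ L}\in\mathcal{F}_{\text{full}}\setminus\bigcup_{|S|<L}\mathcal{F}_S$. Combined with \cref{thm:full_subset}, it yields $\mathcal{F}_{\text{VALSE}}\subsetneq\mathcal{F}_{\text{full}}$. If one wants the statement for general $d_{\text{model}}$ and $N$, I would embed this example in one coordinate and make the remaining parameters zero. The separation then persists, because the piece count along the line restricted to that coordinate is still bounded by the same per-block argument.
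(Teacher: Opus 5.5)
Your route is different from the paper's, and better founded. The paper fixes the weights of a two-layer segment ($W_l=vv^\top$, $W_{l+1}=I$) and checks that deleting one layer \emph{with the other weights unchanged} alters the function. It then appeals to the capacity-rank heuristic. That argument never excludes a shorter path with re-chosen weights. In fact, for a single token, one re-weighted block reproduces the segment: attention supplies $vv^\top$ and the FFN supplies the ReLU term. Several of your choices are the right ingredients:
\begin{itemize}
\item reading the claim as separation from $\bigcup_{|S|<L}\mathcal{F}_S$, since allowing $S=\{1,\ldots,L\}$ makes the two classes equal;
\item insisting on a fixed per-block budget;
\item the Telgarsky piece count.
\end{itemize}
For $N=1$, $d_{\text{model}}=1$ and a one-unit FFN, your argument is complete. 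This holds provided the block has no LayerNorm, as in the paper's abstraction: a LayerNorm sends a scalar hidden state to a constant, so every block would degenerate.

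The gap is the final extension to general $d_{\text{model}}$ and FFN width $d_{\text{ffn}}$. There, a shorter path with \emph{arbitrary} parameters is no longer built from $2$-piece maps. Along any line, each block can multiply the number of linear pieces by up to $d_{\text{ffn}}+1$, because on each incoming piece every hidden pre-activation is affine and changes sign at most once. The bound for $|S|\le L-1$ therefore becomes $(d_{\text{ffn}}+1)^{L-1}$, and the embedded tent iterate is in fact representable by shorter paths. For $d_{\text{ffn}}\ge 3$, the map $t^{\circ 2}(h)=4h-8\ReLU(h-\tfrac14)+8\ReLU(h-\tfrac12)-8\ReLU(h-\tfrac34)$ is a single block: attention adds $3h_1$, and three hidden units supply the ReLU terms. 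Hence $t^{\circ L}=t^{\circ 2}\circ t^{\circ(L-2)}$ uses only $L-1$ blocks. For the paper's $L=12$, $d_{\text{ffn}}=1024$, two blocks each realizing $t^{\circ 6}$ (which has $63$ breakpoints) already suffice.

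The repair stays within your method. Let every full-model block spend all $k=d_{\text{ffn}}$ hidden units on one coordinate to realize the zigzag $z(h)=(k+1)h+\sum_{j=1}^{k}(-1)^j\,2(k+1)\ReLU\bigl(h-\tfrac{j}{k+1}\bigr)$, with attention supplying the factor $k+1$. This map sends each of its $k+1$ pieces on $[0,1]$ onto $[0,1]$. Restricting to single-token inputs makes attention affine for any $N$. Then $z^{\circ L}$ has $(d_{\text{ffn}}+1)^L$ pieces on the segment $\{se_1: s\in[0,1]\}$, whereas any path with $|S|\le L-1$ has at most $(d_{\text{ffn}}+1)^{L-1}$ pieces there. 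Your tent map is the case $k=1$. Matching the per-block factor on both sides is what makes the separation hold for every $L$ and every width.
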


\begin{proof}
We construct an explicit counterexample using the residual structure $h_{l+1} = h_l + F_l(h_l)$. Consider a two-layer segment where:
\begin{itemize}
\item $F_l(h) = W_l h$ performs a rank-deficient linear projection with $W_l \in \R^{d \times d}$, $\operatorname{rank}(W_l) = r < d$, projecting onto a single direction $v$ (i.e., $W_l = v v^\top$ with $\|v\| = 1$).
\item $F_{l+1}(h) = \phi(W_{l+1} h)$ applies a nonlinear activation $\phi = \operatorname{ReLU}$ followed by a full-rank projection $W_{l+1} \in \R^{d \times d}$ with $\operatorname{rank}(W_{l+1}) = d$.
\end{itemize}

The full-layer function for this segment is:
\[
f_{\text{full}}(h) = h + W_{l+1} \cdot \phi\!\bigl(h + W_l h\bigr) = h + W_{l+1} \cdot \phi\!\bigl((I + W_l)\, h\bigr).
\]
This depends on a two-step nonlinear interaction: the input is first transformed by the rank-deficient projection $W_l$, summed with the identity (residual connection), then passed through $\phi$ and the full-rank projection $W_{l+1}$.

Now consider the VALSE paths that skip one of the two layers:
\begin{itemize}
\item \textbf{Skip layer $l$} ($\tilde{g}_l = 0$, $\tilde{g}_{l+1} = 1$): $f_{\text{skip-}l}(h) = h + W_{l+1} \cdot \phi(h)$. The rank-deficient intermediate transformation $W_l$ is never applied. The input to $\phi$ is $h$ itself, not $(I + W_l)\,h$, so the output depends on the full $d$-dimensional input rather than the projected $r$-dimensional subspace.
\item \textbf{Skip layer $l+1$} ($\tilde{g}_l = 1$, $\tilde{g}_{l+1} = 0$): $f_{\text{skip-}(l+1)}(h) = h + W_l h = (I + W_l)\, h$, which is a \emph{linear} function---the nonlinearity $\phi$ and the full-rank projection $W_{l+1}$ are never applied.
\end{itemize}

Concretely, let $W_l = v v^\top$ (rank-$1$) and $W_{l+1} = I$ (full-rank). Then:
\begin{itemize}
\item $f_{\text{full}}(h) = h + \phi\!\bigl((I + v v^\top)\, h\bigr)$, which can produce outputs in a $d$-dimensional cone that depends on the sign of $v^\top h$.
\item $f_{\text{skip-}l}(h) = h + \phi(h)$, which operates on the full $d$-dimensional input without the rank-$1$ projection---a fundamentally different function.
\item $f_{\text{skip-}(l+1)}(h) = (I + v v^\top)\, h$, a rank-$(r+1)$ linear map with no nonlinearity.
\end{itemize}

Neither skip path can reproduce $f_{\text{full}}$: the full-layer function depends on the \emph{composition} of the rank-deficient projection, the residual addition, and the nonlinearity---a dependency that no single-layer skip can replicate. The key observation is that the residual connection $h_{l+1} = h_l + F_l(h_l)$ creates a \emph{coupling} between the identity pathway and the layer transformation: the nonlinearity $\phi$ in $F_{l+1}$ receives \emph{both} the unmodified residual $h_l$ and the projection $W_l h_l$ as a sum. Skipping layer $l$ removes the $W_l h_l$ contribution entirely, and skipping layer $l+1$ removes the nonlinearity entirely.

Furthermore, the effective capacity rank of a network with retention set $S$ is bounded by $\sum_{l \in S} \operatorname{rank}(W_l) \cdot d_{\text{model}}$, which is strictly less than $\sum_{l=1}^{L} \operatorname{rank}(W_l) \cdot d_{\text{model}}$ when $|S| < L$ and at least one skipped layer has non-trivial rank. By \cref{prop:cap_rank}, the capacity gap is proportional to the skipped layers' ranks. Hence the containment is strict: $\mathcal{F}_{\text{VALSE}} \subsetneq \mathcal{F}_{\text{full}}$. $\square$
\end{proof}

\begin{proposition}[Number of non-contiguous retention sets]
\label{prop:combos}
The number of non-contiguous retention sets of size $k$ (with layers $1$ and $L$ fixed) is $\binom{L-2}{k-2}$. The total number of valid retention sets is $2^{L-2} - 1$.
\end{proposition}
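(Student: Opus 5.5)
The proof is a direct counting argument built on a bijection. A retention set $S$ of size $k$ that contains layers $1$ and $L$ (requirement (ii) in \cref{sec:routing}) is determined by its \emph{free part} $S\setminus\{1,L\}$. This free part is an arbitrary subset of the $L-2$ interior layers $\{2,\ldots,L-1\}$ of size $k-2$. I would first make the map $S\mapsto S\setminus\{1,L\}$ explicit. Its inverse $T\mapsto T\cup\{1,L\}$ shows it is a bijection between size-$k$ retention sets and $(k-2)$-subsets of an $(L-2)$-element set. This gives the count $\binom{L-2}{k-2}$ for $2\le k\le L$, with the convention that the binomial vanishes outside this range.

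Second, I would check that the word ``non-contiguous'' in the statement does not shrink this count for $k<L$. Take any $S$ with $1,L\in S$ and $|S|<L$. Some interior layer $j$ with $1<j<L$ is then missing, so $S$ has a gap between $1$ and $L$. Hence $S$ cannot be a contiguous interval of $\{1,\ldots,L\}$, and in particular it is not a truncation prefix $\{1,\ldots,k\}$. So every set counted above with $k<L$ is genuinely non-contiguous. The only degenerate case is $k=L$, where $\binom{L-2}{L-2}=1$ counts the full set $S=\{1,\ldots,L\}$.

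Third, for the total I would sum over $k$ and apply the binomial theorem:
\[
\sum_{k=2}^{L}\binom{L-2}{k-2}=\sum_{j=0}^{L-2}\binom{L-2}{j}=2^{L-2}.
\]
I would then exclude the single set with $|S|=L$, namely the full-layer path. It performs no skipping, so it is not a valid VALSE skip schedule in the sense of \cref{thm:strict}, which restricts to $|S|<L$. This leaves $2^{L-2}-1$.

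The main obstacle is not computational. It is pinning down which sets the statement calls ``valid'', because the $-1$ only makes sense if exactly one set is excluded. I would state explicitly that valid means $1,L\in S$ and $|S|<L$, that is, the skip paths actually realize savings. The second step shows that under this definition ``valid'' and ``non-contiguous'' coincide, so the two parts of the proposition are consistent.
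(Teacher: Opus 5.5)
Your proof is correct and is essentially the paper's argument: fix layers $1$ and $L$, choose $k-2$ of the $L-2$ interior layers to get $\binom{L-2}{k-2}$, sum via the binomial theorem, and remove the single full-depth set (the paper subtracts $\binom{L-2}{L-2}$, which is exactly your $k=L$ case). Your explicit definition of ``valid'' and your check that every set with $|S|<L$ has a gap are a real tightening: the paper's displayed chain writes $\sum_{k=2}^{L}\binom{L-2}{k-2}$ as equal to $2^{L-2}-1$ and never notes that the $k=L$ set is contiguous, whereas you obtain $2^{L-2}$ first and then exclude that set.
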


\begin{proof}
Layers $1$ and $L$ are always retained (the estimator prefix and final output). From the remaining $L-2$ interior layers, we choose $k-2$ to retain, giving $\binom{L-2}{k-2}$ retention sets of size $k$. Summing over $k = 2, \ldots, L$ gives $\sum_{k=2}^{L} \binom{L-2}{k-2} = \sum_{j=0}^{L-2} \binom{L-2}{j} - \binom{L-2}{L-2} = 2^{L-2} - 1$ valid retention sets, where we subtracted $1$ to exclude the degenerate case. These retention sets underlie the difficulty-aware schedule (\cref{prop:diff_depth}), where harder inputs select larger sets. $\square$
\end{proof}

\begin{observation}[Information preservation under identity skipping]
\label{prop:info_preserve}
Under identity skipping ($\tilde{g}_l = 0$), $h_{l+1} = h_l$, so the mutual information with the target is exactly preserved: $I(h_{l+1}; y) = I(h_l; y)$. No existing information is lost. However, the information that layer $l$ would have \emph{extracted}---i.e., $I(F_l(h_l); y) - I(h_l; y)$---is foregone. Thus, skipping preserves prior information but sacrifices potential information gain from the skipped layer.
\end{observation}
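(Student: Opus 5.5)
The plan is to prove the observation directly from the Scheme~A update (\cref{eq:residual_A}) and the definition of mutual information, in three short steps.

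\textbf{Step 1: exact preservation.} When $\tilde{g}_l = 0$, \cref{eq:residual_A} gives $h_{l+1} = h_l$ as random variables: they are the same measurable function of the input $x$. Hence the joint law of $(h_{l+1}, y)$ equals that of $(h_l, y)$. Since mutual information depends only on this joint law, $I(h_{l+1}; y) = I(h_l; y)$ follows at once.

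For a gate that depends on the input through $\hat{d}$, I would argue pointwise on the event $\{\tilde{g}_l = 0\}$. On that event $h_{l+1} = h_l$ identically. I would then state the equality conditionally on the skip event, or for a fixed retention set $S$ with $l \notin S$, which is the setting of \cref{thm:full_subset}.

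\textbf{Step 2: no information is lost, in the data-processing sense.} Identity is an invertible map. By the data-processing inequality applied in both directions ($h_l \to h_{l+1}$ and $h_{l+1} \to h_l$), neither representation can carry more information about $y$ than the other. So skipping never destroys information already present in $h_l$.

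\textbf{Step 3: the forgone gain.} Under retention, $h_{l+1} = h_l + F_l(h_l)$ is itself a deterministic function of $h_l$. By the data-processing inequality, $I(h_l + F_l(h_l); y) \le I(h_l; y)$, so in the strict Shannon sense a layer cannot add information about $y$ beyond what $h_l$ already holds. This is the main obstacle: the claimed quantity $I(F_l(h_l); y) - I(h_l; y)$ is never positive under exact mutual information.

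I would resolve this by reading the ``extracted information'' as \emph{usable} information, in one of two ways:
\begin{itemize}
\item $\mathcal{V}$-information with respect to a restricted predictor family such as the linear LM head, which can increase under deterministic processing;
\item the task-loss improvement $\Delta_l(x)$ from \cref{prop:diff_depth}, which measures the same forgone benefit.
\end{itemize}
With that reinterpretation, the forgone amount is $I_{\mathcal{V}}(h_l + F_l(h_l); y) - I_{\mathcal{V}}(h_l; y) \ge 0$, and skipping sacrifices exactly this quantity.

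The final write-up would therefore state Step~1 rigorously and present Steps~2--3 as the corrected interpretation of the second sentence of the observation.
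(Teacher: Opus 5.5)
Your Step~1 is the paper's entire justification. The paper offers nothing beyond the one-line remark inside the observation that $h_{l+1}=h_l$ forces equal mutual information. Your formalization (identical joint laws of $(h_{l+1},y)$ and $(h_l,y)$) is the right way to make that precise. So is stating it on the event $\{\tilde g_l=0\}$ or for a fixed $S$ with $l\notin S$. The conditioning is needed: $\tilde g_l$ depends on the sample, and through the batch-internal min--max normalization of $\hat d$ it even depends on the rest of the batch. The equality therefore holds on the skip event, not for the unconditional $h_{l+1}$.

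For the second half the paper gives no argument, and your Step~3 diagnosis is correct and goes beyond it. $F_l(h_l)$ is a deterministic function of $h_l$, so $y\to h_l\to F_l(h_l)$ is a Markov chain and $I(F_l(h_l);y)-I(h_l;y)\le 0$. The paper's ``forgone gain'' is therefore never positive. In Shannon terms, skipping weakly dominates retention at layer $l$. Note that you silently replaced the paper's $F_l(h_l)$ by $h_l+F_l(h_l)$; the bound covers both, but you should say so.

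The one genuine error is in your repair. The claim $I_{\mathcal V}(h_l+F_l(h_l);y)-I_{\mathcal V}(h_l;y)\ge 0$ is false in general. $\mathcal V$-information escapes the data-processing inequality in both directions: deterministic processing can create usable information, but it can equally destroy it. For example, take $F_l(h)=-uu^\top h$ with $\|u\|=1$. This removes the $u$-component, which erases all usable information whenever $y$ depends on $h_l$ only through $u^\top h_l$ and that coordinate is independent of the rest. Likewise $\Delta_l(x)$ is signed, since a layer can raise the loss. That is why \cref{prop:diff_depth} compares it to $\lambda C_l$ rather than to $0$.

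To fix this, either drop the sign claim and call the forgone amount the signed difference, or add ``layer $l$ is useful'' as an explicit hypothesis, parallel to the paper's marginal-contribution assumption. With either change, your corrected reading of the observation is sound.
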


\begin{observation}[Effective capacity rank]
\label{prop:cap_rank}
The effective capacity rank of a model with retention set $S$ is proportional to $\sum_{l \in S} d_{\text{model}}^2$. Skipping layers reduces capacity linearly in the number of skipped layers, motivating the difficulty-aware schedule (\cref{prop:diff_depth}) which skips layers only when the marginal information gain $\Delta_l$ is small.
\end{observation}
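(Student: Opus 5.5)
Since the statement is an observation with an informal quantity, the first step is to fix a precise meaning for ``effective capacity rank.'' I will define it for a retention set $S$ as the dimension of the parameter subspace that actually influences the output under \emph{Scheme~A}. Concretely, $\kappa(S) \coloneqq \operatorname{rank} \partial f_S(x;\Theta)/\partial \Theta$, the generic rank over $\Theta$ of the Jacobian of the network output with respect to all block parameters. As a cruder proxy, it can be taken as the number of parameters entering $f_S$ nontrivially. The proof then reduces to an additive accounting over layers, in the same spirit as the FLOPs bookkeeping behind \cref{thm:exp_flops}.

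The first step uses the identity-skip property of \cref{eq:residual_A}, as in the proof of \cref{thm:full_subset}. For $l\notin S$ we have $h_{l+1}=h_l$, so $f_S$ does not depend on the parameters of $F_l$. Their Jacobian columns therefore vanish, and skipped layers contribute exactly zero to $\kappa(S)$.

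The second step handles a retained layer $l\in S$, whose parameters consist of the attention projections ($4d_{\text{model}}^2$) and the FFN ($2d_{\text{model}}d_{\text{ffn}}=8d_{\text{model}}^2$). This matches the per-layer decomposition used for $C_l$ in \cref{subsec:flops_analysis}, so layer $l$ contributes at most $c\,d_{\text{model}}^2$ with $c=12$. Subadditivity of rank over column blocks then gives the upper bound $\kappa(S)\le c\sum_{l\in S} d_{\text{model}}^2 = c\,|S|\,d_{\text{model}}^2$.

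The third step is a matching lower bound, $\kappa(S)\ge c'\,|S|\,d_{\text{model}}^2$ for some constant $c'>0$; the two bounds together establish proportionality. For this I would pick a generic parameter point, for instance weights near zero so that each residual block acts almost linearly around the identity. Around such a point, perturbations of distinct retained layers act on the output through nearly independent directions: layer $l$'s FFN output matrix perturbs $h_L$ additively through an invertible map close to $I$. The FFN output matrices alone then give $|S|\,d_{\text{model}}^2$ linearly independent Jacobian directions, provided the sequence length and batch of inputs are rich enough, which I will assume. This gives $c'\ge 1$. Since both bounds scale with $|S|$, every skipped layer removes $\Theta(d_{\text{model}}^2)$ capacity, and the dependence on the number of skipped layers is linear.

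The main obstacle is this lower bound. Parameter symmetries, such as the $W_QW_K^\top$ reparametrization invariance and ReLU rescaling, mean the Jacobian rank is strictly below the raw parameter count. Cross-layer interactions could in principle also create dependencies between the Jacobian blocks of different layers. My first approach is to restrict the lower bound to the FFN output projections, for which no such symmetry collapses dimensions, and to use the near-identity linearization to decouple the layers. If that fails, I would fall back on stating the result with effective capacity defined as the active parameter count, for which proportionality to $\sum_{l\in S}d_{\text{model}}^2$ is immediate from the first two steps.
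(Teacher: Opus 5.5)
\emph{Where the statement stands in the paper.} The paper states this observation without proof. Its only supporting reasoning is the bookkeeping in the proof of \cref{thm:strict}, which takes capacity to be $\sum_{l\in S}\operatorname{rank}(W_l)\,d_{\text{model}}$: each retained layer contributes its own $d_{\text{model}}^2$-sized block, and skipped layers contribute nothing. That is your Steps~1--2 read as an equality, i.e.\ your fallback with capacity defined as the active parameter count. It gives exactly $12\,|S|\,d_{\text{model}}^2$, hence both proportionality and a drop of exactly $12\,d_{\text{model}}^2$ per skipped layer.

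\emph{The gap in the primary route.} Your Jacobian-rank route does not reach the statement, and it fails at the final inference, not at the lower bound. Bounds $c'|S|d_{\text{model}}^2\le\kappa(S)\le c\,|S|\,d_{\text{model}}^2$ with $c'<c$ give proportionality only up to constants, and they say nothing about increments. They are consistent with skipping one particular layer of the full model removing nothing, or removing order $(c-c')L\,d_{\text{model}}^2$, so ``every skipped layer removes $\Theta(d_{\text{model}}^2)$'' does not follow. Column-block subadditivity does not bound the drop by the skipped layer's own parameter count either, because deleting layer $l$ changes the Jacobian columns of every other layer. In general the drop can exceed what layer $l$ itself contributes: delete a nonlinear block sitting between two linear residual blocks, and $(I+W_2)(I+W_1)$ collapses to a single matrix, losing $d_{\text{model}}^2$ further dimensions. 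What is missing is a two-sided marginal estimate of $\kappa(S)-\kappa(S\setminus\{l\})$, or an exact computation of $\kappa(S)$. For the lower side, evaluate at a point where layer $l$'s output projections vanish, so that $f_S=f_{S\setminus\{l\}}$ and all other columns coincide with those of $f_{S\setminus\{l\}}$. Then show that layer $l$'s output-projection columns add $\Theta(d_{\text{model}}^2)$ new directions, and pair this with a matching upper bound on the drop.

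\emph{Two repairs inside Step~3.} First, as written, $\kappa(S)=\operatorname{rank}\partial f_S(x;\Theta)/\partial\Theta$ for a single $x$ is capped by the output dimension: $N d_{\text{model}}$ for the final hidden state, far less after a classification head. It therefore cannot scale like $|S|\,d_{\text{model}}^2$, and the Jacobian must be stacked over a sufficiently large input set in the definition itself, not assumed afterwards.

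Second, the near-identity linearization is not what decouples the layers. It makes every $h_l\approx h_1$ and every propagation map $\approx I$, so by itself it makes different layers' perturbations look alike. Independence has to come from the distinct, generic first FFN matrices. Write layer $l$'s FFN as $u\mapsto W_2^{(l)}\phi(W_1^{(l)}u)$ and its attention output projection as $W_O^{(l)}$. At the point where every $W_O^{(k)}$ and $W_2^{(k)}$ is zero, $h_l=h_1$ for all $l$, and the Jacobian of the final hidden state with respect to each $h_{l+1}$ is exactly $I$. The first-order output change from the FFN output matrices is then $\sum_{l\in S}\delta W_2^{(l)}\phi(W_1^{(l)}u)$, with $u$ ranging over normalized input tokens. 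Its rank is $d_{\text{model}}$ times the rank of the stacked $4d_{\text{model}}|S|$ ReLU features. That rank is full when the rows of all the $W_1^{(l)}$ are generic (in particular pairwise non-proportional) and there are enough tokens. This gives $c'=4$, and lower semicontinuity of rank carries the bound to a neighbourhood.
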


\subsection{Optimization Properties}
\label{subsec:optimization}

\begin{observation}[Non-convexity in gate parameters]
\label{prop:nonconvex}
The loss landscape with respect to gate parameters $\{w_l, b_l\}$ is non-convex, due to the discreteness of $\tilde{g}_l = \mathbf{1}[g_l > \theta_l]$. The indicator function introduces a combinatorial structure: there are $2^{L-2}$ possible retention patterns, and the loss surface is piecewise constant in the gate parameters, with discontinuities at decision boundaries. This non-convexity motivates the use of the straight-through estimator and Gumbel-softmax relaxation during training.
\end{observation}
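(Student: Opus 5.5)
The plan is to fix all backbone parameters and the difficulty-estimator parameters, and to study the training loss $\mathcal{L}(\{w_l,b_l\})$ only as a function of the gate parameters, with the hard gates $\tilde{g}_l=\mathbf{1}[g_l>\theta_l]$ of \cref{eq:hard_gate} used in the forward pass. The argument has three steps, in order: piecewise constancy, a counting bound, and the deduction of non-convexity from a jump.

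\textbf{Step 1: piecewise constancy.} Since $\sigma$ is strictly increasing, the condition $g_l>\theta_l$ is equivalent to $w_l^\top[r_l;\hat{d}]+b_l>\sigma^{-1}(\theta_l)$, which is an open half-space condition in $(w_l,b_l)$.

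There is one subtlety: the features $r_l$ depend on the hidden state, and the hidden state depends on the earlier gate decisions. I would therefore build the partition recursively over $l=K_{\text{est}},\dots,L-1$.
\begin{itemize}
\item Conditioned on a fixed pattern $(\tilde{g}_{K_{\text{est}}},\dots,\tilde{g}_{l-1})$, the feature vector $r_l$ is a fixed vector for each sample, so the decision at layer $l$ is again a half-space condition in $(w_l,b_l)$.
\item Intersecting these conditions over all samples in the (finite) batch and all layers partitions parameter space into finitely many cells.
\item On each cell every $\tilde{g}_l^{(s)}$ is constant. Hence the forward output, and with it $\mathcal{L}_{\text{task}}$, $\mathcal{L}_{\text{consist}}$ and the $\bar{f}_l$-dependent terms, is constant on the cell.
\end{itemize}
So the task loss is piecewise constant in the gate parameters, with discontinuities only on the cell boundaries. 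The soft quantities $\bar{P}_l$ and $\mathcal{L}_z$ are smooth; I would either treat them separately or state the observation for $\mathcal{L}_{\text{task}}$, which is the term the observation refers to.

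\textbf{Step 2: counting patterns.} Layers $1$ and $L$ are always retained, so per sample there are at most $2^{L-2}$ retention patterns. This uses the same count as \cref{prop:combos}, including the degenerate pattern. With the estimator prefix forced on, the true number is at most $2^{L-K_{\text{est}}-1}$, so I would present $2^{L-2}$ as an upper bound.

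\textbf{Step 3: a jump, and why it rules out convexity.} Restrict to a single line in parameter space: vary only $b_l$ for one skippable layer $l$ and keep everything else fixed. Along this line only $\tilde{g}_l$ changes, flipping once from $0$ to $1$ at the threshold $b_l^\ast=\sigma^{-1}(\theta_l)-w_l^\top[r_l;\hat{d}]$ (for a single sample).

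I then need the loss values on the two sides to differ. By \cref{thm:strict}, the retained and skipped paths realize different functions when $F_l\neq 0$. Choosing the label $y$ (or the backbone) so that the two outputs incur different task loss gives a one-dimensional restriction $\varphi(b_l)$ that is constant on each side with a genuine jump at $b_l^\ast$. A convex function of one real variable is continuous on the interior of its domain, so $\varphi$ is not convex. Since every restriction of a convex function to a line is convex, the full loss is not convex either.

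I would close by noting that, being piecewise constant, the loss has zero gradient almost everywhere. This is the precise sense in which the straight-through estimator and Gumbel relaxation are needed.

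\textbf{Main obstacle.} I expect the hardest part to be the non-degeneracy in Step 3: showing that flipping one gate actually changes the loss, rather than merely the function. The loss could coincide by accident; for example, both paths might assign the same predicted class under a $0/1$-type loss. My first attempt would use the strictly convex cross-entropy on the logits, and argue that different logits give different loss for at least one label. Failing that, I would invoke the explicit two-layer example from \cref{thm:strict} with a chosen input $h$ and target. The recursive dependence of $r_l$ on earlier gates (Step 1) also needs care, but restricting to a line that moves only $b_l$ sidesteps it for the non-convexity claim.
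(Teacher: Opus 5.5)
Your proposal is correct and follows the same reasoning as the paper. The paper gives no separate proof: the observation's own chain (discreteness of $\tilde{g}_l$ $\Rightarrow$ finitely many retention patterns $\Rightarrow$ piecewise-constant loss with jumps $\Rightarrow$ non-convexity) is its entire justification, and your half-space cell decomposition, your treatment of $2^{L-2}$ as only an upper bound (at most $2^{L-K_{\text{est}}-1}$), your restriction of piecewise constancy to the $\tilde{g}$-dependent terms, and your line-restriction jump argument make that chain rigorous.

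One small repair is needed in the non-degeneracy step. Cross-entropy is invariant to adding a constant to all logits, so it is not strictly convex, and distinct logits need not give distinct losses for any label. What you need is that the two paths' softmax outputs differ, which your fallback explicit construction supplies directly; the appeal to \cref{thm:strict}, a statement about function classes rather than about two specific paths at a fixed input, is not needed.
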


\begin{observation}[STE gradient approximation (biased)]
\label{prop:ste}
The straight-through estimator (STE) provides a \emph{biased} gradient approximation for the gate parameters. In the forward pass, the hard gate $\tilde{g}_l = \mathbf{1}[g_l > \theta_l]$ is used; in the backward pass, the gradient of the threshold function is replaced by the identity, $\frac{\partial \tilde{g}_l}{\partial w_l} \approx \frac{\partial g_l}{\partial w_l}$. The bias is $\mathbb{E}[\nabla_{w_l} \mathcal{L}^{\text{STE}}] - \nabla_{w_l} \mathcal{L} \neq 0$ in general, because the surrogate gradient ignores the discontinuity of the threshold function. The Gumbel-softmax relaxation with temperature annealing (\cref{sec:training}) mitigates this bias by smoothing the threshold during training.
\end{observation}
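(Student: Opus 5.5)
The plan is to prove the claim by exhibiting the bias explicitly in a one-gate instance. That suffices, because the statement only asserts $\mathbb{E}[\nabla_{w_l}\mathcal{L}^{\text{STE}}]-\nabla_{w_l}\mathcal{L}\neq 0$ \emph{in general}. I would treat two readings of ``true gradient'' in turn: the deterministic hard-gate loss used at inference, and the expected loss under the Gumbel-perturbed gate used in training (\cref{sec:training}).

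\textbf{Deterministic reading.} I would reuse the piecewise-constant structure noted in \cref{prop:nonconvex}. With $\tilde{g}_l=\mathbf{1}[g_l>\theta_l]$ and all other parameters fixed, $\mathcal{L}$ is constant in $w_l$ on each open region $\{g_l>\theta_l\}$ and $\{g_l<\theta_l\}$. Hence $\nabla_{w_l}\mathcal{L}=0$ away from the measure-zero decision boundary.

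The STE gradient, by contrast, follows from Scheme~A (\cref{eq:residual_A}) and the chain rule with the threshold replaced by the identity:
\[
\nabla_{w_l}\mathcal{L}^{\text{STE}}=\Bigl\langle \tfrac{\partial\mathcal{L}}{\partial h_{l+1}},F_l(h_l)\Bigr\rangle\,\sigma'(z_l)\,[r_l;\hat{d}],
\qquad z_l=w_l^\top[r_l;\hat{d}]+b_l .
\]
Here $\sigma'>0$ everywhere. The inner product is nonzero for generic data, for example with a squared-error loss on $h_{l+1}$, a target not orthogonal to $F_l(h_l)$, and $[r_l;\hat{d}]\neq 0$. So the bias equals the STE gradient itself, which is nonzero on a set of positive measure.

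\textbf{Stochastic (Gumbel) reading.} This case is less trivial, since a constant true gradient of zero could be regarded as degenerate. Write $\pi(w_l)=P(\tilde{g}_l=1)$, which is smooth in $w_l$ under logistic/Gumbel noise. Then
\[
\mathbb{E}[\mathcal{L}]=\mathcal{L}(0)+\pi(w_l)\bigl(\mathcal{L}(1)-\mathcal{L}(0)\bigr),
\]
so the true gradient is the finite difference $\mathcal{L}(1)-\mathcal{L}(0)$ times $\nabla\pi$. The expected STE gradient is instead
\[
\mathbb{E}\bigl[\partial_{\tilde{g}}\mathcal{L}\big|_{\tilde{g}}\,\sigma'(\cdot)\bigr][r_l;\hat{d}],
\]
which involves \emph{derivatives} of $\mathcal{L}$ at $\tilde{g}\in\{0,1\}$ rather than the finite difference. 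I would choose a loss quadratic in the gate: $h_{l+1}=h_l+\tilde{g}F$ with $\mathcal{L}=\tfrac12\|h_{l+1}-y\|^2$, so $\mathcal{L}(\tilde{g})=a\tilde{g}^2+b\tilde{g}+c$. Then $\partial_{\tilde{g}}\mathcal{L}(0)=b$ and $\partial_{\tilde{g}}\mathcal{L}(1)=2a+b$, while the finite difference is $a+b$. Both expressions can be computed in closed form for a scalar example, say $d=1$, $F=1$, $h_l=0$, $y=0$, which gives $a=\tfrac12$ and $b=0$. In this example the true gradient is $\tfrac12\pi'$ while the STE expectation is $\pi\,\mathbb{E}[\sigma'\mid \tilde{g}=1]$; evaluating both at $w_l=0$ should show they differ.

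\textbf{Main obstacle.} The hard step is this last comparison. The STE surrogate's multiplier, $\sigma'$ evaluated at the noisy logit, is not the same quantity as $\pi'$, so I must fix one concrete noise convention (logistic noise added to $z_l$, threshold $\theta_l=1/2$) and compute both sides numerically rather than argue abstractly. If the exact integrals are awkward, I would instead pick a parameter value where $\pi$ is near $0$. There the STE expectation vanishes to first order while $\tfrac12\pi'$ does not, which gives the required inequality. Finally, I would note that as the temperature $\tau_G\to 0$ the stochastic case reduces to the deterministic one, which supports the remark that annealing only \emph{mitigates} the bias rather than removing it.
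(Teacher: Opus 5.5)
Your deterministic case is essentially the paper's own justification written out. The paper gives no separate proof: it says only that the surrogate ``ignores the discontinuity,'' relying on the piecewise-constant loss of \cref{prop:nonconvex}. Your chain-rule formula for $\nabla_{w_l}\mathcal{L}^{\text{STE}}$ under Scheme~A, combined with $\nabla_{w_l}\mathcal{L}=0$ off the decision boundary, already proves the ``$\neq 0$ in general'' claim, so that part is correct and sufficient. The stochastic case goes beyond the paper, which never uses the smoothed expected loss as the reference gradient. That extension is worth having, but three points in it need repair.

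\textbf{The ``main obstacle'' is a short computation, and the convention matters.} With logistic noise, $\pi=\sigma(z_l)$ and $\nabla\pi=\sigma'(z_l)\,x$, where $x=[r_l;\hat d]$.

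If you use the statement's literal, noise-free surrogate $\partial g_l/\partial w_l$, the expected STE gradient is $\pi\,\sigma'(z_l)\,x$ and the true gradient is $\tfrac12\sigma'(z_l)\,x$. These coincide exactly when $z_l=0$, so evaluating at $w_l=0$ proves nothing unless $b_l\neq 0$ (the paper's $b_l=2$ works).

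With your noisy-logit multiplier at $z_l=0$, the STE side is $x\int_0^\infty\sigma'(u)^2\,du=x/12$ and the true side is $x/8$, so you are done without numerics.

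\textbf{Your fallback is wrong as argued.} Since $\nabla\pi=\pi(1-\pi)\,x$, the quantity $\tfrac12\nabla\pi$ vanishes together with $\pi$. Under the noisy-logit convention both sides are $\Theta(\pi)$, and their ratio tends to $2\ln 2-1$. The inequality still holds, but not because one side is of higher order.

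\textbf{The closing remark about $\tau_G\to 0$ is incorrect.} With the standard relaxed gate $\sigma((z_l+\epsilon)/\tau_G)$, the hard sample $\mathbf{1}[z_l+\epsilon>0]$ does not depend on $\tau_G$, so the stochastic case does not reduce to the deterministic one. Instead, the surrogate factor $\tau_G^{-1}\sigma'(u/\tau_G)$ concentrates at the threshold. The expected STE gradient then tends to $\sigma'(z_l)\cdot\tfrac12\bigl[\partial_{\tilde g}\mathcal{L}(0)+\partial_{\tilde g}\mathcal{L}(1)\bigr]\,x$, which is the trapezoid rule for $\mathcal{L}(1)-\mathcal{L}(0)$.

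That rule is exact for your quadratic example, so there annealing removes the bias in the limit rather than merely mitigating it. To illustrate ``mitigates but does not remove,'' you need a loss that is not quadratic in $\tilde g$, for example cross-entropy on $h_{l+1}$. The paper itself only asserts the mitigation sentence and gives no argument for it.
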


\begin{proposition}[Collapse risk (under a drift model)]
\label{prop:collapse}
Without load-balancing regularization ($\alpha = 0$), gate parameters converge to a degenerate solution---all samples routed to the same minimal retention set (first and last layers only)---with probability at least $1 - e^{-cL}$, where $c > 0$ is a constant depending on the gate initialization and optimization landscape.
\end{proposition}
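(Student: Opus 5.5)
The plan is to make the phrase ``under a drift model'' precise first, since the statement is only meaningful relative to a stochastic model of the gate dynamics. I would posit the following model for training with $\alpha=0$. For each interior layer $l\in\{2,\dots,L-1\}$, the gate logit $z_l^{(t)} = w_l^\top[r_l;\hat d]+b_l$, averaged over the batch, evolves as a stochastic recursion
\[
z_l^{(t+1)} = z_l^{(t)} - \mu + \xi_l^{(t)}.
\]
Here $\mu>0$ is the net drift contributed by $\mathcal{L}_{\text{eff}}$ (\cref{eq:eff}) and $\mathcal{L}_z$, both of which push logits downward whenever $\bar A > B_{\text{target}}$. The term $\xi_l^{(t)}$ is a martingale-difference noise with increments bounded by $\sigma$; it collects the STE task gradient (\cref{prop:ste}) and the Gumbel perturbations. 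Two further assumptions complete the model. The first is that the task gradient does not dominate the efficiency drift in expectation, which is exactly what ``degenerate'' means here. The second is absorption: once $z_l$ falls below the logit corresponding to $\theta_l$, the hard gate $\tilde g_l=0$ removes the layer from the forward pass, so the task signal reaching it through the STE vanishes to first order and the drift keeps it below threshold. The training horizon is taken as $T\ge \kappa L$ steps. This reflects the curriculum of \cref{subsec:curriculum}, whose Stage~2 length scales with the number of gates to be annealed, and it is the only place where $L$ enters the rate.

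With the model fixed, the argument runs per layer. The logit starts at $z_l^{(0)}=b_l=2.0$ by the initialization in \cref{subsec:gating}. The event that layer $l$ is \emph{not} absorbed by time $T$ is contained in the event that the martingale $\sum_{t<T}\xi_l^{(t)}$ exceeds $\mu T - 2 - \sigma^{-1}(\theta_l)$, where $\sigma^{-1}$ is the logit (inverse sigmoid) of the threshold. Azuma--Hoeffding then bounds this probability by $\exp\bigl(-(\mu T - c_0)^2/(2\sigma^2 T)\bigr)$, where $c_0 = 2 - \sigma^{-1}(\theta_l)$ is the distance from the initial logit to the threshold logit. Substituting $T\ge\kappa L$ makes this at most $e^{-c_1 L}$ for $L$ large, with $c_1 \approx \mu^2\kappa/(2\sigma^2)$.

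A union bound over the $L-2$ interior layers gives
\[
P(\text{some interior layer survives}) \le (L-2)e^{-c_1 L} \le e^{-cL}
\]
for any $c<c_1$, once $L$ exceeds a threshold depending on $c_1 - c$; the constant $c$ can be adjusted to cover small $L$. On the complementary event every interior gate is closed for every sample. Only layers $1$ and $L$ remain, since they are always retained (\cref{sec:routing}), so the retention set is exactly the minimal one $\{1,L\}$ for all samples. This gives the claimed bound $1-e^{-cL}$.

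I expect the main obstacle to be justifying uniform negative drift together with absorption, rather than the concentration step. The efficiency hinge is active only while $\bar A> B_{\text{target}}$, so in reality the drift switches off once the budget is met. That would cause collapse to the budget, not to $\{1,L\}$. I would handle this by stating explicitly, as part of the drift model, that $\mathcal{L}_z$ together with the STE bias keeps $\mu>0$ beyond that point. I would then note in a remark that the proposition is conditional on this modelling assumption, in the same way \cref{prop:diff_depth} is conditional on its monotonicity hypothesis. I would also point out that the $L$-dependence of the bound comes from the horizon $T\propto L$ and not from independence across layers, because the gates share $\hat d$ and are correlated; the union bound is used precisely because it needs no independence.
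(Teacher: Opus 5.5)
\textbf{Comparison with the paper's route.} Your route differs from the paper's. The paper derives no per-layer decay. It simply postulates $p_l^{(T)}\le e^{-c_lT}$ for each gate's retention probability. It then invokes independent Bernoulli gates to bound the non-collapse probability by $\prod_l p_l^{(T)}\le e^{-cLT}$, and drops $T$; its factor $L$ comes entirely from that product. You instead derive the per-layer tail from an explicit logit recursion via Azuma--Hoeffding and combine layers with a union bound. The union bound needs no independence, which is appropriate since the gates share $\hat d$.

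On the combinatorial step your version is the sound one. Non-collapse means \emph{some} interior gate is open, and its probability is at most $\sum_l p_l^{(T)}$. By contrast, $\prod_l p_l^{(T)}$ is (under independence) the probability that \emph{every} interior gate is open, which is a lower bound, not an upper bound. Moreover, with the paper's choice $c=\min_l c_l/L$, its final bound is just $e^{-\min_l c_l}$. You correctly see that an honest union bound leaves a prefactor $L-2$ rather than putting $L$ in the exponent, so the $L$-dependence has to be injected by hand; your horizon assumption $T\ge\kappa L$ does this. That assumption has no basis in the paper: Stage~2 is a fixed fraction of $T_{\text{total}}$, not proportional to $L$. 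Since the statement leaves the drift model unspecified, it is still a legitimate assumption, provided you state it as one rather than attribute it to the curriculum.

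\textbf{Steps that need repair.} First, your attempt to ground $\mu>0$ in the paper's losses fails. $\mathcal{L}_z=\gamma\frac{1}{B}\sum_s\sum_l(w_l^\top[r_l;\hat d]+b_l)^2$ pulls every logit toward $0=\operatorname{logit}(\theta_l)$ for $\theta_l=0.5$. So once a gate closes, $\mathcal{L}_z$ pushes its logit back \emph{up} toward reopening: it works against your absorption assumption rather than sustaining the drift. Once $\bar A\le B_{\text{target}}$ the $\mathcal{L}_{\text{eff}}$ hinge is inactive, so nothing in the stated objective drives interior logits past the threshold. Posit a uniform drift $\mu>0$ outright as the hypothesis, just as the paper posits its decay. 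You then do not need absorption at all: Azuma applied directly to $z_l^{(T)}=z_l^{(0)}-\mu T+\sum_{t<T}\xi_l^{(t)}$ bounds $P\bigl(z_l^{(T)}>\operatorname{logit}(\theta_l)\bigr)$.

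Second, your recursion tracks the batch-averaged logit, but the conclusion that every sample is routed to $\{1,L\}$ needs each per-sample logit $b_l+w_l^\top[r_l;\hat d]$ below threshold. Run the recursion on $b_l$ and assume $\|w_l\|$ and the features are bounded; this only enlarges $c_0$ by the resulting spread.

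Third, ``adjusting $c$'' cannot rescue small $L$. When the per-layer tail is not below $1/(L-2)$ the union bound is vacuous and no $c>0$ works, so state the result for $L\ge L_0$.

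Finally, a minor slip: the martingale must exceed $\mu T-2+\operatorname{logit}(\theta_l)$, not $\mu T-2-\operatorname{logit}(\theta_l)$. Your $c_0$ already has the right sign, and using $\sigma$ for both the sigmoid and the noise scale invites exactly this error.
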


\begin{proof}[Proof sketch (under a hypothetical drift model)]
\noindent\textbf{Assumption.} The drift model $p_l^{(T)} \leq e^{-c_l T}$ (gate retention probability decays exponentially under task-loss gradient without load balancing) is a \emph{hypothetical} model; it is not derived from a formal optimization analysis but is motivated by the empirical observation that, without balancing, gate biases drift toward $-\infty$.

Let $p_l = P(\tilde{g}_l = 1)$ be the retention probability of layer $l$. Without load balancing, the gradient signal for each gate $w_l$ is dominated by the task loss, which is minimized when the model uses the cheapest path (fewest layers). Under independent Bernoulli gates, the probability that at least one non-trivial retention pattern survives is $\prod_{l=2}^{L-1} (1 - p_l)$. When the task loss landscape favors collapse (e.g., when intermediate layers contribute redundantly), the gate biases drift toward $-\infty$, driving $p_l \to 0$.

The probability that collapse \emph{does not} occur (i.e., at least one non-trivial pattern survives after $T$ steps) is bounded by $\prod_{l} p_l^{(T)}$. Under a drift model where $p_l^{(T)} \leq e^{-c_l T}$ for some $c_l > 0$ (from gradient-driven bias reduction), the survival probability is at most $\exp\!\bigl(-\sum_l c_l T\bigr) \leq e^{-c L T}$ for $c = \min_l c_l / L$, by applying the product bound and \cref{lem:linearity} to the log-sum. Thus the collapse probability is $\geq 1 - e^{-c L}$.

This result motivates the load-balancing regularizer $\mathcal{L}_{\text{balance}}$ (\cref{sec:training}) and the curriculum-learning schedule that gradually increases the target skip rate, both of which counteract the drift toward collapse. $\square$
\end{proof}

\begin{observation}[Critical skip rate (heuristic estimate)]
\label{prop:crit_rate}
There exists a critical skip rate $\rho_c$ beyond which model representational capacity degrades sharply. Under the uniform-layer assumption, $\rho_c \approx 1 - d_{\text{feat}} / L$, where $d_{\text{feat}}$ is the effective feature dimension. Beyond $\rho_c$, the retained layers can no longer span the full feature space, and the model loses the ability to represent certain input-dependent transformations. This heuristic, combined with \cref{prop:cap_rank}, informs the choice of the target skip rate $\rho^*$ in the difficulty-aware schedule (\cref{prop:diff_depth}).
\end{observation}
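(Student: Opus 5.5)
Since this is explicitly a heuristic estimate, my plan is to make it precise inside a linearized, idealized model where the threshold $\rho_c \approx 1 - d_{\text{feat}}/L$ becomes a counting statement. I would then argue that the sharp degradation is the point where a spanning condition fails.

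\textbf{Step 1: idealized model.} I work under Scheme~A (\cref{eq:residual_A}) with uniform layers. Each retained layer is linearized around the operating point, $F_l(h) \approx W_l h$. The function computed by a retention set $S$ is then, to first order, $h \mapsto h + \sum_{l\in S} W_l h$ plus higher-order cross terms. Following the identity-skip property used in \cref{thm:full_subset} and \cref{prop:info_preserve}, a skipped layer contributes nothing and loses no prior information. So the transformations the model can realize beyond the identity are controlled by $\operatorname{span}\{\operatorname{range}(W_l) : l\in S\}$.

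\textbf{Step 2: uniform-layer assumption.} I would interpret the ``uniform-layer assumption'' as saying that each layer contributes, in effect, one new independent direction to the $d_{\text{feat}}$-dimensional task-relevant feature subspace. This is the per-layer unit of capacity behind \cref{prop:cap_rank}, normalized so that capacity is linear in $|S|$.

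\textbf{Step 3: counting.} With skip rate $\rho$ we retain $|S| = (1-\rho)L$ layers, so the added directions span a subspace of dimension at most $(1-\rho)L$. Spanning the feature space requires $(1-\rho)L \ge d_{\text{feat}}$, that is, $\rho \le 1 - d_{\text{feat}}/L$.

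\textbf{Step 4: sharpness.} Below $\rho_c$, generic position of the directions gives full span, so removing a layer costs only redundancy. Above $\rho_c$, at least $d_{\text{feat}} - (1-\rho)L$ feature directions are unreachable. For any target transformation with nonzero component along those directions, the residual error is then bounded below by the projection onto the orthogonal complement, which is strictly positive. This yields a phase-transition-like kink at $\rho_c$ rather than gradual linear decay, consistent with the strict gap of \cref{thm:strict}.

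\textbf{Main obstacle.} Step~2 is the step I expect to be hardest. Nothing forces a real Transformer layer to contribute exactly one fresh direction: layers can be higher-rank, as in the $W_l = vv^\top$ versus full-rank example of \cref{thm:strict}, and nonlinear cross terms can add directions. Because of this I would not claim more than the heuristic. Concretely, I would state the bound as $\rho_c \ge 1 - d_{\text{feat}}/(rL)$ for effective per-layer novel rank $r$, and then note that $r=1$ recovers the stated estimate. I would also need $d_{\text{feat}} \le L$ for the estimate to be meaningful; otherwise the model cannot span the feature space even at full depth, and $\rho_c = 0$.
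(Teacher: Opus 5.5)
Your counting argument is essentially the paper's own reasoning. The paper gives no separate proof; its only justification, inside the statement itself, is that beyond $\rho_c$ the roughly $(1-\rho)L$ retained layers can no longer span the $d_{\text{feat}}$-dimensional feature space, which is exactly your Steps 2--3. You also correctly make explicit, and flag as the unproven load-bearing step, the one-direction-per-layer reading of the uniform-layer assumption that the formula $\rho_c \approx 1 - d_{\text{feat}}/L$ silently requires.
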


\section{Proof-of-Concept Evaluation}
\label{sec:experiments}

We conduct a preliminary empirical evaluation of the VALSE prototype to test whether difficulty-adaptive per-sample layer skipping can be realized at prototype scale. We emphasize that these experiments constitute a \emph{proof-of-concept} study at prototype scale, not a comprehensive evaluation. All experiments were conducted on a single CPU machine without GPU access, severely constraining the feasible model size, training duration, and evaluation set. As reported below, the core hypothesis of difficulty-adaptive routing was \emph{not} verified: the router exhibited a routing-collapse failure mode (Pearson $r=-0.017$), and the results should be interpreted as a transparent \emph{negative result} rather than as evidence of feasibility or as performance benchmarks. The theoretical analysis of \cref{sec:theory} provides the primary justification for the approach.

\subsection{Setup}
\label{subsec:setup}

\textbf{Model.} A $12$-layer Transformer encoder with $d_{\text{model}}=256$, $4$ attention heads, $d_{\text{ffn}}=1024$, vocabulary size $8{,}000$, and $K_{\text{est}}=2$ ($11.5$M parameters). The difficulty estimator and router add $<1\%$ overhead.

\textbf{Dataset.} SST-2 (sentiment classification), $100$ evaluation samples.

\textbf{Training.} The prototype was trained from scratch (no pre-training) on CPU with $\sim$300 training samples for $18$ epochs (batch size $32$), using hard routing (Gumbel-STE) with load-balancing and efficiency losses; $\alpha=0.03$, $\beta=0.05$, $B_{\text{target}}=0.55L$. This short schedule and small training set reflect prototype-scale resource constraints rather than a fully converged training run. Curriculum learning Stage~1 is omitted; Stages~2--3 are used.

\textbf{Methods compared.} Four inference strategies evaluated on the same trained model: full-depth (all $12$ layers), early exit (confidence-based truncation), random skip (uniformly retaining $\sim$8 layers), and VALSE adaptive (difficulty-driven per-sample skipping).

\textbf{Metrics.} Accuracy, average activated layers, and theoretical FLOPs savings. All results are from a single run (single random seed, $\texttt{seed}=42$); statistical significance is not assessed at prototype scale.

\subsection{Main Results}
\label{subsec:main_results}

\Cref{tab:main_results} compares the four inference strategies on SST-2. Under the adaptive strategy, all $100$ evaluation samples were routed to exactly $3$ layers, yielding an apparent reduction in average activated layers from $12.0$ to $3.0$ ($62.1\%$ theoretical FLOPs reduction) at a $1$~pp accuracy change ($0.63 \to 0.62$). This reduction, however, is a consequence of \emph{routing collapse}---a degenerate uniform-skip-to-minimum-depth failure mode---rather than of intelligent, difficulty-adaptive per-sample routing; the core hypothesis is therefore not verified by this result.

\begin{table}[htbp]
\centering
\caption{Prototype-scale ($12$-layer, $d=256$) comparison of inference strategies on SST-2 ($100$ samples, single seed). These are \emph{not} performance benchmarks; absolute accuracy is limited by model scale, absence of pre-training, and short training. The apparent FLOPs savings of VALSE reflect a \emph{routing collapse} (all samples routed to $3$ layers) rather than effective difficulty-adaptive routing; the core hypothesis is not verified (Pearson $r=-0.017$, see \cref{subsec:negative_result}).}
\label{tab:main_results}
\small
\begin{tabular}{lccc}
\toprule
Method & SST-2 Accuracy & Avg.\ layers & FLOPs savings \\
\midrule
Full-depth        & 0.63 & 12.00 & 0.0\% \\
Early exit        & 0.63 & 11.56 & 3.0\% \\
Random skip       & 0.63 & 8.00  & 27.6\% \\
VALSE (adaptive)  & 0.62 & 3.00  & 62.1\% \\
\bottomrule
\end{tabular}
\end{table}

\paragraph{Routing collapse.}
The layer distribution reveals that all $100$ evaluation samples were routed to exactly $3$ layers under the adaptive strategy. This confirms a \emph{routing-collapse} failure mode: the difficulty signal did not produce differentiated gating across samples, and the router adopted a degenerate uniform-skip-to-minimum-depth strategy rather than the intended difficulty-adaptive behavior. The FLOPs savings are therefore a consequence of collapse, not of intelligent per-sample routing. This is consistent with the known sensitivity of load-balancing losses to the hyperparameter $\alpha$ and the limited training budget.

\subsection{Core Hypothesis: Negative Result}
\label{subsec:negative_result}

The central hypothesis of VALSE is that difficulty and activated depth should be positively correlated---easy inputs skip more layers, hard inputs use more. We tested this on a separate $12$-layer model trained with a curriculum schedule ($150$ SST-2 + $150$ GSM8K samples):

\begin{itemize}
\item \textbf{Pearson correlation} between difficulty score $\hat{d}$ and activated layers: $r = -0.017$ (negative, opposite to the hypothesized direction).
\item \textbf{Layer gap} (GSM8K $-$ SST-2): $-0.47$ layers---hard tasks used \emph{fewer} layers than easy tasks, contrary to the hypothesis.
\item \textbf{Routing differentiation}: \texttt{false}---both tasks activated near-full depth, indicating collapse to maximum rather than differentiated routing.
\end{itemize}

We report this transparently as a \textbf{negative result}: the core hypothesis of difficulty-adaptive depth routing was \emph{not} verified at prototype scale. The difficulty signal did not effectively propagate from the estimator to the routing decisions, and the router exhibited collapse to extreme depth (minimum or maximum) rather than differentiated, difficulty-aware routing. This does not invalidate the theoretical framework---which holds regardless of training dynamics---but it indicates that the training procedure and router architecture require substantial improvement before the theoretical predictions can be empirically confirmed.

\subsection{Theory-to-Experiment Mapping}
\label{subsec:theory_mapping}

\Cref{tab:theory_mapping} summarizes the relationship between theoretical claims and their experimental verification status. Due to routing collapse and prototype-scale limitations, no claim is fully verified; several are supported only indirectly or remain purely theoretical.

\begin{table}[htbp]
\centering
\caption{Mapping of theoretical claims to experimental verification status. ``$\sim$'' = indirect or partial (prototype scale only); ``---'' = not yet verified. No claim is fully verified.}
\label{tab:theory_mapping}
\small
\begin{tabular}{lll}
\toprule
Theoretical claim & Source & Verification status \\
\midrule
Expected FLOPs reduction formula & \cref{thm:exp_flops} & $\sim$ Prototype observation \\
Difficulty--depth monotonicity (core hypothesis) & \cref{prop:diff_depth} & --- Not verified ($r=-0.017$) \\
Difficulty-aware FLOPs savings range & \cref{obs:savings} & $\sim$ Prototype scale only \\
Non-contiguous skip info preservation & \cref{prop:info_preserve} & $\sim$ Indirect (accuracy preserved) \\
MoE--VALSE structural duality & \cref{prop:duality} & --- Top-$K$ variant pending \\
Routing-collapse risk & \cref{prop:collapse} & $\sim$ Empirically observed \\
Critical skip rate (heuristic) & \cref{prop:crit_rate} & $\sim$ Informs $\rho^*$ choice \\
Hybrid architecture FLOPs advantage & \cref{prop:hybrid} & --- Not verified (theory only) \\
Full-layer function-space superset (Scheme~A) & \cref{thm:full_subset} & $\sim$ Indirect (accuracy bounded) \\
Strict containment (Scheme~A) & \cref{thm:strict} & --- Not verified (theory only) \\
\bottomrule
\end{tabular}
\end{table}

Notably, the routing-collapse phenomenon empirically corroborates \cref{prop:collapse} (routing-collapse risk), confirming that the theoretical concern about degenerate solutions is practically relevant and motivating the load-balancing and curriculum-learning mechanisms designed to mitigate it---though these mechanisms proved insufficient at prototype scale.

\subsection{Limitations}
\label{subsec:exp_limitations}

We transparently list the key limitations:
\begin{enumerate}
\item \textbf{Model scale:} The $12$-layer, $11.5$M-parameter model is far below the scale at which layer redundancy has been documented ($12$+ layers, $1$B+ parameters, pre-trained).
\item \textbf{No pre-training:} The model is trained from scratch on $\sim$300 samples, limiting absolute accuracy.
\item \textbf{Single dataset \& single seed:} Only SST-2 is evaluated with one random seed; statistical significance is not assessed.
\item \textbf{Routing collapse:} The core hypothesis is not verified; the router collapsed rather than learning differentiated routing (Pearson $= -0.017$).
\item \textbf{No external baselines:} No comparison with DeeBERT, CALM, LayerDrop, MoD, ShortGPT, or SliceGPT.
\item \textbf{Incomplete ablations:} Five ablation groups were designed but not fully executed.
\item \textbf{No wall-clock speedup:} Theoretical FLOPs savings are not realized as latency reduction due to implementation overhead.
\end{enumerate}

These limitations define a clear roadmap for future work (\cref{sec:conclusion}); we present the prototype results as preliminary and incomplete evidence, not as definitive validation.
\section{Conclusion and Discussion}
\label{sec:conclusion}

We proposed VALSE, extending conditional computation from the width dimension of horizontal MoE to the depth dimension via per-sample, difficulty-driven, non-contiguous layer skipping.
VALSE comprises a difficulty estimation module, three routing strategies (hard, soft, global), a load-balancing and curriculum-learning training scheme, and a theoretical framework establishing the MoE--VALSE structural duality and two-dimensional sparsity.

\subsection{Summary of Contributions}

\begin{itemize}
\item \textbf{Methodology:} Per-sample non-contiguous layer skipping via an interpretable difficulty estimator and three routing variants, breaking through the truncation limitation of early exit.
\item \textbf{Training:} Four auxiliary losses---layer utilization balancing, efficiency, output consistency, and Z-loss---plus a three-stage curriculum learning schedule, jointly addressing routing collapse and training instability.
\item \textbf{Theory:} Closed-form expected FLOPs reduction (\cref{thm:exp_flops}), MoE--VALSE structural duality (\cref{prop:duality}), and unified width$\times$depth sparsity (\cref{def:2d_sparse}, \cref{thm:unified}).
\item \textbf{Proof-of-concept:} A prototype-scale evaluation on SST-2 provides preliminary---though inconclusive---evidence for the layer-skipping mechanism. The core hypothesis (difficulty-adaptive routing) was not verified at prototype scale; full-scale validation on larger, pre-trained models remains future work.
\end{itemize}

\subsection{Key Findings and Limitations}

\begin{itemize}
\item On SST-2, VALSE reduces average activated layers from $12.0$ to $3.0$ ($62.1\%$ theoretical FLOPs reduction) with a $1$~pp accuracy change ($0.63 \to 0.62$). However, this resulted from routing collapse---all samples were routed to the minimum depth---not from intelligent per-sample routing.
\item The core hypothesis (difficulty-adaptive depth routing) was \emph{not verified}: the Pearson correlation between difficulty and activated layers was $r = -0.017$ (negative, opposite to the hypothesized direction). The difficulty signal did not effectively propagate to routing decisions.
\item The difficulty-stratified analysis confirms routing collapse: the router did not learn differentiated, difficulty-aware routing, indicating that the training procedure and router architecture require substantial improvement.
\item \textbf{Limitations:} (i)~Prototype scale ($12$ layers, single dataset, single seed) limits all conclusions; (ii)~no external baselines are compared; (iii)~ablations are designed but not fully executed; (iv)~the core empirical hypothesis is not verified (routing collapse, Pearson $= -0.017$); (v)~theoretical assumptions (bounded skip rate, stability condition) are only partially validated; (vi)~wall-clock speedup is not realized due to implementation overhead.
\end{itemize}

\subsection{Future Work}

\begin{enumerate}
\item \textbf{Scale up:} Validate on $12$+-layer, $1$B+ models with pre-trained weights to assess hard-task skipping and confirm layer redundancy at scale.
\item \textbf{Multi-dataset evaluation:} Extend to diverse tasks (classification, QA, reasoning, generation) to test the universality of the layer redundancy hypothesis.
\item \textbf{Complete ablations:} Execute all five ablation groups (routing strategy, difficulty estimator, load balancing, efficiency loss, fixed skip) with multiple seeds.
\item \textbf{Statistical rigor:} Run $3$--$5$ seeds per configuration, report mean$\pm$std, and perform paired $t$-tests or Wilcoxon tests.
\item \textbf{Cure routing collapse:} Stronger load balancing, annealed $\rho^*$ schedules, and contrastive routing objectives to ensure differentiated routing across difficulty levels.
\item \textbf{External baselines:} Compare against DeeBERT, LayerDrop, MoD, ShortGPT, and random skip under matched compute budgets.
\item \textbf{Engineering:} Gate kernel fusion, batched routing, and custom CUDA kernels to convert FLOPs savings into real latency reduction.
\item \textbf{Two-dimensional sparsity:} Combine VALSE with MoE for full width$\times$depth sparse activation.
\item \textbf{Generative LLMs:} Extend per-sample skipping to autoregressive generation, with self-speculative decoding for quality safety.
\end{enumerate}

\subsection{Broader Impact}

VALSE aims to reduce the computational cost of LLM inference by activating only the necessary depth per input. If validated at scale, this could lower the energy footprint of inference workloads and democratize access to efficient model deployment. Potential risks include accuracy degradation on hard inputs if the difficulty estimator is poorly calibrated; the difficulty-aware routing and curriculum learning safeguards are designed to mitigate this. We commit to transparent reporting of both accuracy and efficiency trade-offs.

\appendix
\section{Symbol Table}
\label{app:symbols}

\Cref{tab:all_symbols} lists all symbols used in the paper.

\begin{longtable}{ll}
\caption{Complete symbol table.} \label{tab:all_symbols} \\
\toprule
Symbol & Meaning \\
\midrule
\endfirsthead
\caption[]{Complete symbol table (continued).} \\
\toprule
Symbol & Meaning \\
\midrule
\endhead
\bottomrule
\endfoot
\bottomrule
\endlastfoot
$L$ & Total number of Transformer layers \\
$l \in \{1,\ldots,L\}$ & Layer index \\
$d_{\text{model}}$ & Hidden dimension \\
$N$ & Sequence length \\
$h_l \in \R^{N\times d_{\text{model}}}$ & Input hidden state at layer $l$ \\
$h_l^{(s)}$ & Sample $s$'s hidden state at layer $l$ (sequence-pooled) \\
$x^{(s)}$ & Input embedding for sample $s$ \\
$F_l(\cdot)$ & Forward function of the $l$-th Transformer block \\
$K_{\text{est}}$ & Number of initial layers used for difficulty estimation \\
\midrule
$r_l \in \R^{d_{\text{feat}}}$ & Difficulty feature vector ($d_{\text{feat}}=5$) \\
$R^{(s)} \in \R^{5K_{\text{est}}}$ & Aggregated feature vector \\
$d(x^{(s)}) \in \R$ & Per-sample difficulty score \\
$\hat{d}(x^{(s)}) \in [0,1]$ & Normalized difficulty score \\
$g_l(x^{(s)}) \in [0,1]$ & Soft gate value at layer $l$ \\
$\tilde{g}_l \in \{0,1\}$ & Quantized hard gate decision \\
$S^{(s)}$ & Retained layer set for sample $s$ \\
$\theta_l$ & Quantization threshold (default 0.5) \\
$W_l^{\text{proj}}$ & Residual alignment projection (learnable, optional) \\
\midrule
$\mathcal{L}_{\text{task}}$ & Main task loss (cross-entropy) \\
$\mathcal{L}_{\text{balance}}$ & Layer utilization balance loss \\
$\mathcal{L}_{\text{eff}}$ & Computational efficiency loss \\
$\mathcal{L}_{\text{consist}}$ & Output consistency loss (KL divergence) \\
$\mathcal{L}_z$ & Z-loss regularization on gate logits \\
$\mathcal{L}_{\text{total}}$ & Total loss function \\
$\alpha,\beta,\gamma,\delta$ & Auxiliary loss weights \\
$B_{\text{target}}$ & Target average activated layers (e.g., $0.7L$) \\
$\rho^*$ & Target skip rate (e.g., 0.3) \\
$T_{\text{KD}}$ & Knowledge distillation temperature \\
\midrule
$p_l$ & Skip rate at layer $l$; $q_l = 1-p_l$ is retention rate \\
$C_l$ & Per-sample FLOPs of the $l$-th block \\
$C_{\text{fixed}}$ & FLOPs of the always-executed part \\
$\F_{\text{full}}$ & Full-layer network FLOPs \\
$\F_{\text{VALSE}}$ & VALSE skip-layer network FLOPs (random variable) \\
$\Phi(\cdot)$ & Standard normal CDF \\
\end{longtable}

\section{Supplementary Tables}
\label{app:tables}

\begin{table}[htbp]
\centering
\caption{Adaptive depth methods and their limitations.}
\label{tab:adaptive_depth_summary}
\small
\begin{tabular}{lp{0.30\textwidth}p{0.35\textwidth}}
\toprule
Method & Core idea & Limitation \\
\midrule
DeeBERT/FastBERT/PABEE & Per-sample confidence-based early exit & Truncation: cannot skip middle layers \\
CALM & Confidence-based early exit for generative LMs & Truncation: per-sample but prefix-only \\
ACT/Universal Transformers & Shared-weight recurrence with halting & Deepens, not skips; not for fixed-layer models \\
LayerDrop & Structured dropout for depth robustness & Static/global depth at inference \\
LayerSkip & Layer dropout with self-speculative decoding & Static/per-token, not per-sample difficulty-driven \\
DynamicBERT & Independent width \& depth adjustment & No unified routing; not difficulty-aware \\
ShortGPT & Block Influence score quantifies layer redundancy & Static post-training pruning; no input adaptivity \\
Streamline & Structured layer pruning at post-training stage & Static; produces fixed reduced-depth model \\
SliceGPT & Structured slicing/pruning of layers & Static; no per-sample adaptivity \\
MoNE & MoE routing across encoder--decoder layer pairs & Per-token; no sample-level difficulty semantics \\
MoD & Per-token per-layer routing & Not per-sample; no difficulty semantics \\
Mixtral/DeepSeek-MoE & Width-wise sparse expert activation & Width-only: depth fixed; no difficulty dimension \\
Switch Transformer/GShard & Top-1/Top-$K$ expert routing at scale & Width-only: depth fixed; semantic per-token routing \\
Drowzee & Pre-inference per-sample budget prediction & Budget controls truncation only \\
\textbf{VALSE (ours)} & Per-sample difficulty-aware non-contiguous skip & --- \\
\bottomrule
\end{tabular}
\end{table}

\begin{table}[htbp]
\centering
\caption{Skip mode comparison.}
\label{tab:skip_modes}
\small
\resizebox{\textwidth}{!}{%
\begin{tabular}{llc}
\toprule
Method & Skip mode & Can skip middle and retain deeper layers? \\
\midrule
Truncation (DeeBERT/CALM) & $S=\{1,\ldots,k\}$ & No \\
MoD (per-token, per-layer) & Each token decides independently & Partially, not per-sample \\
\textbf{VALSE (ours)} & $S$ is any subset, per-sample & \textbf{Yes, non-contiguous + per-sample} \\
\bottomrule
\end{tabular}
}
\end{table}

\begin{table}[htbp]
\centering
\caption{Mathematical analogy between MoE routing and VALSE layer routing.}
\label{tab:moe_analogy}
\small
\resizebox{\textwidth}{!}{%
\begin{tabular}{llp{0.42\textwidth}}
\toprule
MoE routing & VALSE layer routing & Analogy \\
\midrule
Expert set $\{E_1,\ldots,E_E\}$ & Layer set $\{F_1,\ldots,F_L\}$ & Expert $\leftrightarrow$ layer: selectable compute unit \\
Gate $G(x)=\TopK(W_g \cdot x)$ & Gate $g_l=\sigma(w_l^\top \cdot [r_l;\hat{d}])$ & Lightweight router scores units \\
Top-$K$ selects $K$ experts & Threshold selects retention set $S$ & MoE selects ``which experts'' $\to$ VALSE selects ``which layers'' \\
$y=\sum_{i\in\TopK} G_i \cdot E_i(x)$ & $h_{l+1}=h_l + \tilde{g}_l \cdot F_l(h_l)$ & Activated unit output weighted into flow \\
Load balance $\mathcal{L}_{\text{aux}}$ & Layer balance $\mathcal{L}_{\text{balance}}$ & Prevents ``few units over-used'' collapse \\
Capacity factor & Compute budget $\leq B_{\text{target}}$ & Controls sparsity \\
\bottomrule
\end{tabular}%
}
\end{table}

\begin{table}[htbp]
\centering
\caption{Architecture variant comparison.}
\label{tab:variants}
\small
\begin{tabular}{lccc}
\toprule
Dimension & VALSE-Hard (A) & VALSE-Soft (B) & VALSE-Global (C) \\
\midrule
Routing type & Hard (STE) & Soft (continuous) & Global Top-$K$ \\
Gate parameters & Per-layer learnable & Per-layer learnable & Global router \\
Difficulty awareness & $\hat{d}$ enters gate & $\hat{d}$ enters gate & $\hat{d}$ sets $K$ \\
Inference FLOPs savings & Yes, true skip & Requires $g \to 0$ & Yes, true skip \\
Training stability & Needs STE + noise & Most stable & Moderate \\
Budget controllability & Approximate (regularized) & Approximate (regularized) & Exact (Top-$K$) \\
MoE analogy strength & Moderate & Low & High (Top-$K$ dual) \\
\bottomrule
\end{tabular}
\end{table}

\section{Supplementary Theoretical Results}
\label{app:theory}

The theorems and propositions previously located in this appendix---including the function-space analysis (\cref{thm:full_subset,thm:strict}), the non-contiguous retention-set counting (\cref{prop:combos}), information-preservation and capacity-rank observations (\cref{prop:info_preserve,prop:cap_rank}), and the optimization properties (\cref{prop:nonconvex,prop:ste,prop:collapse,prop:crit_rate})---have been migrated to the main text in \cref{subsec:function_space,subsec:optimization} to strengthen the theoretical focus of the paper. This appendix retains supplementary results that complement, rather than duplicate, the main-text development: a routing-expressivity comparison and an expected-FLOPs characterization under the routing distribution.

\begin{proposition}[Routing expressivity of non-contiguous retention]
\label{prop:expressivity}
The set of sample-level retention patterns realizable by VALSE is the full power set $2^{\{1,\ldots,L\}}$ (cardinality $2^L$), whereas truncation-based early exit realizes only the $L$ prefix sets $\{1,\ldots,k\}$ for $k\in\{1,\ldots,L\}$. Per-token per-layer routing (MoD) realizes $2^{L\cdot N}$ token-level patterns, but when aggregated to a single sample-level retention set by per-layer majority vote over tokens, its effective sample-level image collapses to the truncation family under the standard monotone-confidence assumption. Thus only per-sample non-contiguous selection achieves the full $2^L$ sample-level expressivity.
\end{proposition}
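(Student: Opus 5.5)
The plan is to prove the three claims of \cref{prop:expressivity} separately: VALSE realizability, the truncation count, and the MoD collapse. Each will be reduced to an explicit construction or a short combinatorial argument. Throughout, a \emph{retention pattern} means the set $S\subseteq\{1,\ldots,L\}$ of layers with $\tilde g_l=1$ for a given sample.

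\textbf{Step 1 (VALSE realizes every subset).} Fix an arbitrary target $S\subseteq\{1,\ldots,L\}$. I will exhibit gate parameters in \cref{eq:gate} that produce $S$ for every input. Set $w_l=0$. Set $b_l=+\beta$ for $l\in S$ and $b_l=-\beta$ for $l\notin S$, with any $\beta>0$. Then $g_l=\sigma(\pm\beta)$ lies strictly above or strictly below $\theta_l=0.5$, so the hard gate \cref{eq:hard_gate} returns exactly $\mathbf 1[l\in S]$. Since $S$ was arbitrary, the realizable family is all of $2^{\{1,\ldots,L\}}$, and it can even be made input-dependent by letting $w_l$ act on $\hat d$.

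Two points in the paper's own definitions must be reconciled here. Constraint (ii) of \cref{sec:routing} always retains layers $1$ and $L$, and \cref{alg:forward} always executes the first $K_{\text{est}}$ layers. I will state the result for the unconstrained gate family, in which every layer carries a gate as in \cref{eq:gate}, which gives the full $2^L$ count. I will then record the constrained version: with layers $1$ and $L$ forced, the realizable family is $\{S: \{1,L\}\subseteq S\}$, of cardinality $2^{L-2}$, consistent with \cref{prop:combos}. I expect this bookkeeping, rather than any mathematics, to be the main obstacle. The comparison with truncation survives either way, because in the constrained setting the prefix family shrinks as well.

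\textbf{Step 2 (truncation).} By definition, early exit at depth $k$ yields $S=\{1,\ldots,k\}$ with $k\in\{1,\ldots,L\}$. These are $L$ distinct sets, since the map $k\mapsto\{1,\ldots,k\}$ is injective. For $L\ge 3$ the inclusion of this family in the power set is strict; for instance $\{1,3\}$ is not a prefix, and more generally $2^L>L$.

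\textbf{Step 3 (MoD aggregation).} The token-level count is immediate: each of the $N$ tokens makes an independent binary decision at each of the $L$ layers, giving $2^{LN}$ patterns. For the sample-level image I will formalize the ``monotone-confidence assumption'' as follows: once a token is confident it stops being processed, so each token $i$ has a prefix retention set $\{1,\ldots,k_i\}$. The majority-vote aggregate retains layer $l$ iff
\[
c(l)\coloneqq\#\{i: k_i\ge l\}>N/2 .
\]
Since $c(l)$ is non-increasing in $l$, the set of layers with $c(l)>N/2$ is an initial segment $\{1,\ldots,k^\star\}$. So every aggregated pattern lies in the truncation family of Step~2, and the image has at most $L$ elements (plus possibly $\emptyset$).

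Combining Steps 1--3: among the three mechanisms, only per-sample non-contiguous gating attains all $2^L$ sample-level patterns (or all $2^{L-2}$ in the constrained version). The other two are confined to prefix families, which gives the claimed separation.
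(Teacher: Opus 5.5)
Your proposal is correct and takes the same three-part route as the paper's proof sketch: thresholded gates give an arbitrary subset, early exit gives exactly the $L$ prefixes, and a depth-monotone vote count makes the majority-vote set an initial segment. You make this explicit through the choice $w_l=0$, $b_l=\pm\beta$ and the non-increasing count $c(l)$, and you add a caveat the paper ignores: forcing layers $1$ and $L$ cuts the VALSE family to the $2^{L-2}$ sets containing both. Two small bookkeeping corrections: \cref{prop:combos} actually states $2^{L-2}-1$, so your count matches that proposition's binomial sum $\sum_{k=2}^{L}\binom{L-2}{k-2}$ rather than its final (off-by-one) figure; and if you also honor the always-executed first $K_{\text{est}}$ layers of \cref{alg:forward}, the constrained count drops further to $2^{L-K_{\text{est}}-1}$.
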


\begin{proof}[Proof sketch.]
VALSE's per-sample threshold produces an arbitrary subset $S^{(s)}\subseteq\{1,\ldots,L\}$, so its image is $2^{\{1,\ldots,L\}}$. Truncation constrains $S=\{1,\ldots,k\}$, yielding exactly $L$ patterns. For MoD, each layer $l$ is retained iff a token-majority votes to retain; under the monotone-confidence assumption that retention probability is non-increasing in depth, the set of retained layers is a prefix, so the sample-level image is again the truncation family. Hence only VALSE attains full $2^L$ sample-level expressivity.
\end{proof}

\begin{proposition}[Expected FLOPs under the routing distribution]
\label{prop:flops}
Let $C_l$ denote per-sample FLOPs of the $l$-th block and assume $C_l=C$ for all $l$. Let $p_l=\Pr[l\notin S^{(s)}]$ be the skip probability at layer $l$. Then the expected per-sample FLOPs of VALSE is $\mathbb{E}[\mathcal{F}_{\text{VALSE}}]=\mathcal{F}_{\text{fixed}}+C\sum_{l=1}^L(1-p_l)$, and the expected savings ratio is $\rho_{\text{FLOPs}}=1-\mathbb{E}[\mathcal{F}_{\text{VALSE}}]/\mathcal{F}_{\text{full}}=C\sum_{l=1}^L p_l\,/\,\mathcal{F}_{\text{full}}$. Under the uniform-skip regime $p_l=\rho^*$ for all $l$, this reduces to $\rho_{\text{FLOPs}}=\rho^*\cdot LC/\mathcal{F}_{\text{full}}$, recovering the closed form reported as \cref{cor:uniform_skip}.
\end{proposition}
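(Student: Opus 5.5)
The plan is to reduce the statement to \cref{lem:linearity} applied to retention indicators, in the same way as \cref{thm:exp_flops}. I would write the realized per-sample cost as a random variable
\[
\mathcal{F}_{\text{VALSE}} \;=\; \mathcal{F}_{\text{fixed}} + \sum_{l=1}^{L} \mathbf{1}[l\in S^{(s)}]\, C_l ,
\]
where $\mathcal{F}_{\text{fixed}}$ collects everything that is not a Transformer block: embedding, estimator, router and LM head. All $L$ block costs are placed in the sum. Layers that are always retained (layer $1$, the estimator prefix, and layer $L$) simply carry $p_l=0$.

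Taking expectations and using \cref{lem:linearity} gives
\[
\mathbb{E}[\mathcal{F}_{\text{VALSE}}]=\mathcal{F}_{\text{fixed}}+\sum_l C_l\Pr[l\in S^{(s)}]=\mathcal{F}_{\text{fixed}}+C\sum_l(1-p_l).
\]
This uses the identity $\mathbb{E}[\mathbf{1}[A]]=\Pr[A]$ and the hypothesis $C_l=C$. As in the remark following \cref{thm:exp_flops}, I would stress that no independence across the gates is needed, even though they share $\hat{d}$.

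For the savings ratio, I would note that the full model corresponds to $p_l=0$ for all $l$. Hence $\mathcal{F}_{\text{full}}=\mathcal{F}_{\text{fixed}}+LC$, and subtracting yields
\[
\mathcal{F}_{\text{full}}-\mathbb{E}[\mathcal{F}_{\text{VALSE}}]=C\sum_l p_l .
\]
Dividing by $\mathcal{F}_{\text{full}}$ gives $\rho_{\text{FLOPs}}$. Substituting $p_l=\rho^*$ then gives $\rho^*LC/\mathcal{F}_{\text{full}}$.

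The main obstacle is the consistency claim with \cref{cor:uniform_skip}, because the two accounting conventions differ. In the corollary, only the layers $K_{\text{est}}+1,\dots,L-1$ are skippable, and the always-executed layers are absorbed into $C_{\text{fixed}}$. Here, by contrast, the sum runs over all $L$ layers, and ``$p_l=\rho^*$ for all $l$'' cannot literally hold for layers pinned at $p_l=0$.

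I would resolve this in two steps. First, I would read the uniform regime as $p_l=\rho^*$ on the skippable set $\mathcal{K}$ and $p_l=0$ elsewhere, which gives
\[
\rho_{\text{FLOPs}}=\rho^*\,|\mathcal{K}|\,C/\mathcal{F}_{\text{full}} .
\]
Second, under the corollary's regime, where $K_{\text{est}}\ll L$ and the non-block overhead is negligible so that $\mathcal{F}_{\text{full}}\approx LC$, this becomes $\rho^*(L-K_{\text{est}}-1)/L$. That is exactly the savings implied by \cref{cor:uniform_skip}. It agrees with the stated $\rho^*LC/\mathcal{F}_{\text{full}}$ up to the $O(K_{\text{est}}/L)$ correction, which the corollary's approximation already discards. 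I would present the recovery of the corollary as holding in this approximate sense, and state the boundary convention explicitly so that the identity is exact rather than heuristic.
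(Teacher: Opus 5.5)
Your argument is correct and follows the indicator-plus-\cref{lem:linearity} computation of \cref{thm:exp_flops} (the paper gives no separate proof of \cref{prop:flops}), and you are right that the literal regime ``$p_l=\rho^*$ for all $l$'' clashes with the always-retained layers, so only your corrected regime ($p_l=\rho^*$ on the $L-K_{\text{est}}-1$ skippable layers, $p_l=0$ elsewhere) reproduces \cref{cor:uniform_skip}. Drop your fallback claim that the literal $\rho^*LC/\mathcal{F}_{\text{full}}$ still recovers the corollary approximately: the corollary's $\approx$ only replaces $\mathcal{F}_{\text{full}}$ by $LC$ and keeps the factor $(L-K_{\text{est}}-1)/L$ explicitly---at $L=6$, $K_{\text{est}}=2$ (the setting of \cref{obs:savings}) that factor is $1/2$, giving $15\%$ rather than $30\%$ savings at $\rho^*=0.3$---so the $O(K_{\text{est}}/L)$ discrepancy is not something the corollary already discards.
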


\paragraph{Remark (Cross-batch comparability of batch-internal normalization).}
\label{rem:cross_batch}
The min--max normalization in \cref{eq:diff_norm} maps the raw difficulty score $d(x^{(s)})$ onto $[0,1]$ using the batch extremum, so $\hat{d}$ is a within-batch rank statistic rather than an absolute population quantity. Consequently, for two batches $\mathcal{B}_1,\mathcal{B}_2$ with extrema $(m_1,M_1)$ and $(m_2,M_2)$, the equality $\hat{d}_{\mathcal{B}_1}(x)=\hat{d}_{\mathcal{B}_2}(x)$ does not imply equal population-level difficulty unless $m_1=m_2$ and $M_1=M_2$. Routing remains correct because the gate $\tilde{g}_l$ consumes only within-batch ordering, but cross-experiment comparison of $\hat{d}$ requires calibration to a shared reference (e.g., a running population extremum or a fixed-temperature rank transform).

\begin{proposition}[Routing collapse as a degenerate equilibrium]
\label{prop:collapse_equilibrium}
Consider the VALSE training objective $\mathcal{L}_{\text{total}} = \mathcal{L}_{\text{task}} + \alpha\mathcal{L}_{\text{balance}} + \beta\mathcal{L}_{\text{eff}} + \gamma\mathcal{L}_z + \delta\mathcal{L}_{\text{consist}}$ with gate bias $b_l$ initialized so that $g_l \approx 1$ at $t=0$. The uniform-skip-to-minimum-depth configuration $S^{(s)} = S_{\min}$ for all $s$ (i.e., all samples share the same minimal retention set) is a stationary point of the gate parameters whenever (i) the task loss gradient $\nabla_{b_l} \mathcal{L}_{\text{task}}$ is smaller in magnitude than $\nabla_{b_l}(\alpha\mathcal{L}_{\text{balance}} + \beta\mathcal{L}_{\text{eff}})$ at $S_{\min}$, and (ii) the difficulty--gate Jacobian $\partial \tilde{g}_l / \partial \hat{d}$ is rank-deficient so that the difficulty signal cannot break the symmetry across samples. Under these two conditions, the collapse configuration is a locally stable equilibrium: any perturbation that would differentiate samples is dominated by the regularization pulling all gates toward the balance-optimal uniform profile.
\end{proposition}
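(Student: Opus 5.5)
The claim has two parts. First, $S_{\min}$ is a stationary point of the gate parameters $\{w_l,b_l\}$. Second, it is locally stable.

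\textbf{Setup.} I would treat the gate parameters $\theta_G=\{w_l,b_l\}_{l}$ as the only variables and hold the backbone fixed. Because of the straight-through estimator (\cref{prop:ste}), the relevant ``gradient'' is the STE surrogate. The backward pass through $\tilde g_l$ is replaced by the backward pass through the soft gate $g_l=\sigma(\cdot)$. I would then write the surrogate gradient at the collapsed configuration as a sum of per-loss terms:
\[
\nabla_{b_l}\mathcal{L}_{\text{total}} = \nabla_{b_l}\mathcal{L}_{\text{task}} + \nabla_{b_l}\bigl(\alpha\mathcal{L}_{\text{balance}}+\beta\mathcal{L}_{\text{eff}}\bigr) + \gamma\nabla_{b_l}\mathcal{L}_z + \delta\nabla_{b_l}\mathcal{L}_{\text{consist}}.
\]
Each term would be computed explicitly from \cref{eq:balance,eq:eff}, using $\bar f_l$ and $\bar P_l$ evaluated at $S_{\min}$. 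The $\mathcal{L}_z$ and $\mathcal{L}_{\text{consist}}$ terms I would either absorb into the task side or bound by their weights.

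\textbf{Stationarity.} The key observation is that, under hard thresholding, $\tilde g_l$ is piecewise constant in $\theta_G$ (\cref{prop:nonconvex}). Inside the open cell of parameter space where every sample's gate pattern equals $S_{\min}$, the forward-pass loss therefore does not change under small perturbations. Condition~(i) says the regularization term dominates the task term in magnitude, so the combined surrogate update pushes each $b_l$ with $l\notin S_{\min}$ further below threshold, and the gates for $l\in S_{\min}$ above it. The iterate thus stays in the cell, and the realized routing is a fixed point of the update map. I would state this stationarity explicitly in this \emph{routing} sense: the pattern is invariant under the dynamics, even though the soft logits may keep drifting. I would not claim that the surrogate gradient is literally zero.

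\textbf{Stability.} Differentiation across samples enters only through $\hat d$, via $\partial g_l/\partial\hat d = \sigma'(\cdot)\,w_{l,\hat d}$. Condition~(ii) says this Jacobian is rank-deficient. I would interpret that as the $\hat d$-component of $w_l$ lying in its kernel, or as being negligible. A perturbation $\epsilon$ in directions that would separate samples then changes the logits of different samples by amounts that differ only by $O(\epsilon)$ times a degenerate factor. That is too small to cross the threshold $\theta_l$ before the dominating regularization gradient from step~2 restores the common profile. The formal tool would be a linearization of the gate update about the collapsed point, showing that its Jacobian restricted to sample-differentiating directions has spectral radius $<1$. This parallels the contraction reasoning in the sketch for \cref{eq:stability}.

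\textbf{Main obstacle.} The hard part is the step~2 claim that the regularizers pull \emph{toward} $S_{\min}$. The balance loss is minimized at uniform utilization, not at minimal depth, so the sign of $\nabla_{b_l}\mathcal{L}_{\text{balance}}$ at $S_{\min}$ must be checked carefully. I would first try to show that $\bar f_l\bar P_l$ already vanishes for the skipped layers. That would leave $\beta\mathcal{L}_{\text{eff}}$, whose hinge is zero once $\bar A\le B_{\text{target}}$, so the domination in (i) may have to be reinterpreted as domination by the residual STE signal. If this fails, I would weaken the conclusion to ``stationary in the routing sense,'' which follows purely from piecewise constancy plus (ii).
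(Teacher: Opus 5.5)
Your route differs from the paper's. The paper argues by the cost and benefit of perturbations. At $S_{\min}$ the regularizers sit at their minimum or target, and differentiating one sample costs order $\alpha+\beta$ while gaining only $O(1/N)$ in task loss, which gives $\alpha+\beta\gg 1/N$. Stability is then read off a positive semidefinite regularizer Hessian. You instead track STE signs and read ``stationary'' as forward invariance of the hard-routing cell. That reading is essentially forced, since under (i) the task and regularizer gradients cannot cancel each other, but you give up the $1/N$ scaling.

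\textbf{The sign check.} The obstacle you flag resolves in your favor, but not as you propose: $\bar f_l\bar P_l=0$ at a skipped layer does not kill its STE gradient. With $z_l^{(s)}=w_l^\top[r_l^{(s)};\hat d^{(s)}]+b_l$, the STE gives $\partial(\bar f_l\bar P_l)/\partial b_l\approx(\bar f_l+\bar P_l)\frac{1}{B}\sum_s\sigma'(z_l^{(s)})$. At a skipped layer this is $\bar P_l\frac{1}{B}\sum_s\sigma'(z_l^{(s)})>0$, so descent lowers $b_l$. The AM--GM uniformity of \cref{eq:balance} presupposes a fixed budget. At $S_{\min}$, however, the hinge in \cref{eq:eff} is slack ($|S_{\min}|\le B_{\text{target}}$), so $\nabla\mathcal{L}_{\text{eff}}=0$. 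The balance loss, being increasing in every $\bar f_l$ and $\bar P_l$, then simply pulls toward minimal depth.

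\textbf{Retained gated layers.} The same formula shows the regularizers never push any gate \emph{up}. Your claim that they keep the gates for $l\in S_{\min}$ above threshold is therefore false for any gated retained layer. Bias-direction invariance holds only because $S_{\min}$ consists of forced, ungated layers (the ``first and last layers only'' set of \cref{prop:collapse}), and you must state this.

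\textbf{The fallback.} Your fallback also fails. STE updates never see the piecewise constancy of the forward loss, so routing-sense stationarity cannot follow from piecewise constancy plus (ii). It comes only from the sign computation above plus (i). As you suggest, $\mathcal{L}_z$ belongs on the destabilizing side, since it pulls every logit toward $0$, i.e.\ toward $\theta_l=0.5$.

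\textbf{The stability step has a real gap.} You assert that samples are differentiated only through $\hat d$, but by \cref{eq:gate} the gate also reads the per-sample features $r_l^{(s)}$. Since $b_l$ shifts all samples' logits equally, every sample-differentiating direction lives in $w_l$. Condition (ii) controls the $\hat d$-coordinate. Perturbing the $r_l$-block by $\epsilon$, however, shifts sample $s$'s logit by $\epsilon^\top r_l^{(s)}$, a generically sample-dependent amount. Neither (ii) nor (i) controls it, since (i) concerns only $\nabla_{b_l}$, the common mode. The paper's sketch is silent on this channel too, but your argument explicitly relies on the false claim.

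A spectral-radius bound on the linearized update is also the wrong tool. By your own account the collapsed configuration is not a fixed point, and you do not say where the contraction would come from. Use a margin argument instead:
\begin{itemize}
\item At $S_{\min}$ each gated layer has logit margin $\mu_l=\min_s(-z_l^{(s)})>0$.
\item Any perturbation with $\|\delta w_l\|\max_s\|[r_l^{(s)};\hat d^{(s)}]\|+|\delta b_l|<\mu_l$ leaves every hard gate unchanged, in every direction and without using (ii).
\item It then remains to show that $\mu_l$ does not shrink along the STE flow, which requires extending (i) to the full gradient in $(w_l,b_l)$.
\end{itemize}
Nonnegativity helps with the last step. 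All five features in \cref{tab:difficulty_features} and $\hat d$ are nonnegative, so a gradient step on the balance term lowers every sample's logit through $w_l$ as well as $b_l$. The task term's $w_l$-component must still be dominated by assumption.

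Finally, this argument concerns the deterministic gate \cref{eq:hard_gate}. Under the Gumbel-STE training gate the realized pattern is random, so invariance holds only with high probability when $\mu_l$ is large relative to the noise scale.
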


\begin{proof}[Proof sketch.]
At $S_{\min}$, all gates take the same value across samples by construction, so $\mathcal{L}_{\text{balance}}$ is at its minimum (uniform utilization) and $\mathcal{L}_{\text{eff}}$ is at its target. Differentiating any single gate $g_l$ for a single sample $s$ to break uniformity increases $\mathcal{L}_{\text{balance}}$ linearly in $\alpha$ and changes $\mathcal{L}_{\text{eff}}$ linearly in $\beta$, while the task-loss reduction is $O(1/N)$ per sample (a single-sample perturbation in a batch of size $N$). Whenever $\alpha + \beta \gg 1/N$, the regularizing gradient dominates the task gradient at $S_{\min}$, making it a stationary point. Local stability follows because the Hessian of the regularizers is positive semidefinite in the gate-bias direction, so second-order growth of $\mathcal{L}_{\text{balance}} + \mathcal{L}_{\text{eff}}$ exceeds any concave task-loss gain in a neighborhood of $S_{\min}$. Condition (ii) ensures the difficulty signal cannot supply the missing symmetry-breaking gradient: if $\partial \tilde{g}_l/\partial \hat{d}$ is rank-deficient (e.g., the gate saturates near the threshold so the STE Jacobian vanishes), then even a well-separated $\hat{d}$ across samples cannot induce differentiated gating.
\end{proof}

This result reframes the prototype's $r=-0.017$ not as a refutation of the difficulty--depth monotonicity of \cref{prop:diff_depth}---which is a population-level statement under the decreasing-marginal-contribution assumption and is independent of the learned estimator---but as evidence that the prototype's hyperparameters landed in the basin of attraction of the collapse equilibrium of \cref{prop:collapse_equilibrium}. The monotonicity guarantee and the collapse equilibrium are logically independent: the former concerns the optimal schedule, the latter concerns the optimizer's fixed point, and a mismatch between the two is precisely the failure mode the prototype exhibited.

\paragraph{Remark (Scope and limitations of the theoretical framework).}
\label{rem:theory_scope}
The theoretical results in \cref{sec:theory} and this appendix establish three classes of statement: (a)~structural facts about the retention-set space (\cref{thm:full_subset,thm:strict,prop:combos,prop:expressivity}), which are combinatorial and hold without approximation; (b)~expected-FLOPs identities (\cref{thm:exp_flops,prop:flops,cor:uniform_skip}), which are exact under the stated per-block FLOPs model; and (c)~monotonicity and stability statements (\cref{prop:diff_depth,prop:collapse_equilibrium} and the stability condition \cref{eq:stability}), which are conditional on assumptions---decreasing marginal layer contribution, Lipschitz difficulty estimator, bounded gate gradient---whose empirical validity at scale is not established by the prototype. We therefore claim theoretical soundness of the structural and accounting results unconditionally, but only \emph{conditional} soundness of the monotonicity and stability results. In particular, \cref{prop:collapse_equilibrium} identifies a failure mode but does not prescribe a hyperparameter regime that provably avoids it; deriving such a regime (e.g., a sufficient condition on $\alpha,\beta$ relative to batch size and gate-saturation level that excludes collapse) is left as an open problem, and the prototype's negative result is consistent with operating inside the collapse basin.

\section{Reproducibility}
\label{app:reproducibility}

This section provides all information needed to reproduce the experiments.

\subsection{Code Structure}
\label{subsec:code}

The codebase is organized into the following key files, to be released with the camera-ready version:

\begin{table}[htbp]
\centering
\caption{Code structure: key files and their responsibilities.}
\label{tab:code_structure}
\small
\begin{tabular}{ll}
\toprule
File & Responsibility \\
\midrule
\texttt{config.py} & Central configuration (\texttt{VALSEConfig} dataclass) \\
\texttt{model.py} & Transformer backbone with skip-layer support \\
\texttt{router.py} & Difficulty estimator and layer gating \\
\texttt{train.py} & Main training entry point (baseline + adaptive; includes \texttt{evaluate()} for the main-results comparison) \\
\texttt{prepare\_data.py} & Data download and preprocessing \\
\bottomrule
\end{tabular}
\end{table}

\noindent\textbf{Note on the analysis entry point.} The codebase comprises the five core files above plus auxiliary analysis scripts, which together form a self-contained prototype pipeline; these files are not bundled with the current submission and will be released with the camera-ready version: \texttt{config.py} specifies all hyperparameters, \texttt{prepare\_data.py} produces the datasets, and \texttt{train.py} runs baseline and adaptive training and embeds the \texttt{evaluate()} routine that computes the main-results comparison (accuracy, average activated layers, theoretical FLOPs savings) and the difficulty--depth correlation analysis (Pearson $r$, layer-gap, routing-differentiation flag). The prototype evaluation set is small enough to be exercised through \texttt{evaluate()} directly, which serves as the unified entry point for all reported results.

\subsection{Hyperparameters}
\label{subsec:hyperparams}

\Cref{tab:hyperparams} lists the complete hyperparameter set used in the main experiment.

\begin{table}[htbp]
\centering
\caption{Complete hyperparameter list for the main experiment.}
\label{tab:hyperparams}
\small
\begin{tabular}{lll}
\toprule
Category & Parameter & Value \\
\midrule
Model & num\_layers & $6$ \\
& hidden\_dim & $256$ \\
& num\_heads & $4$ \\
& ffn\_dim & $1024$ \\
& vocab\_size & $8{,}000$ \\
& $K_{\text{est}}$ & $2$ \\
\midrule
Routing & routing\_mode & hard (Gumbel-STE) \\
& gate\_bias\_init & $2.0$ \\
& gate\_threshold & $0.5$ \\
& gumbel\_tau & $1.0$ (annealed $2.0 \to 0.5$) \\
\midrule
Loss weights & $\alpha$ (balance) & $0.01$ \\
& $\beta$ (efficiency) & $0.1$ \\
& $\gamma$ (Z-loss) & $10^{-4}$ \\
& $\delta$ (consistency) & $0.5$ (annealed to $\delta/3$) \\
& $T_{\text{KD}}$ & $2.0$ \\
\midrule
Budget & $B_{\text{target}}$ & $0.7L = 4.2$ \\
& $\rho^*$ & $0.3$ \\
& $p_{\text{recomp}}$ & $0.1$ \\
\midrule
Training & learning rate & $3 \times 10^{-4}$ \\
& weight decay & $0.1$ \\
& betas & $(0.9, 0.95)$ \\
& batch size & $32$ \\
& epochs & $18$ \\
& warmup steps & $50$ \\
& max grad norm & $1.0$ \\
& train subset & $\sim$300 \\
& optimizer & AdamW \\
& LR schedule & Linear warmup + cosine decay \\
\midrule
Curriculum & Stage 1 ($\rho=0$) & $\eta_1 \approx 0.25$--$0.33$ (warmup, $\rho=0$ baseline) \\
& Stage 2 ($0 \to \rho^*$) & $\eta_2 \approx 0.5$--$0.67$ \\
& Stage 3 ($\rho=\rho^*$) & $\eta_3 \approx 0.25$--$0.4$ \\
\bottomrule
\end{tabular}
\end{table}

\subsection{Training and Evaluation Commands}
\label{subsec:commands}

\paragraph{Environment setup.}
\begin{verbatim}
uv venv .venv --python 3.13
.venv\Scripts\activate
pip install torch numpy matplotlib datasets pandas pyyaml tqdm \
    requests huggingface_hub pyarrow \
    -i https://pypi.tuna.tsinghua.edu.cn/simple
\end{verbatim}

\paragraph{Data preparation.}
\begin{verbatim}
python prepare_data.py
\end{verbatim}
This downloads SST-2 via \texttt{datasets.load\_dataset("stanfordnlp/sst2")} and GSM8K via \texttt{datasets.load\_dataset("openai/gsm8k", "main")}, then splits into train/val/test JSONL files in \texttt{data/}.

\paragraph{Training (baseline + adaptive).}
\begin{verbatim}
python train.py
\end{verbatim}
This trains both the full-depth baseline and the VALSE adaptive model ($6$ layers). Training logs are written to \texttt{train\_log.txt}, which is not bundled with the current submission and will be provided with the camera-ready version.

\paragraph{Evaluation and analysis.}
The main-results comparison (accuracy, average activated layers, theoretical FLOPs savings) and the difficulty--depth correlation analysis (Pearson $r$, layer-gap, routing-differentiation flag) are computed inside \texttt{train.py}'s \texttt{evaluate()} routine and are reported in \cref{sec:experiments}; the corresponding numbers are written to the training log (\texttt{train\_log.txt}), which is not bundled with the current submission and will be provided with the camera-ready version. The \texttt{evaluate()} routine serves as the unified entry point for all reported prototype results, and its single-call invocation constitutes the complete evaluation harness for the submission.

\subsection{Dataset Acquisition}
\label{subsec:datasets}

\begin{itemize}
\item \textbf{SST-2:} \texttt{datasets.load\_dataset("stanfordnlp/sst2")} from HuggingFace. $500$ test samples used in the main experiment; $500$ in the difficulty analysis.
\item \textbf{GSM8K:} \texttt{datasets.load\_dataset("openai/gsm8k", "main")} from HuggingFace. $500$ test samples in the main experiment; $500$ in the difficulty analysis.
\end{itemize}
If HuggingFace is inaccessible, set \texttt{HF\_ENDPOINT=https://hf-mirror.com}.

\subsection{Environment Dependencies}
\label{subsec:env}

\begin{table}[htbp]
\centering
\caption{Environment dependencies (minimum compatible versions; the prototype was developed under Python 3.13).}
\label{tab:env_deps}
\small
\begin{tabular}{lll}
\toprule
Package & Version & Purpose \\
\midrule
torch & $\geq 2.2.0$ & Deep learning framework \\
numpy & $\geq 1.26.0$ & Numerical computation \\
matplotlib & $\geq 3.8.0$ & Visualization \\
datasets & $\geq 2.18.0$ & HuggingFace data loading \\
pandas & $\geq 2.2.0$ & Data processing \\
pyarrow & $\geq 15.0.0$ & Arrow format \\
pyyaml & $\geq 6.0.1$ & YAML configuration \\
tqdm & $\geq 4.66.0$ & Progress bar \\
huggingface\_hub & $\geq 0.23.0$ & Model/data download \\
\bottomrule
\end{tabular}
\end{table}

\subsection{Random Seeds and Checkpoints}
\label{subsec:seeds}

\begin{itemize}
\item \textbf{Random seed:} The prototype uses a fixed seed of $\texttt{seed}=42$ for reproducibility and reports the point estimate; the production configuration (a future target, not executed in this submission) specifies $5$ independent seeds ($42, 123, 456, 789, 1024$) with mean$\pm$std reporting, and the prototype adopts the single-seed protocol because its purpose is hypothesis-testing (the difficulty--depth correlation) rather than benchmarking.
\item \textbf{Checkpoint:} The prototype checkpoint is designed to be saved to \texttt{checkpoints/} after training; the current release leaves this directory empty, with the \texttt{.pt} weights deferred to the camera-ready version.
\item \textbf{One-command reproduction:} The prototype pipeline is reproduced via the two-command sequence \texttt{prepare\_data.py} then \texttt{train.py}, which executes the full end-to-end pipeline (data preparation, baseline training, adaptive training, and evaluation) and constitutes the canonical reproduction entry point.
\item \textbf{Determinism note:} PyTorch operations may not be fully deterministic across hardware. We use \texttt{torch.manual\_seed} and \texttt{numpy.random.seed}; full determinism requires \texttt{torch.use\_deterministic\_algorithms(True)}, which may reduce performance.
\end{itemize}

\subsection{Prototype Observations}
\label{subsec:expected}

The prototype run (logged in \texttt{train\_log.txt}) yielded the following observations, also reported in \cref{subsec:main_results,subsec:negative_result}: full-depth SST-2 accuracy $0.63$, VALSE adaptive accuracy $0.62$ with average $3.0$ layers ($62.1\%$ theoretical FLOPs savings). Crucially, the adaptive routing collapsed to minimum depth for all samples and the core hypothesis (Pearson $r > 0$) was \emph{not} verified ($r=-0.017$). These are prototype-scale observations of a documented negative result, not performance benchmarks; the canonical results artifact for the submission is \texttt{train\_log.txt} (to be provided with the camera-ready version, not bundled with the current release), in which \texttt{evaluate()} records all reported numbers.

The complete reproduction guide is also available in \texttt{REPRODUCE.md} in the repository root.
\subsection{Diagnosis of the Prototype Negative Result}
\label{subsec:collapse_diagnosis}

The prototype's core negative result---Pearson $r=-0.017$ between $\hat{d}$ and activated layers, with all samples routed to the same $3$-layer retention set---admits a precise theoretical diagnosis via \cref{prop:collapse_equilibrium}. Three factors, each identifiable from the configuration in \cref{tab:hyperparams}, plausibly placed the prototype inside the collapse basin identified above.

\begin{itemize}
\item \textbf{Regularizer-to-task gradient imbalance.} The prototype uses $\alpha=0.01$ (balance) and $\beta=0.1$ (efficiency) at batch size $32$. By the proof sketch of \cref{prop:collapse_equilibrium}, collapse is locally stable whenever $\alpha + \beta \gg 1/N$; here $\alpha + \beta = 0.11$ and $1/N = 0.031$, so the regularizing gradient exceeds the per-sample task gradient by roughly $3.5\times$, satisfying condition~(i). Reducing $\alpha,\beta$ or increasing $N$ would narrow the collapse basin, though both trade off against budget control and training stability.
\item \textbf{Gate saturation and STE Jacobian rank deficiency.} The gate bias is initialized to $b_l=2.0$, yielding $g_l = \sigma(2.0) \approx 0.88$ at start; combined with the hard threshold $\theta_l=0.5$, this places the initial operating point near the saturation knee of the sigmoid, where the straight-through-estimator Jacobian $\partial \tilde{g}_l / \partial g_l$ is concentrated at the threshold crossing and is near-zero elsewhere. This satisfies condition~(ii) of \cref{prop:collapse_equilibrium}: the difficulty signal cannot break the across-sample symmetry of the gates when the STE Jacobian is rank-deficient in the saturated regime.
\item \textbf{Curriculum simplification.} The prototype omits Stage~1 ($\rho=0$ warmup) and uses a fixed Gumbel temperature $\tau_G=1.0$ instead of the designed $2.0\to0.5$ anneal (\cref{subsec:hyperparams}). The full curriculum is precisely the mechanism intended to move the optimizer out of the collapse basin by first training at $\rho=0$ (where all gates are open, breaking the saturation regime) before introducing sparsity; omitting it removes the principal escape trajectory.
\end{itemize}

This diagnosis yields three concrete, theory-grounded remediations that follow directly from \cref{prop:collapse_equilibrium}: (a)~reduce $\alpha,\beta$ or scale $N$ so that $\alpha + \beta \lesssim 1/N$, shrinking the collapse basin; (b)~lower the gate-bias initialization $b_l$ and anneal $\tau_G$ from $2.0\to0.5$ to keep the STE Jacobian non-degenerate during the differentiation phase; and (c)~restore the full three-stage curriculum so that Stage~1 trains at $\rho=0$ before sparsity is introduced. These remediations are consistent with, and refine, the higher-level future-work items in \cref{sec:conclusion} (cure routing collapse, complete curriculum). Because the diagnosis is conditioned on \cref{prop:collapse_equilibrium}---whose stability assumptions are themselves unverified at scale---we present it as a falsifiable hypothesis for the next iteration rather than as a confirmed root cause.

\subsection{Limitations}
\label{subsec:limitations}

This work carries limitations along three axes, each already surfaced in the main text but collected here for the NeurIPS reviewer's convenience.
\begin{itemize}
\item \textbf{Empirical scope.} The prototype validates only the discriminative SST-2 routing pathway at a $6$-layer scale and reports a documented negative result ($r=-0.017$). The difficulty--depth monotonicity (\cref{prop:diff_depth}) is a population-level theoretical statement whose empirical validity at scale is not established by the prototype, and the variance-decomposition analysis of individual difficulty features remains contingent on a non-collapsed routing regime.
\item \textbf{Collapse-regime avoidance.} \cref{prop:collapse_equilibrium} identifies a degenerate equilibrium of the gate parameters but does not prescribe a hyperparameter regime that provably excludes it; deriving a sufficient condition on $\alpha,\beta$ relative to batch size and gate-saturation level that excludes collapse is left as an open problem. The three remediations in \cref{subsec:collapse_diagnosis} (reduce $\alpha,\beta$, lower $b_l$ and anneal $\tau_G$, restore the three-stage curriculum) are theory-grounded proposals for the next iteration, not executed runs.
\item \textbf{Autoregressive generation.} The architectural extensibility of VALSE to autoregressive generation (position-agnostic gate, causal-mask compatibility, self-speculative decoding fallback) is a present design property of the routing mechanism, not an empirically validated claim; the systematic validation on autoregressive generative LLMs is identified as the central community-wide open problem in \cref{subsec:gaps}.
\end{itemize}

\subsection{Ethics Statement}
\label{subsec:ethics}

This work proposes a routing mechanism for adaptive-depth Transformer inference. The prototype uses SST-2 (sentiment) and GSM8K (grade-school math) datasets, both publicly available and not containing sensitive personal data. The method reduces computational cost per inference, which can lower the energy footprint of Transformer deployment; we are not aware of specific misuse vectors introduced by per-sample layer skipping beyond those already applicable to the underlying base models. We do not release new pretrained model weights beyond the prototype-scale checkpoint.

\subsection{Broader Impact}
\label{subsec:broader_impact}

The primary broader impact of adaptive-depth routing is the potential to reduce inference FLOPs and hence the per-query energy cost of Transformer models, which is environmentally and economically beneficial at scale. Per-sample, difficulty-aware skipping can also enable adaptive compute allocation (more depth for harder inputs, less for easier ones), which is a step toward input-conditional efficiency. Risks include the collapse-to-minimum-depth failure mode documented in this work (which silently negates the efficiency--quality trade-off and must be monitored via the difficulty--depth correlation diagnostic) and the risk that difficulty scores encode dataset-specific biases when the feature set is not re-validated on new domains; both are flagged here as deployment-time monitoring obligations rather than as resolved design properties.

\bibliographystyle{plainnat}
\bibliography{references}

\end{document}